%% file: geometry_relaxed.tex
\documentclass[11pt,a4paper]{article}

\usepackage[margin=1in]{geometry}

\usepackage[utf8]{inputenc} 
\usepackage[T1]{fontenc}    
\usepackage[numbers]{natbib} 
\usepackage{hyperref}       
\usepackage{url}            
\usepackage{booktabs}       
\usepackage{amsfonts}       
\usepackage{nicefrac}       
\usepackage{microtype}      
\usepackage{xcolor}         
\usepackage{amsmath,amssymb,amsthm} 
\usepackage{graphicx}       
\usepackage{algorithm}      
\usepackage{algorithmicx}
\usepackage{algpseudocode}
\usepackage{titletoc}       

\usepackage[utf8]{inputenc} 
\usepackage[T1]{fontenc}    
\usepackage{hyperref}       
\usepackage{url}            
\usepackage{booktabs}       
\usepackage{amsfonts}       
\usepackage{nicefrac}       
\usepackage{microtype}      
\usepackage{xcolor}         
\usepackage{bbm}            
\usepackage{amsmath}        
\usepackage{tikz}           
\usepackage{subcaption}     
\usepackage{booktabs}
\usepackage{titletoc}
\usepackage{multirow}
\usepackage{amsthm}         
\usepackage[capitalize,nameinlink]{cleveref} 
\usepackage{thmtools}       
\usepackage{thm-restate}
\usepackage{algorithm}
\usepackage{adjustbox}
\usepackage{enumitem}
\RequirePackage{algorithm}
\RequirePackage{algpseudocode}
\usepackage{soul}
\usepackage{comment}

\usepackage{cleveref} 

\usepackage{authblk}

\theoremstyle{plain}
\declaretheorem{theorem}

\declaretheorem[name=Lemma]{lemma}
\declaretheorem[name=Corollary]{corollary}

\declaretheorem[name=Theorem,numbered=no]{theorem*}
\theoremstyle{definition}
\declaretheorem[name=Definition]{definition}
\declaretheorem[name=Assumption]{assumption}
\theoremstyle{remark}
\declaretheorem[sibling=theorem, numberwithin=section]{remark}

\newcommand{\R}{\mathbb{R}}
\newcommand{\E}{\mathbb{E}}

\newcommand{\Law}{\mathcal{L}}

\newcommand{\dd}{\,\mathrm{d}}
\newcommand{\OT}{\,\mathrm{OT}}
\newcommand{\TV}{\,\mathrm{TV}}

\newcommand{\muA}{\mu_{\eta}}           
\newcommand{\muU}{\mu_{\eta,\Delta}}   
\newcommand{\zu}{z}
\newcommand{\fcost}{c_{\mathcal{D}}}  

\newcommand{\vincent}[1]{{\color{orange}#1}}

\title{Geometry of Relaxed Fair Regression: A Unified Framework for Aware and Unaware Settings}

\author[1, 2, 4]{Marie Generali Lince\thanks{\texttt{marie.generali@ensae.fr}}}
\author[2]{Vincent Divol \thanks{\texttt{vincent.divol@ensae.fr}}}
\author[3]{Rémi Flamary \thanks{\texttt{remi.flamary@polytechnique.edu}}}
\author[3]{Solenne Gaucher \thanks{\texttt{solenne.gaucher@polytechnique.edu}}}
\author[4]{Patrick Loiseau \thanks{\texttt{patrick.loiseau@inria.fr}}}

\affil[1]{Inria, Soda team }
\affil[2]{CREST, ENSAE, IP Paris}
\affil[3]{CMAP, CNRS, Ecole Polytechnique, Institut Polytechnique de Paris}
\affil[4]{Inria, Fairplay joint team}

\begin{document}

\maketitle

\begin{abstract}
Fairness-accuracy trade-offs are a central concern in the deployment of fairness-aware machine learning methods. 
When sensitive attributes are unavailable at inference time–the so called unawareness setting, principled methods for obtaining accurate predictions under relaxed fairness constraints are largely missing. 
In this work, we address this gap by formulating regression under a demographic parity penalty as an optimal transport problem. 
Our framework unifies both the \emph{aware} and \emph{unaware} settings and characterizes optimal prediction functions via optimal transport maps, under both squared Wasserstein-2 and Total Variation penalties. 
These results reveal that the choice of penalty reflects fundamentally different fairness philosophies: the Wasserstein penalty induces a smooth, population-wide compromise, while Total Variation enforces exact parity for a subset of individuals. 
Building on these theoretical characterizations, we propose an algorithm that is simple to implement, computationally efficient, and consistently matches or outperforms state-of-the-art baselines on real-world benchmarks.

  \end{abstract}

\section{Introduction}


As machine learning  models are increasingly deployed in high-stakes domains, mitigating algorithmic bias has become a critical objective. 
A significant portion of the fairness literature addresses group fairness, which involves imposing constraints relative to a sensitive attribute. 
Among these, Demographic Parity (DP) has received considerable attention due to its simplicity and interpretability, aiming to ensure that predictions are statistically independent of the sensitive attribute. Mitigation strategies typically intervene on the training data (\emph{pre-processing}), the learning algorithm (\emph{in-processing}), or the model's predictions (\emph{post-processing}) \citep{barocas-hardt-narayanan,dwork2011fairnessawareness}. 
Post-processing methods are particularly attractive as they allow for the retrofitting of any black-box predictor without requiring retraining, making them highly applicable in real-world settings where the base model is already deployed. 
In this work, we focus on post-processing for regression tasks under the DP constraint.

The strategy for enforcing DP depends on data availability at inference, yielding two distinct paradigms: the \emph{aware} setting, where the sensitive attribute is observed alongside standard features, and the \emph{unaware} setting, where it is unavailable at inference time. 
A classical heuristic to circumvent this constraint 
is to explicitly estimate the sensitive attribute using a proxy classifier (i.e., a plug-in estimator) and apply group-aware corrections, but this inevitably propagates estimation errors. Furthermore, enforcing \emph{exact} DP in either setting is often too rigid, causing severe degradation in predictive utility. This limitation motivates the study of \emph{relaxed} fairness, which allows for controlled violations to optimize the accuracy-fairness trade-off.

\paragraph{Related works.} Empirical Risk Minimization (ERM) under fairness penalties or relaxed fairness constraints is common within the \emph{in-processing} paradigm. 
Numerous methods directly incorporate fairness penalties or bounded constraints into the learning objective during model training \citep{agarwal2019fairregressionquantitativedefinitions,Berk2017ACF,pmlr-v80-hashimoto18a, fairness_without_demographics_lahoti,li2019kerneldependenceregularizersgaussian,zafar2019fairness,cotter_optim_nondiffconstraints}. 
More recently, optimal transport (OT) has been leveraged within this paradigm to regularize latent representations and enforce relaxed fairness even when sensitive attributes are unobserved \citep{leteno2025fairtextclassificationtransferable}.  
These approaches inherit a core limitation of in-processing methods: they presuppose direct access to, and control over, the algorithm's training procedure. In many real-world settings, however, the algorithm is a black box or otherwise unavailable, and post hoc correction of its predictions becomes necessary. Then, achieving relaxed fairness through \emph{post-processing} requires shifting from gradient-based latent optimization to geometric transformations of the final predictions, avoiding pitfalls of arbitrary relaxations \citep{lohaus_too_relaxed_to_be_fair}.

OT provides a natural geometric framework for enforcing DP through post-processing approaches. 
In the \emph{aware} setting, exact fairness is achieved by mapping group marginal distributions of the Bayes predictor to their Wasserstein barycenter \citep{chzhen2020fairregressionwassersteinbarycenters, Silvia_Ray_Tom_Aldo_Heinrich_John_2020, gouic2020projectionfairnessstatisticallearning, gaucher2022fairlearningwassersteinbarycenters, denis2023fairnessguaranteemulticlassclassification, Berk2017ACF,li2019kerneldependenceregularizersgaussian}, and closed-form continuous relaxations exist to control the trade-off \citep{chzhen2022minimaxframeworkquantifyingriskfairness}. 
In the \emph{unaware} setting, where demographics are unobserved at test time, achieving parity often relies on noisy proxies or robust optimization techniques \citep{chen2019fairness, awasthi2020equalized}.
Acting directly in the continuous space, recent work has theoretically characterized the optimal regression function under exact DP using mappings over 2D measures \citep{divol2024demographicparityregressionclassification}. 
However, extending continuous geometric relaxations to the unaware setting remains an open challenge. 
State-of-the-art methods for relaxed unaware fairness, such as FairReg \citep{Fairreg}, currently bypass exact continuous geometry relying on grid-based discretization and stochastic optimization.
This highlights a significant gap in the literature: while exact unaware fairness admits an OT formulation, its relaxed counterpart is solved via grid-based approximations.

\paragraph{Outline and contributions.} 
In this paper, we bridge this gap by establishing exact pointwise closed-form reductions
for relaxed fairness across both \emph{aware} and \emph{unaware} settings, focusing on the two-groups case. 
While restrictive, the two-groups assumption is common in the literature
\citep{NEURIPS2018_8e038477} and captures the practically important scenario in
which one group corresponds to a protected class, while already revealing key
phenomena. Our main contributions are as follows:

\vspace{-2mm}
\begin{itemize}[leftmargin=15pt]
   \setlength\itemsep{0.3em}
   \setlength{\parskip}{0pt}
   \setlength{\topsep}{-5pt}
    \item \textbf{A unified framework for relaxed fair regression:} In \cref{subsec:framework_OT_relaxed}, we formulate
    a general, penalized OT objective that unifies relaxed fair
    regression across both aware and unaware settings, accommodating different
    fairness relaxation metrics ($\mathcal W_2$ and TV).
    \item \textbf{Closed-form solutions:} In \cref{subsec:New_results},we provide the first exact, continuous geometric reductions for relaxed unaware fairness under both $\mathcal W_2$ and TV penalties, 
    We provide closed-form expressions of the optimal almost-fair prediction functions based on the solution of an OT problem.
    \item \textbf{Practical implications:} In \cref{subsec:philosophies_relaxed_fairness}, we make explicit the trade-offs inherent in the choice of fairness penalty and contrast their practical implications. We show that the relaxation penalty is not just a numerical convenience, but reflects fundamentally different approaches to fairness.
    \item \textbf{Implementation and experiments:} In \cref{sec:algorithm}, we demonstrate how these closed-form solutions can be efficiently estimated from finite samples to build continuous, out-of-sample fair predictors. In \cref{sec:experiments}, our theory-driven approach is empirically validated on the Law School and Communities and Crime datasets. The code is provided in the supplement and will be made public upon publication.
  \end{itemize}
\vspace{-3mm}
\paragraph{Notation.}
Let $\mathcal{P}(\Omega)$ be the set of probability measures on $\Omega$, and $\Law(\cdot)$ the law of a random variable. 
For $\mu \in \mathcal{P}(\mathbb{R})$, denote its cumulative distribution function (c.d.f.) 
 by $F_\mu$.
For a measurable map $T$, $T_{\#} \mu$ is the pushforward measure. 
The set of couplings with marginals $\mu$ and $\nu$ is denoted $\Pi(\mu, \nu)$. 
The OT cost between $\mu$ and $\nu$ under a  cost $c$ is defined by $\OT_c(\mu, \nu)\!=\! \inf_{\pi \in \Pi(\mu, \nu)} \int c(x,y) \dd\pi(x,y)$. $\mathcal{W}_2^2(\mu, \nu)$ is the squared Wasserstein-2 distance, i.e., $\OT_{c_2}(\mu, \nu)$ with cost $c_2(x,y) \!=\! (x-y)^2$, 
and $\TV(\mu, \nu)$ the Total Variation distance, i.e. $\OT_{c_{0}}(\mu, \nu)$ with cost $c_{0}(x,y) \!= \!\mathbbm{1}_{x \neq y}$  \cite[Proposition 4.7]{dobrushin4_7}. 
Finally, for $a\in\mathbb{R}$, $a_+ \!=\! \max(a,0)$ and $a_- \!=\! - \min(a, 0)$.

\section{Unified framework for aware and unaware relaxed fair regression}
\label{sec:unified_framework}

We consider a regression problem where individuals are characterized by a non-sensitive feature $ X$ taking values in some set $ \mathcal{X} \subset \mathbb{R}^d$, a binary sensitive attribute $S \in \{+, -\}$, and a continuous target variable $Y \in \mathbb{R}$ with $\mathbb{E}[Y^2]<\infty$. 
The law of $X$ in group $S = +$ is denoted $\chi^+ = \mathcal{L}(X \vert S = +)$, and the law of $X$ in group $S = -$ is denoted $\chi^- = \mathcal{L}(X \vert S = -)$. 
Slightly abusing notations, let us denote the conditional expectation of $Y$ given $(X,S)$ as $\eta(x,s) = \mathbb{E}[Y \mid X=x, S=s]$, and the conditional expectation of $Y$ given $X$ as $\eta(x) = \mathbb{E}[Y \mid X=x]$. 
Finally, we define $p^+ = \mathbb{P}(S = +)$ and $p^- = \mathbb{P}(S = -)$ the probabilities of the two groups. \\

\noindent
In regression problems, the goal is to predict a continuous outcome $Y \in \mathbb{R}$ from either the pair $(X,S)$ (in the awareness setting) or from $X$ (in the unawareness setting).    
Then, under mean square error, the Bayes optimal predictor is given by the conditional expectation $\eta$. 
Unfortunately, its predictions often reflect systemic biases toward $S$ \citep{barocas-hardt-narayanan}. 
Among the criteria characterizing fair prediction, DP is satisfied by a prediction function $f$ if its predictions are statistically independent of $S$: 
($f(X,S) \perp\!\!\!\perp S$) in the awareness setting, and ($f(X) \perp\!\!\!\perp S$) in the unawareness setting. 
Equivalently, DP holds if $\Law(f(X,S) \mid S=+) = \Law(f(X,S) \mid S=-)$ (resp. $\Law(f(X) \mid S=+) = \Law(f(X) \mid S=-)$), meaning that the conditional distributions of predictions are identical across demographic groups.

\subsection{Existing results for exact fairness}\label{subsec:existing_results}


\paragraph{Exact fairness under awareness.} 
In the \emph{aware} setting, the sensitive attribute $S$ is observed with $X$ at prediction time. 
In this setting, \citet{chzhen2020fairregressionwassersteinbarycenters} show that finding the optimal fair predictor is equivalent to solving the following Wasserstein barycenter problem.  For $s \in \{+,-\}$, let $\muA^s\!=\!\Law(\eta(X,S)\vert S=s)$. 
When the distributions $\muA^+$ and $\muA^-$ are non atomic, 
they show that the distribution of the optimal fair prediction $\nu^*$ corresponds to the Wasserstein barycenter
\vspace{-0.1cm}
\begin{equation}
    \nu^* = \arg\min_{\nu \in \mathcal{P}(\mathbb{R})} \left\{ p^+ \mathcal{W}_2^2(\muA^+, \nu) + p^- \mathcal{W}_2^2(\muA^-, \nu) \right\}. \label{eq:OTAExact}
\end{equation}
Since the distributions $\muA^+$ and $\muA^-$ are one-dimensional, the problem admits a closed-form OT map in terms of quantiles. 
The optimal fair prediction $\smash{f^*(x, s)}$ is obtained by applying this map (transporting from $\smash{\muA^s}$ to the barycenter $\smash{\nu^*}$) to 
$\smash{\eta(x,s)}$. 
We detail the closed-form solution in \cref{app:aware_corollaries}.

\paragraph{Exact fairness under unawareness.} 
In the \emph{unaware} setting, $S$ is unobserved at inference time. 
As established by \citet{divol2024demographicparityregressionclassification}, the optimal fair prediction can only infer $S$ implicitly and bases its predictions on the signed group probability $\Delta$ given by
\begin{equation}
    \Delta(x) = \frac{\mathbb{P}(S=+ \mid X=x)}{p^+} - \frac{\mathbb{P}(S=- \mid X=x)}{p^-}.
    \label{eq:theory_delta}
\end{equation}
Let us denote $\left(\left(\chi^+\! -\! \chi^-\right)_+\!,\! \left(\chi^+ \!-\! \chi^-\right)_-\right)$ the normalized Jordan decomposition of the signed measure $\smash{\chi^+\! - \!\chi^-}$, where the normalization constant is chosen so that $\left(\chi^+ - \chi^-\right)_+$ and $\smash{\left(\chi^+ \!-\! \chi^-\right)_-}$ are probability measures \footnote{ See \citet{divol2024demographicparityregressionclassification}, Section 3.2.1 for a construction of these measures.}.
Then, the fair regression problem reduces to an OT problem between the probability measures $\muU^+$ and $\muU^-$ on $\mathbb{R}\times \mathbb{R}$, where 
\vspace{-0.1cm}
\begin{equation*}
  \muU^+ = \left(\eta(\cdot), \Delta(\cdot)\right)_{\#} \left(\chi^+ - \chi^-\right)_+ \qquad \text{and}\qquad \muU^- = \left(\eta(\cdot), \Delta(\cdot)\right)_{\#} \left(\chi^+ - \chi^-\right)_-.
\end{equation*} 
Note that, by construction, the probability measure $\muU^+$ (resp. $\muU^-$) only gives weight to the half space $\{(h,d) \in \mathbb{R}^2: d >0\}$ (resp. the half space $\{(h,d) \in \mathbb{R}^2: d <0\}$): indeed, for any measurable set $\mathcal{E}$ such that $\Delta(x) <0$ for all $x \in \mathcal{E}$, it must be that $\left(\chi^+ - \chi^-\right)_+(\mathcal{E})=0$. 

\noindent
Reducing the fair regression problem to an OT problem requires the following assumption.
\begin{assumption}\label{ass_non_atomic}
The measures $\mu^+_{\eta,\Delta}$ and $\mu^-_{\eta,\Delta}$ give zero mass to graphs of functions in the sense that for any
measurable function $F: \mathbb{R}^*\rightarrow \mathbb{R}$ and $s\in \{+,-\}$, $\mu^s_{\eta,\Delta}\left(\left\{\left(F(d), d\right): d \neq 0\right\}\right)  = 0.$
\end{assumption}

\noindent
Under \cref{ass_non_atomic}, \citet{divol2024demographicparityregressionclassification} shows that finding the optimal fair predictor reduces to solving the following barycenter problem 
\vspace{-0.1cm}
\begin{equation}\label{eq_Pinfinity}
  \inf_{\nu \in \mathcal{P}(\mathbb{R})} \left\{ \OT_{c_{u}}(\muU^+, \nu) + \OT_{c_{u}}(\muU^-, \nu) \right\},
  \tag{$P_\infty$}
\end{equation}
with cost $c_{u}(\zu, y)= \tfrac{\left\vert h - y\right\vert^2}{\vert d\vert}$ specific to the unaware geometry, where $\zu = (h, d) \in \mathbb{R} \times \mathbb{R}$. 
Note that this is a two-to-one dimensional OT problem, which in
general does not admit a closed-form solution but can be solved using numerical methods
\citep{tanguy2024computing}.

\paragraph{Reducing awareness to unawareness.} \label{remark:unifying_aware_unaware} 
As noted by \citet{divol2024demographicparityregressionclassification}, the results for the awareness setting can be recovered from those in the unawareness setting by assuming that the sensitive attribute $S$ is included in the features $X$, or equivalently that $S$ is $X$-measurable. In this case, 
$\Delta(x) \!=\! \frac{1}{p^+}$ if $S\! =\! +$, and $\Delta(x)\!=\!- \frac{1}{p^-}$ if $S\!=\! -$. 
Then, \cref{ass_non_atomic} reduces to assuming non-atomicity of the measures $\mu_{\eta}^s$, a condition that is necessary for the results of \citet{chzhen2020fairregressionwassersteinbarycenters} to hold. 
Similarly, the transport cost simplifies as  $c((\eta(x), \Delta(x)), y) = p^+(\eta(x)\!-\! y)^2$ if $S = +$, and $c((\eta(x), \Delta(x)), y) = p^-(\eta(x) - y)^2$ if $S = -$. 
Consequently, the OT problem defined in the unawareness setting (\cref{eq_Pinfinity}) reduces, in the awareness setting, to the problem given in \cref{eq:OTAExact}. 
For this reason, we focus in the remainder of the paper on the more general unawareness setting, noting that the awareness case follows as a direct special case of our results.

\vspace{0.3em}

\begin{remark}[Noisy attributes] Our formulation also covers the case where a proxy $\smash{\hat{S}}$ is available at inference time. Indeed, including $\hat{S}$ as a covariate in the feature vector $X$, we obtain $\Delta(x, \hat{s})=\tfrac{\mathbb{P}(S=+ \mid X=x, \hat{S}=\hat{s})}{p^+} - \tfrac{\mathbb{P}(S=- \mid X=x, \hat{S}=\hat{s})}{p^-}$, thus interpolating between the two settings. When $\hat{S}$ is either $X$-measurable or contains uninformative noise, $\smash{\mathbb{P}(S \vert X, \hat{S}) = \mathbb{P}(S \vert X)}$, recovering the unaware formulation. Conversely, when $\smash{\hat{S} = S}$, we recover  the aware setting \citep{chzhen2022minimaxframeworkquantifyingriskfairness}. 
\end{remark}

\subsection{Optimal Transport for relaxed fairness: a unified framework}
\label{subsec:framework_OT_relaxed}

 In many practical settings, enforcing strict DP can significantly degrade predictive accuracy, sometimes to an unacceptable extent. In this section, we present our general framework for relaxed fair regression. As established in \cref{subsec:existing_results}, DP is satisfied if and only if the pushforwards of the positive and negative variations of the signed measure $\chi^+\! -\! \chi^-$ are equal. To achieve relaxed fairness, we penalize the risk of a prediction function by its $\mathcal D$-unfairness defined below.

 \begin{definition}[$\mathcal{D}$-unfairness]
   \label{def:generalized_unfairness}
 Let $\mathcal{D}$ be a probability discrepancy measure over $\mathcal{P}(\mathbb{R})$. 
 The $\mathcal{D}$-unfairness of a prediction function $f$ is defined as 
 \vspace{-0.1cm}
   \begin{equation*}
       \mathcal{U}_{\mathcal{D}}(f) =\mathcal{D}\left(f_{\#}(\chi^+ - \chi^-)_+, \; f_{\#}(\chi^+ - \chi^-)_-\right).
   \end{equation*}
 \end{definition}

 \noindent
The choice of unfairness measure $\mathcal U_\mathcal{D}$ enables us to cast the relaxed regression problem as an optimal transport problem. 
In particular, it upper bounds the $\mathcal{D}$ distance between the distributions of the predictions: $\mathcal{U}_{\mathcal{D}}(f) \geq \mathcal{D}\left(\Law(f(X) \vert S=+), \Law(f(X) \vert S=-)\right)$. 
Although this bound is generally not tight, under awareness those terms become equal (see Appendix \ref{app:penalty}).

\noindent
 We consider two choices for the discrepancy measure $\mathcal{D}$: the squared Wasserstein-2 distance ($\mathcal U_{\mathcal W_2^2}$), which evaluates the quadratic displacement required to align the distributions, and the Total Variation distance ($\mathcal{U}_{TV}$), which evaluates the maximal probability mass of individuals receiving disparate predictions. Both penalties can be expressed as $ \mathcal{D}(\nu^+, \nu^-) = \OT_{\fcost}(\nu^+, \nu^-)$ for some cost function $\fcost$, equal to $c_2(y,y')=(y - y')^2$ for $\mathcal W_2^2$ and to $c_0(y,y')=\mathbbm{1}_{\{y \neq y'\}}$ for TV. 

 \noindent
 Our objective is to find the almost-fair prediction function $f^*_{\lambda, \mathcal{D}}$ which solves the following penalized risk minimization problem:
 \vspace{-0.2cm}
 \begin{equation}
   f_{\lambda, \mathcal{D}}^* \in \arg\min_{f} \; \mathbb{E}\left[ \left(f(X) - \eta(X)\right)^2 \right]  + \lambda m \mathcal{U}_{\mathcal{D}}(f)
   \label{eq:erm_penalized}
 \end{equation}
where $m=\TV(\chi^+, \chi^-)/2$ is a normalization constant. 
Driving $\lambda \!\rightarrow\! \infty$ enforces $\mathcal{U}_{\mathcal{D}}(f)\!=\! 0$, which is equivalent to enforcing exact DP. 

\paragraph{Relaxed fair regression as an OT problem.}
To solve \eqref{eq:erm_penalized}, we adopt an OT perspective. 
Under \cref{ass_non_atomic}, we show that when $\mathcal{D}$ is taken to be either  
$\mathcal W_2^2$ or  
TV, the value of \eqref{eq:erm_penalized} coincides with that of the following OT problem:
\begin{equation}\label{eq:P-lambda-general}
 \inf_{\nu^+, \nu^- \in \mathcal{P}(\mathbb{R})} \left\{ \OT_{c_u}(\muU^+, \nu^+) + \OT_{c_u}(\muU^-, \nu^-) + \lambda \OT_{\fcost}(\nu^+, \nu^-) \right\} \tag{$P_{\lambda, \mathcal D}$}
\end{equation}
where the cost $\fcost$ is such that $\mathcal D(\nu^+, \nu^-) = \OT_{\fcost}(\nu^+, \nu^-)$.

\begin{restatable}{theorem}{lemReducOT} \label{lem:reduc_OT}
Let $\mathcal{D}(\cdot, \cdot) = \OT_{\fcost}(\cdot, \cdot)$ for $\fcost(y,y') = (y-y')^2$ or $\fcost(y,y') = \mathbbm{1}_{\{y\neq y'\}}$. 
Under \cref{ass_non_atomic}, Problem $\eqref{eq:P-lambda-general}$ is solved by deterministic transport maps $T^+_{\lambda,\mathcal{D}}$ and $T^-_{\lambda,\mathcal{D}}$ which push $\muU^+$ and $\muU^-$ to the minimizing measures $\nu^+$ and $\nu^-$, respectively. Moreover, the solution of the relaxed fair regression problem \eqref{eq:erm_penalized} is given by
\begin{align}\label{eq:sol_from_transport_maps}
    f_{\lambda, \mathcal{D}}^*(x) = 
    \begin{cases}
        T^+_{\lambda,\mathcal{D}}(\eta(x),\Delta(x)) \quad \text{if} \quad \Delta(x) > 0, \\
        T^-_{\lambda,\mathcal{D}}(\eta(x),\Delta(x)) \quad \text{if} \quad \Delta(x) < 0,\\
        \eta(x) \quad \text{else} .
    \end{cases}
\end{align}
\end{restatable}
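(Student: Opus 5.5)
The plan follows the reduction of \citet{divol2024demographicparityregressionclassification} for \eqref{eq_Pinfinity}, adapted to two free target measures coupled by the penalty $\lambda\OT_{\fcost}(\nu^+,\nu^-)$. Write $\rho^\pm=(\chi^+-\chi^-)_\pm$ and $m=\TV(\chi^+,\chi^-)/2$, so that $\chi^+-\chi^-=m(\rho^+-\rho^-)$ and $\frac{d\rho^\pm}{dP_X}(x)=\Delta(x)_\pm/m$. We have $p^+\chi^++p^-\chi^-=P_X$, and $\Delta$ is the density of $\chi^+-\chi^-$ with respect to $P_X$.

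\textbf{Step 1 (risk rewriting).} For any $f$, split the risk according to the sign of $\Delta$:
\[
\E[(f(X)-\eta(X))^2]=\int_{\{\Delta=0\}}(f-\eta)^2\dd P_X+m\sum_{s=\pm}\int\frac{(f(x)-\eta(x))^2}{|\Delta(x)|}\dd\rho^s(x).
\]
On $\{\Delta=0\}$ neither $\rho^+$ nor $\rho^-$ charges the set, so $\mathcal U_{\mathcal D}(f)$ does not depend on $f$ there. The optimum is therefore $f=\eta$ on $\{\Delta=0\}$, which is the third case of \eqref{eq:sol_from_transport_maps}.

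The whole objective \eqref{eq:erm_penalized} then equals $m$ times
\[
\sum_s\int c_u\bigl((\eta,\Delta)(x),f(x)\bigr)\dd\rho^s+\lambda\,\OT_{\fcost}(f_\#\rho^+,f_\#\rho^-).
\]
Setting $\nu^s=f_\#\rho^s$, the plan $(\eta,\Delta,f)_\#\rho^s$ lies in $\Pi(\muU^s,\nu^s)$. This gives the lower bound value\eqref{eq:erm_penalized} $\ge m\cdot$value\eqref{eq:P-lambda-general}.

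\textbf{Step 2 (deterministic maps attain \eqref{eq:P-lambda-general}).} First I would show existence of minimizers $(\nu^+,\nu^-)$ by a standard argument.
\begin{itemize}
\item Tightness comes from the quadratic growth of $c_u$ in $y$, since $|d|$ is bounded by $\max(1/p^+,1/p^-)$.
\item Lower semicontinuity of $\OT$ costs under weak convergence: $c_u$ and $c_2$ are continuous and nonnegative, and $c_0$ is lsc.
\end{itemize}
Then fix optimal $\nu^\pm$. Each term $\OT_{c_u}(\muU^s,\nu^s)$ is a semi-discrete-type 2-to-1 OT problem. The cost $c_u(z,y)=(h-y)^2/|d|$ satisfies a twist condition: $\partial_y c_u=-2(h-y)/|d|$ is injective in $y$. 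More relevantly for the target dimension, the set of $z$ at which two distinct $y$ are optimal lies on graphs of functions of $d$. Following \citet{divol2024demographicparityregressionclassification}, \cref{ass_non_atomic} then forces the optimal plan to be induced by a map $T^s$. I would rely on their argument, which only uses the fixed target $\nu^s$ and not the barycentric structure.

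\textbf{Step 3 (upper bound and conclusion).} Define $f$ by \eqref{eq:sol_from_transport_maps}. Since $\rho^+$ and $\rho^-$ are supported on the disjoint sets $\{\Delta>0\}$ and $\{\Delta<0\}$, this is a well-defined measurable function of $x$. It satisfies $f_\#\rho^s=T^s_\#\muU^s=\nu^s$ and $\int c_u\dd(\eta,\Delta,f)_\#\rho^s=\OT_{c_u}(\muU^s,\nu^s)$. So its objective equals $m\cdot$value\eqref{eq:P-lambda-general}. Combined with Step 1, this shows that $f$ is optimal, and that the two problems agree up to the factor $m$.

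\textbf{Main obstacle.} I expect the obstacle to be Step 2: existence of the optimal pair, especially for the discontinuous TV cost (lsc suffices), and the proof that $c_u$-optimal plans are deterministic under \cref{ass_non_atomic}. For the latter I would first try the $c$-cyclical monotonicity argument. If $(h,d)$ is sent to $y_1<y_2$, monotonicity in $h$ for fixed $d$ gives a threshold structure. The splitting points then form a graph $h=F(d)$, which has null mass by assumption.
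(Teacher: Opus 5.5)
Your Steps 1 and 3 are exactly the paper's Lemma~\ref{lem:equiv_erm_ot}. Your Step 2, however, takes a genuinely different route.

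\textbf{The paper's route.} The paper never fixes optimal targets. It glues the three couplings to rewrite \eqref{eq:P-lambda-general} as a single OT problem between $\muU^+$ and $\muU^-$ with pointwise cost $C_{\lambda,\mathcal D}$ (Lemma~\ref{lem:general_reduction}). It then obtains $(\nu^+,\nu^-)$ from an optimal plan $\pi^*$ together with a measurable selection $(y_1^*,y_2^*)$ of pointwise minimizers. Finally, separately for $c_2$ and for $c_0$, it computes $(y_1^*,y_2^*)$ in closed form. It uses the Kantorovich potential of the $C_{\lambda,\mathcal D}$-problem to show that $y_1^*(z_1,z_2)$ depends only on $z_1$. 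That potential is differentiable in $h$ almost everywhere by Lemma 3 of \citet{divol2024demographicparityregressionclassification}, and the TV case needs the extra splitting $\psi=\min(\tilde\psi,\inf\phi+\lambda)$.

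\textbf{Your route.} You get existence by the direct method, and your tightness bound is correct, since $|d|\le\max(1/p^+,1/p^-)$ and $\int h^2\,d\muU^s<\infty$. You then observe that determinism is a property of the fixed-target problem $\OT_{c_u}(\muU^s,\nu^s)$ alone. This is sound: for fixed $d$, any $c_u$-concave potential makes $h\mapsto\psi(h,d)-h^2/|d|$ concave, so the same semiconcavity-in-$h$ argument applies.

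\textbf{What each buys.} Your argument is shorter and shows that every minimizing pair is attained by maps. It also never uses the form of $\fcost$ in the determinism step, so it would extend Theorem~\ref{lem:reduc_OT} to any real-valued, lower semicontinuous, bounded-below penalty cost. What it does not produce are the reduced costs and closed-form maps of Propositions~\ref{prop:relaxed_reduction} and~\ref{prop:tv_relaxed_reduction}. Those are the main payoff of the paper's route and the basis of the algorithm.

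\textbf{One correction to your cyclical-monotonicity fallback.} A slice $\{d=d_0\}$ can contain countably many splitting points, not just one, since each owns a disjoint open interval $(y_1,y_2)$. So the exceptional set is a countable union of graphs $h=F_k(d)$, not a single graph. You can get this, for example, from Lusin--Novikov applied to the splitting set of the closed support of the plan. That set is Borel because $c_u$ is continuous on $\{d\neq0\}\times\mathbb R$. Assumption~\ref{ass_non_atomic} plus countable additivity then still makes the exceptional set null.
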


\noindent
\cref{lem:reduc_OT} shows that the relaxed fair regression problem reduces to solving Problem \eqref{eq:P-lambda-general} and finding an OT map.  
The corresponding problems arising in the unawareness and awareness settings are illustrated in \cref{fig:ot_schema}. In the awareness setting, the solution amounts to constructing transport maps between one-dimensional distributions. 
In the unawareness setting, by contrast, one must transport two-dimensional measures $\smash{\muU^{\pm}}$ to a one dimensional prediction, where each point $(h, d)$ in the support of $\smash{\muU^{\pm}}$ encodes the Bayes prediction $\smash{\eta}$ and the signed group probability $\smash{\Delta}$.

\begin{figure}[h!]
  \centering
  \begin{subfigure}[t]{0.48\textwidth} 
      \centering
      {\normalsize Awareness}
      \includegraphics[width=\textwidth]{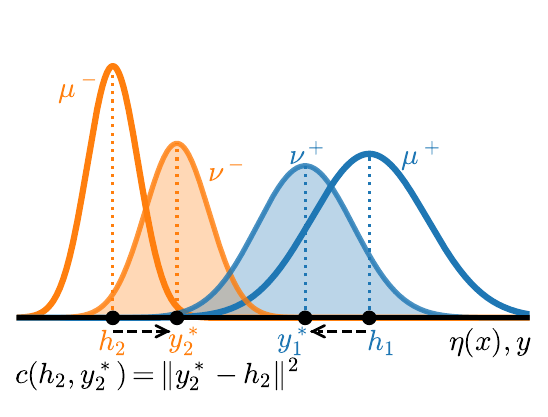}
  \end{subfigure}
  \hfill
  \begin{subfigure}[t]{0.48\textwidth} 
      \centering
      {\normalsize Unawareness}
      \includegraphics[width=\textwidth]{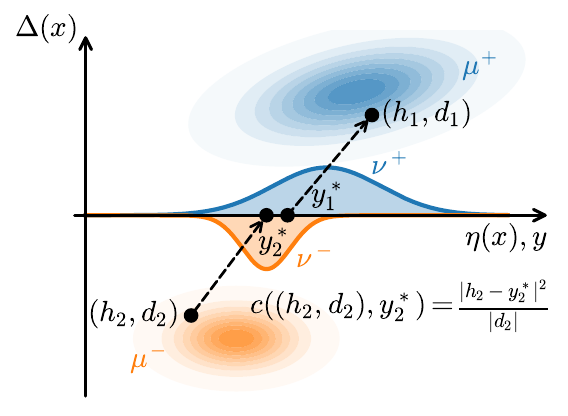}
  \end{subfigure}
  \vspace{-3mm}
  \caption{\textbf{Geometry of the OT maps.} 
  \emph{Left (Aware):} 1D-to-1D transport mapping group-conditional distributions to almost-fair predictions. 
  \emph{Right (Unaware):} 2D-to-1D transport mapping the joint distributions of $(\eta(X), \Delta(X))$ under $\muU^+$ and $\muU^-$ to almost-fair predictions.}
\label{fig:ot_schema}
\vspace{-0.2cm}
\end{figure}

\noindent
In \cref{tab:contribution_grid}, we detail how the general framework in \eqref{eq:P-lambda-general}
instantiates for both the aware and unaware settings and fairness measure
considered. Interestingly, it leads to three novel relaxed fair regression
formulations, and recovers one existing formulation \textbf{OT-A
$\mathbf{\mathcal W_2}$} that was already proposed in
\citep{chzhen2022minimaxframeworkquantifyingriskfairness}.


 \begin{table}[t!]
  \centering
  \renewcommand{\arraystretch}{1.7}
  \resizebox{\textwidth}{!}{
  \begin{tabular}{@{}l p{6.2cm} p{6.2cm} @{}}
      \toprule
      & \textbf{Unaware Setting} & \textbf{Aware Setting} \\
       \toprule
       \textbf{Objective} & $\OT_{c_u}(\muU^+, \nu^+)+\OT_{c_u}(\muU^-, \nu^-)\newline+\lambda \OT_{\fcost}(\nu^+, \nu^-)$ & $p^+\OT_{c_2}(\muA^+, \nu^+)+p^-\OT_{c_2}(\muA^-, \nu^-) \newline +\lambda \OT_{\fcost}(\nu^+, \nu^-)$\\
      \midrule
      \textbf{$\mathbf{\mathcal W_2^2}$-fairness} \newline 
      & \textbf{OT-U $\mathbf{\mathcal W_2}$} (\cref{prop:relaxed_reduction})\newline 
      \textbf{Cost:} $c_u(z,y) = (h-y)^2 / \vert d \vert$ \newline
      \textbf{Penalty:} $\fcost(y, y') = (y-y')^2$
      & \textbf{OT-A $\mathbf{\mathcal W_2}$} (\citep{chzhen2022minimaxframeworkquantifyingriskfairness}, \cref{cor:aware_w2})\newline 
      \textbf{Cost: }$c_2(y,y') = (y-y')^2$ \newline
      \textbf{Penalty: }$\fcost(y, y') = (y-y')^2$\\
      \midrule
      \textbf{$\mathbf{TV}$-fairness} 
      & \textbf{OT-U TV} (\cref{prop:tv_relaxed_reduction})\newline 
      \textbf{Cost: }$c_u(z,y) = (h-y)^2 / \vert d \vert $ \newline
      \textbf{Penalty: }$\fcost(y, y') = \mathbbm{1}_{\{y\neq y'\}}$
      & \textbf{OT-A TV} (\cref{cor:aware_tv})\newline 
        \textbf{Cost: }$c_2(y,y') = (y-y')^2$ \newline
        \textbf{Penalty: }$\fcost(y, y') = \mathbbm{1}_{\{y \neq y'\}}$ \\
      \bottomrule
  \end{tabular}}
  \vspace{0.3em}
  \caption{Summary of the unified OT framework. The general problem $P_{\lambda, \mathcal{D}}$ instantiates into specific pointwise closed-form reductions depending on error cost $c \in \{c_u, c_2\}$ and geometric penalty $\fcost$.}
  \label{tab:contribution_grid}
\end{table}

\subsection{New results for relaxed fairness}\label{subsec:New_results}
\cref{lem:reduc_OT} establishes that the solution to the fair relaxation problem is given by the OT maps solving \eqref{eq:P-lambda-general}. In this section, we show that \eqref{eq:P-lambda-general} can itself be reduced to an OT problem between
$\smash{\muU^+}$ and $\smash{\muU^-}$ with an appropriately chosen cost. 
Specializing to the $\smash{\mathcal{W}_2^2}$ and the TV-fairness penalty, we obtain solutions to \eqref{eq:erm_penalized} expressed explicitly in terms of the OT maps. Closed-form solutions in the awareness setting follow as a corollary.

\paragraph{Relaxation under $\mathcal{W}_2^2$-fairness.} 
We consider the fairness penalty $\mathcal{W}_2^2(\cdot, \cdot)\! = \!\OT_{c_2}(\cdot, \cdot)$ with $c_2(y, y') \!=\! (y - y')^2$. 
The following proposition reduces \hyperref[eq:P-lambda-general]{$(P_{\lambda, \mathcal{W}_2})$} to an OT problem between $\muU^+$ and $\muU^-$ with a suitable cost, and gives an expression for the optimal almost-fair prediction via the OT plan. 
The proof is in \cref{app:unaware_proof_relaxed_W2}.

\begin{restatable}[\textbf{OT-U $\mathbf{\mathcal W_2}$} ]{proposition}{proprelaxedWtwo}
  \label{prop:relaxed_reduction}
  Under \cref{ass_non_atomic}, Problem \hyperref[eq:P-lambda-general]{$(P_{\lambda, \mathcal{W}_2})$} is equivalent to the OT problem 
  \vspace{-0.2cm}  
  \begin{equation}\label{eqref:OT_W2_relaxed}
   \pi^* \in \arg\min_{\pi \in \Pi(\muU^+, \muU^-)} \int_{\mathbb{R}^2 \times \mathbb{R}^2} C_{\lambda, c_2}^u(\zu_1, \zu_2)\dd\pi(\zu_1,\zu_2),
    \end{equation}
    where $C_{\lambda, c_2}^u(\zu_1, \zu_2) = \frac{\lambda}{1+\lambda(\vert d_1 \vert +\vert d_2 \vert)}(h_1-h_2)^2$ for $\zu_1=(h_1, d_1)$ and $\zu_2=(h_2, d_2)$. 
    Moreover, a solution to the almost-fair regression problem in \eqref{eq:erm_penalized} is given by \cref{eq:sol_from_transport_maps} with 
  \begin{equation}\label{eq:y-star}
        T^+_{\lambda,\mathcal{W}_2^2}(\zu_1) = \frac{(1+\lambda \vert d_2 \vert)h_1+\lambda \vert d_1 \vert h_2}{1+\lambda(\vert d_1 \vert+\vert d_2 \vert)} \quad \text{and} \quad
        T^-_{\lambda,\mathcal{W}_2^2}(\zu_2) = \frac{\lambda \vert d_2 \vert h_1+(1+\lambda \vert d_1 \vert)h_2}{1+\lambda(\vert d_1 \vert+\vert d_2 \vert)}
    \end{equation}
    for all pairs $(\zu_1,\zu_2)$ in the support of $\pi^*$.
  \end{restatable}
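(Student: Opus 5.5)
The plan is to collapse the three-marginal problem \hyperref[eq:P-lambda-general]{$(P_{\lambda, \mathcal{W}_2})$} onto a single coupling between $\muU^+$ and $\muU^-$ by a ``gluing and pointwise minimization'' argument. The central object is the function
\[
 G(\zu_1,\zu_2)=\inf_{y_1,y_2\in\R}\Big\{\tfrac{(h_1-y_1)^2}{|d_1|}+\tfrac{(h_2-y_2)^2}{|d_2|}+\lambda (y_1-y_2)^2\Big\}.
\]
This is a strictly convex quadratic in $(y_1,y_2)$ because $d_1>0$ and $d_2<0$ hold $\muU^\pm$-a.s. The first-order conditions read
\[
 (y_1-h_1)/|d_1| + \lambda(y_1-y_2)=0, \qquad (y_2-h_2)/|d_2|-\lambda(y_1-y_2)=0 .
\]
Solving this $2\times 2$ linear system gives exactly the expressions $T^\pm_{\lambda,\mathcal W_2^2}$ in \eqref{eq:y-star}. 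Substituting back, I would show by routine algebra that $G=C^u_{\lambda,c_2}$. A quick sanity check: $y_1-y_2=(h_1-h_2)/(1+\lambda(|d_1|+|d_2|))$, so each of the three terms is a multiple of $(h_1-h_2)^2/(1+\lambda(|d_1|+|d_2|))^2$, and the multiples sum to $\lambda(1+\lambda(|d_1|+|d_2|))$.

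\textbf{Lower bound.} Fix admissible $\nu^+,\nu^-$. Take optimal plans $\gamma^+\in\Pi(\muU^+,\nu^+)$ and $\gamma^-\in\Pi(\muU^-,\nu^-)$ and $\rho\in\Pi(\nu^+,\nu^-)$; these exist by lower semicontinuity of the nonnegative costs. Glue them along $\nu^\pm$, using the gluing lemma twice, into a measure on $(\zu_1,y_1,y_2,\zu_2)$. Its $(\zu_1,\zu_2)$-marginal $\pi$ lies in $\Pi(\muU^+,\muU^-)$. The objective of \hyperref[eq:P-lambda-general]{$(P_{\lambda, \mathcal{W}_2})$} equals the integral of the bracket above against the glued measure, which is at least $\int G\,\dd\pi \ge \inf_{\pi}\int C^u_{\lambda,c_2}\dd\pi$.

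\textbf{Upper bound.} For any $\pi\in\Pi(\muU^+,\muU^-)$, set $\nu^\pm$ to be the pushforwards of $\pi$ under the measurable maps $(\zu_1,\zu_2)\mapsto y_{1,2}^*(\zu_1,\zu_2)$ given by \eqref{eq:y-star}. The induced plans show that the three OT costs are bounded by the corresponding integrals, so the objective is at most $\int C^u_{\lambda,c_2}\dd\pi$. Combining the two bounds gives equality of the values. It also shows that, from an optimal $\pi^*$ (which exists since the cost is continuous and nonnegative), the minimizing $\nu^\pm$ are obtained as pushforwards of $\pi^*$ through $y^*_1,y^*_2$. Moreover the first-marginal plan is supported on the graph of $\zu_1\mapsto y_1^*(\zu_1,\zu_2)$ for $(\zu_1,\zu_2)\in\operatorname{supp}\pi^*$.

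\textbf{Main obstacle.} The hardest step is checking that this yields \emph{deterministic} maps, as \cref{lem:reduc_OT} and \eqref{eq:sol_from_transport_maps} require. For that I need $\pi^*$ to be induced by a map $\zu_1\mapsto \zu_2$ (and conversely), so that $T^+(\zu_1)$ is well defined. \cref{lem:reduc_OT} already asserts that the problem is solved by deterministic maps, so I would invoke it. Alternatively, I would argue that $C^u_{\lambda,c_2}$ satisfies a twist-type condition in $h$: by $c$-cyclical monotonicity the optimal plan is monotone in $h$ along each fibre of fixed $d$, and \cref{ass_non_atomic} (no mass on graphs $h=F(d)$) rules out splitting of mass. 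Finally, \eqref{eq:sol_from_transport_maps} follows by composing $T^\pm$ with $(\eta,\Delta)$, as in \cref{lem:reduc_OT}.
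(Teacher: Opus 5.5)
Your computation of $C^u_{\lambda,c_2}$ and of $y_1^*,y_2^*$ is correct, and so is your gluing/push-forward argument for the equality of values; both follow the paper's route (its general reduction lemma plus the first-order computation). The gap is the step you flag as the main obstacle: proving that $y_1^*(\zu_1,\zu_2)$ does not depend on $\zu_2$ on the support of an arbitrary optimal $\pi^*$, so that $T^+_{\lambda,\mathcal W_2^2}$ is well defined.

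Invoking \cref{lem:reduc_OT} is circular. The paper derives the deterministic-map claim of \cref{lem:reduc_OT} for $c_{\mathcal D}=c_2$ \emph{from} this proposition. Even if granted, knowing that \emph{some} minimizers of $(P_{\lambda,\mathcal W_2})$ are realized by maps says nothing about $y_1^*$ on the support of a given $\pi^*$.

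Your alternative aims at a target that is false in general: $\pi^*$ need not be induced by a map $\zu_1\mapsto\zu_2$. The twist condition fails. Indeed, $\nabla_{\zu_1}C^u_{\lambda,c_2}(\zu_1,\zu_2)=(2\lambda u,-\lambda^2u^2)$ depends on $\zu_2$ only through $u=(h_1-h_2)/(1+\lambda(|d_1|+|d_2|))$, so it is constant along lines in the $(h_2,d_2)$-plane. Optimal plans are also non-unique. Once optimal maps $T^\pm$ and an optimal $c_2$-coupling $\gamma$ of $(\nu^+,\nu^-)$ are known, every $\pi\in\Pi(\muU^+,\muU^-)$ whose image under $(\zu_1,\zu_2)\mapsto(T^+(\zu_1),T^-(\zu_2))$ is $\gamma$ is optimal. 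Conditionally independent couplings along level sets of $T^\pm$ are such plans, and they split mass in the typical unaware situation; the paper explicitly calls $\pi^*$ non-deterministic. So \cref{ass_non_atomic} cannot rule out splitting. Monotonicity along fibres $\{d=\mathrm{const}\}$, which are typically $\muU^+$-null, gives no a.e.\ conclusion on its own.

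The missing idea is that only $u$ matters. You have $y_1^*=h_1-\lambda|d_1|u$ and $u=\frac{1}{2\lambda}\partial_{h_1}C^u_{\lambda,c_2}(\zu_1,\zu_2)$, so it suffices to show that $u$ is a function of $\zu_1$ on the support of $\pi^*$. The paper does this with Kantorovich duality.
\begin{enumerate}
\item Take a $C^u_{\lambda,c_2}$-concave potential $\psi(\zu_1)=\inf_{\zu_2}\{\phi(\zu_2)+C^u_{\lambda,c_2}(\zu_1,\zu_2)\}$, with $\pi^*$ concentrated on $\Gamma=\{\psi(\zu_1)-\phi(\zu_2)=C^u_{\lambda,c_2}(\zu_1,\zu_2)\}$.
\item The quadratic coefficient $\lambda/(1+\lambda(|d_1|+|d_2|))$ is at most $\lambda$, so $h_1\mapsto\psi(h_1,d_1)-\lambda h_1^2$ is concave for each $d_1$.
\item Hence $\psi$ is differentiable in $h_1$ off a set with countable $d_1$-sections, and \cref{ass_non_atomic} makes this set $\muU^+$-null (the argument of Lemma 3 in \citet{divol2024demographicparityregressionclassification}).
\item At such a $\zu_1$, for every $\zu_2$ with $(\zu_1,\zu_2)\in\Gamma$, the function $\zu\mapsto\psi(\zu)-C^u_{\lambda,c_2}(\zu,\zu_2)$ is maximized at $\zu_1$. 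Therefore $\partial_{h_1}\psi(\zu_1)=2\lambda u$, and $y_1^*=h_1-\frac{|d_1|}{2}\partial_{h_1}\psi(\zu_1)$ depends on $\zu_1$ alone; the same holds symmetrically for $y_2^*$.
\end{enumerate}

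An equivalent repair is available within your own framework. The plan $(\zu_1,y_1^*(\zu_1,\zu_2))_\#\pi^*$ is $c_u$-optimal between $\muU^+$ and $\nu^{+*}$. The cost $c_u$ \emph{is} twisted in $h$, since $y\mapsto\partial_h c_u(\zu,y)=2(h-y)/|d|$ is injective. The same semiconcavity argument plus \cref{ass_non_atomic} then forces that plan to be induced by a map.

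This is where \cref{ass_non_atomic} is actually used. Without it, the ``moreover'' part of the proposition is unproved.
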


  \noindent
\cref{prop:relaxed_reduction} provides a roadmap for building almost-fair
classifiers: first, solve the OT problem \eqref{eqref:OT_W2_relaxed}, then
compute the prediction function via the OT plan using
\cref{eq:sol_from_transport_maps} and \cref{eq:y-star}. Note that while 
\eqref{eqref:OT_W2_relaxed} is solved by a (non-deterministic) transport plan,
we show in \cref{app:unaware_proof_relaxed_W2} that  
the value defining $T^+_{\lambda,\mathcal{W}_2^2}(z_1)$  does not depend on the choice of $z_2$ with $(z_1,z_2)$ in the support of $\pi^*$, so that $T^+_{\lambda,\mathcal{W}_2^2}$ is a well-defined map (and similarly for $\smash{T^-_{\lambda,\mathcal{W}_2^2}}$). 
As noted in
\cref{app:aware_corollaries} and \cref{rem:lambda_to_infty}, this formulation
encompasses prior results as boundary cases. Specifically, in the strict
fairness limit ($\lambda\! \to\! \infty$), \cref{prop:relaxed_reduction} recovers,
in the awareness framework, the fair regression function of
\citet{chzhen2020fairregressionwassersteinbarycenters}, and in the unawareness
framework, the OT map of
\citet{divol2024demographicparityregressionclassification}. For generic
$\lambda>0$, the awareness framework recovers the geodesic interpolations of
\citet{chzhen2022minimaxframeworkquantifyingriskfairness}, while the unawareness
framework yields a novel characterization of the optimal predictor under a
relaxed fairness constraint.

\paragraph{Relaxation under TV-fairness} We now consider the fairness penalty $\TV(\cdot, \cdot) = \OT_{c_0}(\cdot, \cdot)$ for $c_0(y, y') = \mathbbm{1}_{\{y \neq y'\}}$. The following proposition reduces \hyperref[eq:P-lambda-general]{$(P_{\lambda, \TV})$} to an OT problem between $\muU^+$ and $\muU^-$ with a suitable cost, and gives an expression for the optimal almost-fair prediction via the OT plan. The proof is in \cref{app:unaware_proof_relaxed_TV}.

\begin{restatable}[\textbf{OT-U TV}]{proposition}{proprelaxedTV}
  \label{prop:tv_relaxed_reduction}
  Under \cref{ass_non_atomic}, Problem \hyperref[eq:P-lambda-general]{$(P_{\lambda, \TV})$} is equivalent to the OT problem 
  \begin{equation}\label{eq:OT_TV_relaxed}
      \pi^* \in \arg\min_{\pi \in \Pi(\muU^+, \muU^-)} \int_{\mathbb{R}^2 \times \mathbb{R}^2} C_{\lambda, c_0}^u(\zu_1, \zu_2) \dd\pi(\zu_1,\zu_2),
  \end{equation}
  where $C_{\lambda, c_0}^u(\zu_1, \zu_2) = \min \left( \lambda, \; \frac{(h_1-h_2)^2}{\vert d_1 \vert+\vert d_2 \vert} \right)$. 
  Moreover, a solution to the fair regression problem in \cref{eq:erm_penalized} is given by \cref{eq:sol_from_transport_maps} with
  \vspace{-0.1cm}
  \begin{equation}\label{eq:y_star_OT}
      \begin{cases}
          T^+_{\lambda,\TV}(\zu_1) = T^-_{\lambda,\TV}(\zu_2) = \displaystyle \frac{\vert d_2 \vert h_1 + \vert d_1 \vert h_2}{\vert d_1 \vert+\vert d_2 \vert} & \text{if} \quad \frac{(h_1-h_2)^2}{\vert d_1 \vert+\vert d_2 \vert} \le \lambda, \\[3ex]
          T^+_{\lambda,\TV}(\zu_1) = h_1, \quad T^-_{\lambda,\TV}(\zu_2) = h_2 & \text{if} \quad \frac{(h_1-h_2)^2}{\vert d_1 \vert+\vert d_2 \vert} > \lambda,
      \end{cases}
  \end{equation}
  for all pairs $\zu_1=(h_1, d_1)$ and $\zu_2=(h_2, d_2)$ in the support of $\pi^*$.
\end{restatable}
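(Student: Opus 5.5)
The plan is to collapse the three-marginal problem \hyperref[eq:P-lambda-general]{$(P_{\lambda, \TV})$} into a two-marginal problem between $\muU^+$ and $\muU^-$ by minimizing out the intermediate prediction variables pointwise, exactly as one would for the $\mathcal W_2^2$ case in \cref{prop:relaxed_reduction}. Since $\TV(\nu^+,\nu^-)=\OT_{c_0}(\nu^+,\nu^-)$, the objective of \hyperref[eq:P-lambda-general]{$(P_{\lambda, \TV})$} is an infimum over $\nu^\pm$ and couplings $\gamma^+\in\Pi(\muU^+,\nu^+)$, $\gamma^-\in\Pi(\muU^-,\nu^-)$, $\rho\in\Pi(\nu^+,\nu^-)$. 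Gluing these (gluing lemma) produces a joint law $\Gamma$ of $(\zu_1,y_1,y_2,\zu_2)$ whose $(\zu_1,\zu_2)$-marginal is a coupling $\pi\in\Pi(\muU^+,\muU^-)$. Conversely, any such $\Gamma$ gives admissible $\nu^\pm$ and couplings. Hence the value equals
\begin{equation*}
\inf_{\Gamma}\int \Big[\tfrac{(h_1-y_1)^2}{|d_1|}+\tfrac{(h_2-y_2)^2}{|d_2|}+\lambda\mathbbm{1}_{\{y_1\neq y_2\}}\Big]\dd\Gamma ,
\end{equation*}
which is at least $\inf_{\pi}\int C(\zu_1,\zu_2)\dd\pi$, where $C(\zu_1,\zu_2)=\inf_{y_1,y_2}[\cdots]$. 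Equality follows by choosing the measurable minimizers $(y_1^*,y_2^*)(\zu_1,\zu_2)$ and pushing $\pi$ forward.

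The key computation is this pointwise minimization. There are two cases. If $y_1\neq y_2$, the best choice is $y_i=h_i$; since $\muU^+$ and $\muU^-$ live on $d>0$ and $d<0$, the values $h_1=h_2$ are a null issue, and the cost is $\lambda$. If $y_1=y_2=y$, we minimize $(h_1-y)^2/|d_1|+(h_2-y)^2/|d_2|$, a weighted mean problem. Its solution is $y=(|d_2|h_1+|d_1|h_2)/(|d_1|+|d_2|)$, with value $(h_1-h_2)^2/(|d_1|+|d_2|)$ by the parallel-resistor identity. Taking the smaller of the two values gives $C^u_{\lambda,c_0}$ and the case split in \cref{eq:y_star_OT}. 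This establishes the equivalence of \hyperref[eq:P-lambda-general]{$(P_{\lambda, \TV})$} with \eqref{eq:OT_TV_relaxed}. An optimal $\pi^*$ exists because the cost is continuous on the supports (where $d_1,d_2\neq 0$), nonnegative and lower semicontinuous, so standard OT existence applies. The optimal $\nu^\pm$ are then $(y_i^*)_\#\pi^*$.

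The main obstacle is showing that the prescription defines deterministic maps $T^\pm_{\lambda,\TV}$. A priori $\pi^*$ may split $\zu_1$ across several $\zu_2$, and the formula depends on $\zu_2$. I would use $c$-cyclical monotonicity of the support of $\pi^*$. Take two pairs $(\zu_1,\zu_2),(\zu_1,\zu_2')$ in the support giving different predicted values for $\zu_1$. The fixed-$\zu_1$ dependence on $(h_2,d_2)$ then forces a relation of the form $F(d_2)=h_2$ on a set of positive mass, which \cref{ass_non_atomic} rules out (the graph condition). Following the argument for \cref{prop:relaxed_reduction}, I would first show that the set of $\zu_1$ with non-unique value is $\muU^+$-null. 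Monotonicity between the two regimes (fair matching versus $y=h$) should give the same conclusion: mixing regimes for a single $\zu_1$ costs strictly more unless a boundary equality holds, and that boundary is again graph-like and null.

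Finally, \cref{lem:reduc_OT} turns these maps into the solution \eqref{eq:sol_from_transport_maps} of \eqref{eq:erm_penalized}.
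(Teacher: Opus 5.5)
Your reduction (gluing, pointwise minimization, measurable selection of $(y_1^*,y_2^*)$) and your computation of $C^u_{\lambda,c_0}$ with its two regimes match the paper. (Your remark that $h_1=h_2$ is a null issue because of the sign of $d$ is a non sequitur, but it is harmless: in that case the minimum is $0=\min(\lambda,0)$.)

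The gap is in the step you flag as the main obstacle: showing that $y_1^*(\zu_1,\zu_2)$ does not depend on $\zu_2$ for $\muU^+$-a.e.\ $\zu_1$. The mechanism you sketch does not work.
\begin{itemize}
\item For two pairs $(\zu_1,\zu_2),(\zu_1,\zu_2')$ with the same first point, $C$-cyclical monotonicity is vacuous, since exchanging the second coordinates changes nothing. So optimality alone never prevents a given $\zu_1$ from being split. Splitting can only be excluded for almost every $\zu_1$, and that needs control of $\pi^*$ as $h_1$ varies within a fiber $\{d=d_1\}$, which is what a Kantorovich potential provides.
\item \cref{ass_non_atomic} is not used on the target side via a relation $h_2=F(d_2)$. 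At a good point $\zu_1$, the $\zu_2$'s coupled to it in the barycenter regime \emph{do} lie on such a graph, namely $h_2=h_1-\tfrac{|d_1|+|d_2|}{2}\partial_{h_1}\psi(\zu_1)$, and this is exactly why the barycenter value is constant. The assumption is used on the source side: $\muU^+$ does not charge the set where the potential has a kink in $h_1$. That set has countable sections in each fiber and is covered by countably many graphs $\{(F_k(d),d)\}$.
\item The claim that mixing regimes at one $\zu_1$ ``costs strictly more'' has no basis, since again no exchange argument involves a single source point.
\item The boundary $\{(h_1-h_2)^2=\lambda(|d_1|+|d_2|)\}$ has $\muU^-$-null sections for fixed $\zu_1$, but this says nothing about its $\pi^*$-mass, because $\pi^*$ is not a product measure.
\end{itemize}

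What is missing is the duality argument. Take a bounded $C^u_{\lambda,c_0}$-concave potential $\psi(\zu_1)=\inf_{\zu_2}\big(\phi(\zu_2)+C^u_{\lambda,c_0}(\zu_1,\zu_2)\big)$ with $\pi^*$ concentrated on $\Gamma=\{\psi(\zu_1)-\phi(\zu_2)=C^u_{\lambda,c_0}(\zu_1,\zu_2)\}$.

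The paper's route is as follows. It writes $\psi=\min\big(\tilde\psi,\inf\phi+\lambda\big)$ with $\tilde\psi(\zu_1)=\inf_{\zu_2}\big(\phi(\zu_2)+C_\infty(\zu_1,\zu_2)\big)$ and $C_\infty=(h_1-h_2)^2/(|d_1|+|d_2|)$. On $\{\psi=\tilde\psi\}$ it applies the exact-fairness differentiability result of \citet{divol2024demographicparityregressionclassification} to $\tilde\psi$. On $\{\tilde\psi>\inf\phi+\lambda\}$ it shows every pair of $\Gamma$ lies in the hard regime (indeed $C_\infty\ge\tilde\psi-\phi>\psi-\phi=C^u_{\lambda,c_0}$ there), so $y_1^*=h_1$.

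A direct variant avoids splitting into regions; write $a_i=|d_i|$.
\begin{itemize}
\item Since $h_1\mapsto\min\big(\lambda,(h_1-h_2)^2/(a_1+a_2)\big)-h_1^2/a_1$ is concave, $\psi(\cdot,d_1)$ is semiconcave.
\item Hence $\psi$ is differentiable in $h_1$ outside a set with countable fiber-sections, which is $\muU^+$-null by \cref{ass_non_atomic}.
\item At such a $\zu_1$ with $(\zu_1,\zu_2)\in\Gamma$, the map $h\mapsto\psi(h,d_1)-C^u_{\lambda,c_0}((h,d_1),\zu_2)$ is maximal at $h_1$.
\item If $C_\infty(\zu_1,\zu_2)<\lambda$, this gives $\partial_{h_1}\psi(\zu_1)=2(h_1-h_2)/(a_1+a_2)$. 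If $C_\infty(\zu_1,\zu_2)>\lambda$, it gives $\partial_{h_1}\psi(\zu_1)=0$.
\item The case $C_\infty(\zu_1,\zu_2)=\lambda$ is impossible: $h_1$ would maximize both $\psi(\cdot,d_1)$ and $\psi(\cdot,d_1)-C_\infty(\cdot,\zu_2)$, forcing $h_1=h_2$ and hence $\lambda=0$.
\item In every case $y_1^*(\zu_1,\zu_2)=h_1-\tfrac{|d_1|}{2}\partial_{h_1}\psi(\zu_1)$, which depends only on $\zu_1$.
\end{itemize}
If both regimes occur at the same $\zu_1$, the barycenter pair must have $h_2=h_1$, so both regimes give $y_1^*=h_1$. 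The same argument applies to $y_2^*$, and \cref{lem:equiv_erm_ot} then concludes.
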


\noindent
Here again, while Problem~\eqref{eq:OT_TV_relaxed} is solved by a
(non-deterministic) transport plan, we show in
\cref{app:unaware_proof_relaxed_TV} that $T^+_{\lambda,\TV}(\zu_1)$ is well-defined in the sense that its value does not depend on the choice of $\zu_2$ with $(\zu_1,\zu_2)$ in the support
of $\pi^*$ (and similarly for $T^-_{\lambda,\TV}(\zu_2)$). 
In the strict fairness
limit ($\lambda\!\to\!\infty$), \cref{prop:relaxed_reduction} also recovers
previous results \citep{chzhen2020fairregressionwassersteinbarycenters,
divol2024demographicparityregressionclassification}. For 
$\lambda>0$, it
yields a novel characterization of the optimal predictor under a TV-fairness
penalty in the unawareness framework. 
In the next section, we discuss the implications of this relaxation and contrast it with the $\mathcal{W}^2_2$ fairness penalty.

\subsection{The implicit philosophies of relaxed fairness} \label{subsec:philosophies_relaxed_fairness}

The characterizations of the optimal almost-fair prediction function in
\cref{prop:relaxed_reduction,prop:tv_relaxed_reduction} 
highlight how the choice of fairness penalty impacts the
behaviour of fair algorithms. In this section, we contrast the two philosophies
induced by the geometry of the $\mathcal{W}_2$ and TV penalties.

\paragraph{Homogeneous gap reduction under Wasserstein relaxation.}
The $\mathcal W_2$ penalty penalizes the magnitude of the discrepancy in the
label space.
Therefore, the optimal strategy consists in displacing continuously all predictions in
the two groups toward the $\mathcal W_2$ barycenter. 
This implies that it is preferable for all individuals to be slightly moved together toward the fair prediction than for a few to experience massive disparity. 
Philosophically, this aligns with an \emph{egalitarian} view of fairness \citep{rawls1971theory}, which treats inequality as a shared societal burden to be mitigated globally across the population. 

\paragraph{Heterogeneous exact parity under Total Variation relaxation.}
The TV penalty penalizes non-overlapping probability mass, ignoring distances in the prediction space.
Compromising by moving predictions slightly closer yields no benefit unless they are merged perfectly. 
As shown in \cref{prop:tv_relaxed_reduction}, this results in a hard-thresholding behavior. 
This implies that TV enforces exact parity where it is ``affordable'' (i.e.,
where the cost 
is low), and accepts total unfairness for the extremes of
the distribution with low densities. 
TV ensures that a maximized subset of the population achieves perfect indistinguishability.
Ethically, this reflects a \emph{sufficientarian} approach \citep{frankfurt1987equality}, focusing on securing a plateau of absolute equality for the majority rather than penalizing outliers. 

\paragraph{Illustration of the paradigms.}
We illustrate the effect of the $\mathcal W_2$ and TV relaxations on a synthetic
dataset for the unaware setting  (\cref{fig:unaware_evolution}) using the implementation detailed in \cref{sec:algorithm}.
Observing individual predictions' paths across relaxation levels reveals the distinct mechanisms of each penalty.
As the fairness constraint tightens, the $\mathcal W_2$ penalty shifts the distributions uniformly, with individual predictions smoothly and linearly interpolating toward the center. 
By contrast, the TV case exhibits $L_1$ hard-thresholding. Predictions either remain at their unconstrained values (horizontal paths) or abruptly merge.
Furthermore, a saturation effect occurs: for sufficiently large $\lambda$, the solution stabilizes to the strict fairness mapping.
Full data generation details, explicit visualizations of the mapping geometries,
and examples in the aware setting with similar behaviors are provided in \cref{app:empirical_illustrations}. 



\begin{figure*}[t!]
  \centering
  \includegraphics[width=1\textwidth]{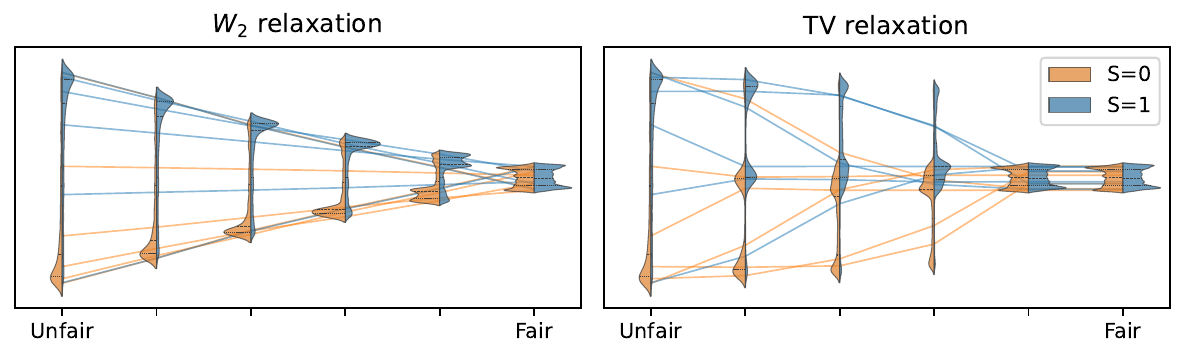}
  \vspace{-0.6cm}
  \caption{\textbf{Individual prediction trajectories for $\mathcal{W}_2$ vs. TV Relaxation.} 
  Evolution of marginal distributions and subset of individual prediction paths from Unfair (Bayes) to strictly Fair (100\%-0\% of initial unfairness). 
  The lines show $\mathcal W_2$ induces proportional shift for all individuals, while TV leaves predictions unchanged (horizontal lines) until a hard threshold triggers abrupt merging.
  }
  \label{fig:unaware_evolution}
\end{figure*}

\section{Algorithmic implementation}
\label{sec:algorithm}
In the previous section, we expressed the solution of the almost-fair regression
problem in terms of a theoretical OT problem. Building on these characterizations, we now
propose a practical algorithm to estimate the optimal almost-fair
prediction function. We assume access to an estimator $\hat{\eta}$ of the Bayes
regression function $\eta$, and an estimator $\hat{\mathbb{P}}(S =s \vert X =x)$
of the posterior group probabilities $\mathbb{P}(S =s \vert X =x)$, which can be
obtained using standard regression and probabilistic classification methods, and to an unlabeled dataset $(x_i, s_i)_{i\leq n}$. The
pseudocode is provided in \cref{app:algorithmic_implementation}.

\paragraph{Estimation of $\hat{\Delta}(x)$.}
We first build estimates $\hat{p}^+$, $\hat{p}^-$ of $p^+$ and $p^-$ using $(x_i, s_i)_{i\leq n}$, and define $\hat{\Delta}(x) = \tfrac{\hat{\mathbb{P}}(S =+ \vert X =x)}{\hat{p}^+}-\tfrac{\hat{\mathbb{P}}(S =- \vert X =x)}{\hat{p}^-}$. 
We then apply a threshold $\tau$ to partition training indices into positive ($\smash{\mathcal{I}^+ \!=\! \{i: \hat{\Delta}(x_i)\!>\!\tau\}}$) and negative ($\smash{\mathcal{I}^- \!=\! \{i: \hat{\Delta}(x_i)\!<\!-\tau\}}$) subsets based on the sign of $\hat{\Delta}$, forming the discrete supports of our empirical measures. 
Crucially, to account for finite-sample estimation errors, we normalize the local weights ($\hat{d}_i \!\propto \! \hat{\Delta}(x_i)$ for $i \in \mathcal{I}^+$ and $\hat{d}_i \!\propto \!-\hat{\Delta}(x_i)$ for $i \in \mathcal{I}^-$) to enforce mass conservation ($\sum_{i\in \mathcal{I}^+} \vert \hat{d}_i\vert \!=\! \sum_{i\in \mathcal{I}^-} \vert \hat{d}_i\vert \!=\! 1$), ensuring the downstream transport problem remains well-posed.

\paragraph{Discrete OT and barycentric projection.}
From the empirical measures $(\hat{\eta}_i, \hat{d}_i)_{i \in \mathcal{I}^+}$
and $(\hat{\eta}_j, \hat{d}_j)_{j \in \mathcal{I}^-}$, we construct the pairwise
cost matrix $\mathbf{C}_{\lambda}\in \mathbb{R}_+^{\vert\mathcal{I}^+\vert \times
\vert\mathcal{I}^-\vert}$ using the closed-form solutions derived in
\cref{prop:relaxed_reduction} (for the $\mathcal W_2$ relaxation) or
\cref{prop:tv_relaxed_reduction} (for the TV relaxation). \\
We then solve the discrete OT problem to find the optimal coupling matrix $\mathbf{\Pi}$. For any point $\smash{i \in \mathcal{I}^s}$, its fair pseudo-label $\hat{f}_i$ is obtained via the barycentric projection of the target mapping. 
With a slight abuse of notation, let $T_{\lambda, \mathcal{D}}^s\left(z_i, z_j\right)$ denote the closed-form target mapping evaluated for the coupled pair $(z_i, z_j)$ (derived via \cref{eq:y-star} for $\mathcal{W}_2$ and \cref{eq:y_star_OT} for TV). 
The pseudo-label is then computed as
\vspace{-0.1cm}
\begin{equation*}
    \hat{f}_i = \frac{1}{\vert \hat{d}_i\vert} \sum_{j \in \mathcal{I}^{(-s)}} \mathbf{\Pi}_{i,j} T_{\lambda, \mathcal{D}}^s\left((\hat{\eta}_i, \hat{d}_i), (\hat{\eta}_j, \hat{d}_j)\right).
\end{equation*}
\textbf{Continuous mapping and final predictor.}
For out-of-sample 
inference 
time, we concatenate training points $(x_i)_{i\leq n}\!$ and fit a regressor $\smash{\hat{f}_{fair}}$ 
to predict $(\hat{f}_i)_{i\leq n}$.
Predicting the almost-fair outcome for a new instance $x$ only requires evaluating the values $\hat{\eta}(x)$, $\hat{\Delta}(x)$, and $\hat{f}_{\text{fair}}(\hat{\eta}(x), \hat{\Delta}(x))$.

\paragraph{Computational complexity.} 
Our framework's complexity is dominated by the $O(n^3 \log n)$ discrete OT step, computed using the the network simplex implementation from the POT library \cite{flamary2021pot}. 
Despite this worst-case bound,
bypassing the stochastic gradient descent of \citet{Fairreg} makes our deterministic approach efficient. Empirically, it runs in seconds for $n \sim 10^4$, significantly outperforming FairReg (
\cref{app:computation_times}), and can be further accelerated via mini-batching \cite{fatras2019learning}.

\section{Numerical experiments}\label{sec:experiments}


In this section, we empirically evaluate the performance of our closed-form OT relaxations on both synthetic and real-world datasets.

\subsection{Baseline and state-of-the-art comparison}

We benchmark our methods
against the unconstrained (and unfair) Empirical Risk Minimizer (ERM) and the state-of-the-art method for unaware relaxed fair regression, FairReg \citep{Fairreg}.\\

\noindent
\textbf{The FairReg algorithm.} 
To address relaxed unaware fairness, \citet{Fairreg} propose a post-processing framework that learns a randomized prediction rule $\pi$ over a finite grid $\mathcal{T}$. 
Instead of a deterministic mapping, they solve an entropy-regularized problem: $\min_{\pi} \{ \mathbb{E}[(\hat{Y}_\pi \!-\! \hat{\eta}(X))^2] \!+ \! \frac{1}{\beta}\mathbb{E}[\Psi(\pi)] \}$ subject to $\smash{\mathcal{U}_{\text{KS}}^{\mathcal{T}}(\hat{Y}_\pi, S) \leq \epsilon}$, where $\smash{\hat{Y}_\pi \!\sim\! \pi(\cdot \vert X)}$, $\smash{\Psi}$ is the negative entropy, and $\mathcal{U}_{\text{KS}}^{\mathcal{T}}$ is a discretized Kolmogorov-Smirnov (KS) distance. 
This non-differentiable problem is solved via a dual formulation and stochastic gradient method. 
Details on the algorithm and its implementation are in \cref{app:fairreg_details}.
We compare our unaware OT algorithm against FairReg using their exact experimental protocol.\\

\noindent
\textbf{Aware and proxy-based baselines.}
To establish the empirical lower bound for the accuracy-fairness trade-off, we include the aware optimal transport models (OT-A $\mathcal{W}_2$ and OT-A TV), which have access to the true sensitive attribute $S$ at both training and inference time. 
We also benchmark against a naive plug-in estimator (OT $\smash{\hat{S}\ \mathcal{W}_2}$). 
A common heuristic \citep{awasthi2020equalized,chen2019fairness, Kallu_Shat} for the unaware setting is to predict a proxy $\hat{S}$ using a probabilistic classifier and apply the classical 1D aware OT mapping. 
Comparing these approaches highlights the cost of classification errors compared to our 2D framework.\\

\noindent
\textbf{Evaluation metrics.}
We evaluate predictive utility via Mean Squared Error (MSE). 
To quantify fairness, we report three metrics. 
First, the Wasserstein-2 ($\mathcal{W}_2$) and Total Variation (TV) unfairness.
Then, to ensure fair comparison with the FairReg baseline, we report $\text{KS}^{\mathcal{T}}$, which approximates the KS distance by computing the maximum absolute c.d.f. difference across a discrete 50-bin grid $\mathcal{T}$.

\subsection{Real-world datasets}

To evaluate our framework against the FairReg baseline \citep{Fairreg}, we use real-world datasets from their evaluation protocol: Communities and Crime, and Law School Admissions. 
Identical experiments and visualizations for a synthetic and the Adult (Census Income) dataset are provided in \cref{app:synthetic_dataset,app:adult_dataset} respectively, which show similar gains in performance and robustness of our OT algorithms.

\paragraph{Datasets descriptions.}
The Law School Admissions dataset comprises 20,649 student records from 163 U.S. law schools, featuring academic indicators (e.g., LSAT, GPA) and demographic variables. 
The regression task is to predict the student's standardized first-year GPA (\texttt{zfygpa}). 
Because the data exhibits racial imbalance (approximately 84.6\% White), we follow standard practice and define the binary sensitive attribute $S$ as White ($S=1$) versus Non-White ($S=0$).\\
The Communities and Crime dataset comprises 1,994 U.S. communities characterized by 128 socio-economic and demographic features. 
The regression task is to predict the violent crime rate per 100,000 population (\texttt{ViolentCrimesPerPop}). 
To construct $S$, we identify the majority racial demographic within each community and binarize it into White ($S=1$) versus non-White ($S=0$).



\begin{figure*}[t!]
  \centering
  \includegraphics[width=1\textwidth]{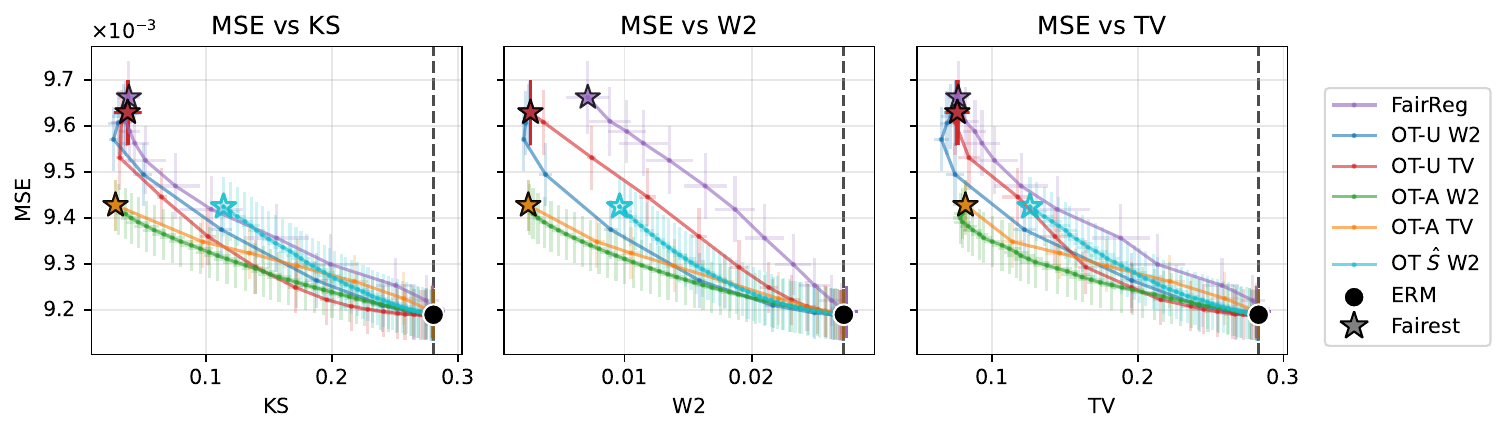}
  \vspace{-0.7cm}
  \caption{\textbf{Accuracy-fairness trade-offs on the Law School dataset.} 
  Relaxation trajectories from the unconstrained ERM (black dot) to the exact fair models (stars). 
  Error bars indicate the standard deviation across 10 random data splits. 
  MSE of the constant predictor is $10.1 \times 10^{-3}$.}
  \label{fig:lawschool}
  \vspace{0cm} 
  \includegraphics[width=1\textwidth]{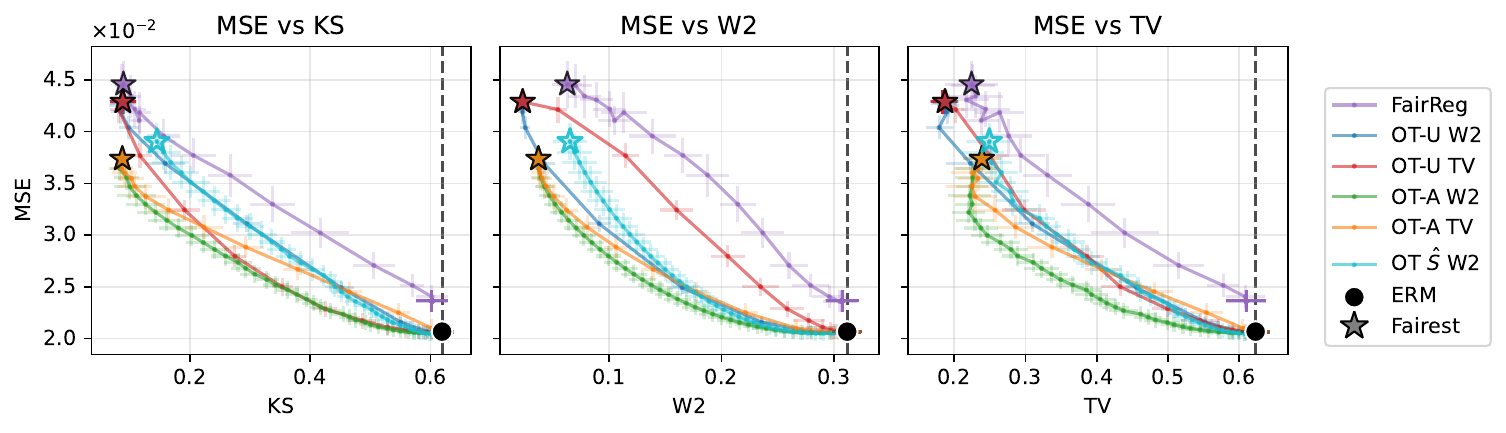}
  \vspace{-0.7cm}
  \caption{\textbf{Accuracy-fairness trade-offs on the Communities dataset.} 
  Relaxation trajectories from the unconstrained ERM to the exact fair models. 
  MSE of the constant predictor is $5.5\times 10^{-2}$.}
  \label{fig:commu}
  \vspace{-0.5cm}

\end{figure*}

\paragraph{Results.}
As shown in \cref{fig:lawschool,fig:commu}, our unaware OT framework (OT-U $\mathcal{W}_2$ and OT-U TV) matches or outperforms the baselines. 
While the naive plug-in baseline (OT $\hat{S}$ $\mathcal{W}_2$) attempts to leverage continuous OT, its reliance on a noisy proxy attribute significantly degrades the accuracy-fairness trade-off, often pushing its curve above FairReg. 
Conversely, 
OT-U $\mathcal{W}_2$ achieves superior trade-offs on the continuous metric and remains highly competitive on KS and TV. Interestingly, $\mathcal W_2$-penalized models achieve better tradeoffs than TV-based ones \emph{even when evaluated on the TV metric}. 




\section{Limitations and perspectives}

This work introduced a unified geometric framework for relaxed fair regression, where optimal predictors admit closed-form expressions via optimal transport maps. These characterizations are valuable for two reasons: they clarify the implications of the chosen fairness penalties, and they enable a simple, efficient post-processing algorithm for both aware and unaware settings that consistently matches or outperforms state-of-the-art methods.\\

\noindent
We identify the following limitations of our framework. First, our results are currently restricted to TV and $\mathcal W_2^2$ fairness penalties. 
Extending them to penalties $\OT_c$ with other costs $c$ could bring fruitful insight to relaxed fairness and inspire new algorithms. 
Second, our results are restricted to binary sensitive attributes. 
Extending them to multi-group fairness would require generalizing the Jordan decomposition argument to multi-marginal OT, an interesting but challenging problem. 
Finally, as a post-processing method, the performance of our algorithm in terms of accuracy and fairness is intrinsically dependent on the estimators $\eta$ and $\Delta$. 
Deriving statistical guarantees relating the performance of our algorithm to that of these estimators would help ensure that fairness and accuracy are not substantially degraded.

\section*{Acknowledgments and Disclosure of Funding}

This work was supported by Hi! PARIS and the ANR/France 2030 program (ANR-23-IACL-0005), as well as by the Fondation de l’Ecole Polytechnique. 
Additional support was provided by the French National Research Agency (ANR) through grant ANR-23-CE23-0002 and the PEPR IA FOUNDRY project (ANR-23-PEIA-0003).

\clearpage
\bibliographystyle{plainnat}
\bibliography{biblio}
\clearpage
\appendix
\input{appendix}

\clearpage



\newpage

\end{document}

%% file: appendix.tex
\onecolumn
\clearpage
\appendix
\setcounter{page}{1}

\addcontentsline{toc}{section}{Appendix}

\startcontents[appendix]
\begin{center}
    \Large\bfseries Appendix Table of Contents
\end{center}
\hrule
\vspace{1em}
\printcontents[appendix]{}{1}{\setcounter{tocdepth}{2}}
\vspace{1em}
\hrule
\vspace{3em}

\clearpage

\medskip

\section{Proofs of propositions and lemmas}

\paragraph{Notation for the proofs.} 
Throughout the appendix, we define the 2D mapped feature space as $\mathcal{Z} := \mathbb{R} \times \mathbb{R}$. An element in this space is denoted as $\zu=(h,d) \in \mathcal{Z}$, where $h \in \mathbb{R}$ represents the initial base prediction $\eta(x)$, and $d \in \mathbb{R}$ represents the local density ratio $\Delta(x)$. For readability, we denote its absolute value as $a = |d|$. Additionally, for a measure $\mu \in \mathcal{P}(\mathbb{R})$, we denote its generalized quantile function as $Q_\mu(t) = \inf\{y \in \mathbb{R} : F_\mu(y) \ge t\}$ for $t \in (0,1)$.

We begin the proof of \cref{lem:reduc_OT} by showing that if Problem \eqref{eq:P-lambda-general} is solved by transport maps, then the solution to Problem \eqref{eq:erm_penalized} is indeed given as in \eqref{eq:sol_from_transport_maps}. This is done in \cref{lem:equiv_erm_ot}. The remainder of the proof will then consist in proving that for the choice of fairness penalties considered, and under \cref{ass_non_atomic}, Problem \eqref{eq:P-lambda-general} is indeed solved by transport maps. To do this, we show that Problem \eqref{eq:P-lambda-general} reduces to an OT problem between measures $\muU^+$, $\muU^-$ (\cref{lem:general_reduction}), before showing in \cref{app:unaware_proof_relaxed_W2} and \cref{app:unaware_proof_relaxed_TV} that for the $\mathcal{W}^2_2$ and the TV fairness penalties respectively, this problem is indeed solved by deterministic transport maps. Along the way, we prove \cref{prop:relaxed_reduction} in \cref{app:unaware_proof_relaxed_W2}, and \cref{prop:tv_relaxed_reduction} in \cref{app:unaware_proof_relaxed_TV}.

\subsection{Equivalence between \eqref{eq:erm_penalized}  and \eqref{eq:P-lambda-general}, assuming deterministic optimal transport maps} 
\begin{lemma}\label{lem:equiv_erm_ot}
If Problem $\eqref{eq:P-lambda-general}$ is solved by deterministic transport maps $T^+_{\lambda,\mathcal{D}}$ and $T^-_{\lambda,\mathcal{D}}$ which push $\muU^+$ and $\muU^-$ to the minimizing measures $\nu^+$ and $\nu^-$, respectively, then the solution of the relaxed fair regression problem $\eqref{eq:erm_penalized}$ is given by
\begin{align*}
    f_{\lambda, \mathcal{D}}^*(x) = 
    \begin{cases}
        T^+_{\lambda,\mathcal{D}}(\eta(x),\Delta(x)) \quad \text{if} \quad \Delta(x) > 0, \\
        T^-_{\lambda,\mathcal{D}}(\eta(x),\Delta(x)) \quad \text{if} \quad \Delta(x) < 0,\\
        \eta(x) \quad \text{else}.
    \end{cases}
\end{align*}
\end{lemma}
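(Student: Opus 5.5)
We prove the lemma by a two-sided comparison of optimal values. First we show that the objective of \eqref{eq:erm_penalized} evaluated at any predictor $f$ is bounded below by the objective of \eqref{eq:P-lambda-general} evaluated at the pushforwards $\nu^\pm_f := f_{\#}(\chi^+-\chi^-)_\pm$. Then we show that the candidate $f^*_{\lambda,\mathcal D}$ built from the transport maps attains the optimal value of \eqref{eq:P-lambda-general}.

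\textbf{Step 1: decomposing the risk.} We disintegrate the law of $X$ against the Jordan decomposition. Write $\chi^+-\chi^- = m\big[(\chi^+-\chi^-)_+ - (\chi^+-\chi^-)_-\big]$, with $m=\TV(\chi^+,\chi^-)/2$ as in \eqref{eq:erm_penalized}. By Bayes' rule, the density of $\chi^+-\chi^-$ with respect to $\Law(X)$ is exactly $\Delta$. Hence $\dd\Law(X) = \frac{1}{|\Delta|}\dd(\chi^+-\chi^-)_{\pm}\cdot m$ on $\{\pm\Delta>0\}$. This gives
\[
\E\big[(f(X)-\eta(X))^2\big] = m\!\int \frac{(f-\eta)^2}{|\Delta|}\dd(\chi^+-\chi^-)_+ + m\!\int \frac{(f-\eta)^2}{|\Delta|}\dd(\chi^+-\chi^-)_- + \E\big[(f-\eta)^2\mathbbm 1_{\Delta(X)=0}\big].
\]
The last term is nonnegative and vanishes when $f=\eta$ on $\{\Delta=0\}$. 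Since that set does not enter $\mathcal U_{\mathcal D}$ (it carries no Jordan mass), setting $f=\eta$ there is optimal. Now view $(\eta,\Delta,f)_{\#}(\chi^+-\chi^-)_+$ as a coupling between $\muU^+$ and $\nu^+_f$. Its $c_u$-cost is the first integral, so that integral is at least $\OT_{c_u}(\muU^+,\nu_f^+)$, and likewise for the minus part. Dividing by $m$, the objective of \eqref{eq:erm_penalized} is at least $m$ times the value of \eqref{eq:P-lambda-general} at $(\nu^+_f,\nu^-_f)$. This in turn is at least $m\cdot\inf$\eqref{eq:P-lambda-general}.

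\textbf{Step 2: attainment.} Define $f^*$ as in the statement. By construction, $f^*_{\#}(\chi^+-\chi^-)_+ = (T^+)_{\#}(\eta,\Delta)_{\#}(\chi^+-\chi^-)_+ = T^+_{\#}\muU^+=\nu^+$, and similarly $\nu^-$. The integral terms then equal $\int c_u(z,T^\pm(z))\dd\muU^\pm(z)$. These are the optimal transport costs, because the maps are optimal for the inner problems at fixed minimizing $\nu^\pm$. The penalty term is $\lambda\mathcal D(\nu^+,\nu^-)=\lambda\OT_{\fcost}(\nu^+,\nu^-)$. So the objective of $f^*$ equals $m\cdot\min$\eqref{eq:P-lambda-general}, and $f^*$ is optimal by Step 1.

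\textbf{Main obstacle.} The delicate part is the measure-theoretic identity $\dd\Law(X)|_{\{\pm\Delta>0\}} = \frac{m}{|\Delta|}\dd(\chi^+-\chi^-)_\pm$. We will check it as follows. First, $\frac{\dd\chi^\pm}{\dd\Law(X)}(x)=\mathbb P(S=\pm\mid X=x)/p^\pm$, so $\frac{\dd(\chi^+-\chi^-)}{\dd\Law(X)}=\Delta$. The Jordan parts are then $\Delta_\pm\dd\Law(X)$, normalized by their common mass $m$; the two masses are equal because $\int\Delta\,\dd\Law(X)=0$. A second point needing care is measurability of $f^*$, together with the fact that the coupling $(\eta,\Delta,f)_{\#}(\chi^+-\chi^-)_+$ indeed has first marginal $\muU^+$. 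Both are immediate from the definitions.
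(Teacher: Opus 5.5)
Your proposal is correct and follows essentially the same route as the paper's proof. Like the paper, you split the risk over $\{\Delta>0\}$, $\{\Delta<0\}$, $\{\Delta=0\}$ using $\dd\chi = \tfrac{m}{|\Delta|}\dd(\chi^+-\chi^-)_\pm$, bound each weighted piece below by $\OT_{c_u}(\muU^\pm, f_{\#}(\chi^+-\chi^-)_\pm)$ via the coupling induced by $(\eta,\Delta,f)$, and check that the map-defined $f^*_{\lambda,\mathcal D}$ attains $m$ times the infimum of \eqref{eq:P-lambda-general}; you additionally make explicit the reading of the hypothesis that the paper uses tacitly, namely that $T^\pm_{\lambda,\mathcal D}$ realize $\OT_{c_u}(\muU^\pm,\nu^\pm)$.
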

\begin{proof}
Let us start by setting some additional notations. We denote $\chi=p^+\chi^+ + p^-\chi^-$ the law of $X$. 
Let $|\chi^+-\chi^-|$ be the  variation of $\chi^+-\chi^-$, then for $m = \frac{\TV(\chi^+,\chi^-)}{2}$, we define the normalized  Jordan decomposition of $\chi^+-\chi^-$ through
\[
\begin{cases}
m(\chi^+-\chi^-)_+ = \frac{1}{2}(|\chi^+-\chi^-|+\chi^+-\chi^-), \\
m(\chi^+-\chi^-)_- = \frac{1}{2}(|\chi^+-\chi^-|-\chi^++\chi^-), 
 \end{cases}
 \]
that is $(m(\chi^+-\chi^-)_+, m(\chi^+-\chi^-)_-)$ is the Jordan decomposition of $\chi^+-\chi^-$. Then, we have $\chi^+ - \chi^- = m(\chi^+-\chi^-)_+ - m(\chi^+-\chi^-)_-$, and
\begin{align*}
    \Delta(x) &= \frac{\mathbb{P}(S = +\vert X)}{p^+} - \frac{\mathbb{P}(S = -\vert X)}{p^-}\\
    & = \frac{\dd(\chi^+ - \chi^-)}{\dd\chi}(x)\\
    & = m\frac{\dd(\chi^+ - \chi^-)_+}{\dd\chi}(x) - m\frac{\dd(\chi^+ - \chi^-)_-}{\dd\chi}(x).
\end{align*}
Finally, we also define $\mathcal{X}_+ = \{x \in \mathcal{X} : \Delta(x) >0 \}$, $\mathcal{X}_- = \{x \in \mathcal{X} : \Delta(x) <0 \}$, and $\mathcal{X}_= = \{x \in \mathcal{X} : \Delta(x) =0 \}$ : $\mathcal{X}_+$ is the support of $(\chi^+-\chi^-)_+$, and $\mathcal{X}_-$ is the support of $(\chi^+-\chi^-)_-$.  With these notations, for all prediction function $f$,
\begin{align*}
\mathbb{E}\left[(\eta(X) - f(X))^2\right] =& \int (\eta(x)-f(x))^2 \dd\chi(x)\\
=&\int_{\mathcal{X}_+}  (\eta(x)-f(x))^2 \frac{\dd\chi}{\dd (\chi^+-\chi^-)_+}(x) \dd (\chi^+-\chi^-)_+(x) \\
&\quad\qquad + \int_{\mathcal{X}_-} (\eta(x)-f(x))^2 \frac{\dd\chi}{\dd (\chi^+-\chi^-)_-}(x) \dd (\chi^+-\chi^-)_-(x)\\
&\quad\qquad+  \int_{\mathcal{X}_=}  (\eta(x)-f(x))^2 \dd \chi(x)\\
=&\int_{\mathcal{X}_+}  (\eta(x)-f(x))^2 \frac{m}{\vert \Delta(x)\vert } \dd (\chi^+-\chi^-)_+(x) \\
&\quad\qquad + \int_{\mathcal{X}_-} (\eta(x)-f(x))^2 \frac{m}{\vert \Delta(x)\vert} \dd (\chi^+-\chi^-)_-(x)\\
&\quad\qquad+  \int_{\mathcal{X}_=}  (\eta(x)-f(x))^2 \dd \chi(x)\\
=&m\int_{\mathcal{X}_+}  c_u((\eta(x),\Delta(x)),f(x))\dd (\chi^+-\chi^-)_+(x) \\
&\quad\qquad+ m\int_{\mathcal{X}_-} c_u((\eta(x),\Delta(x)),f(x))\dd (\chi^+-\chi^-)_-(x)\\
&\quad\qquad+  \int_{\mathcal{X}_=}  (\eta(x)-f(x))^2 \dd \chi(x).
\end{align*}
Recall that $\muU^+ = (\eta, \Delta)_{\#}(\chi^+-\chi^-)_+$, and define $\tilde{\nu}^+ = f_{\#}(\chi^+-\chi^-)_+$. Then, $(\eta(X), \Delta(X), f(X))$ defines a coupling between $\muU^+$ and $\tilde{\nu}^+$, so
\begin{align*}
    \int_{\mathcal{X}_+}  c_u((\eta(x),\Delta(x)),f(x))\dd (\chi^+-\chi^-)_+(x) \geq \OT_{c_u}(\muU^+, \tilde{\nu}^+).
\end{align*}
Defining similarly $\tilde{\nu}^- = f_{\#}(\chi^+-\chi^-)_-$, we have
\begin{align*}
    \int_{\mathcal{X}_-}  c_u((\eta(x),\Delta(x)),f(x))\dd (\chi^+-\chi^-)_-(x) \geq \OT_{c_u}(\muU^-, \tilde{\nu}^-).
\end{align*}
Now, by definition, $\mathcal{U}_{\mathcal{D}}(f) =  \OT_{c_{\mathcal{D}}}(\tilde{\nu}^+,\tilde{\nu}^-)$, so we find
\begin{equation}\label{eq:oneside}
\begin{split}
    &\underset{f}{\min}\ \mathbb{E}\left[(\eta(X) - f(X))^2\right] + \lambda m \mathcal{U}_{\mathcal{D}}(f) \\
    &\qquad \geq m\inf_{\tilde{\nu}^+, \nu^- \in \mathcal{P}(\mathbb{R})} \left( \OT_{c_u}(\muU^+, \tilde{\nu}^+) + \OT_{c_u}(\muU^-, \tilde{\nu}^-)+ \lambda \OT_{c_{\mathcal{D}}}(\tilde{\nu}^+,\tilde{\nu}^-)\right).
    \end{split}
\end{equation}
Now, if the OT problem on the right hand side of \eqref{eq:oneside} is solved by deterministic transport maps $T^+_{\lambda,\mathcal{D}}$ and $T^-_{\lambda,\mathcal{D}}$ which push $\muU^+$ and $\muU^-$ to the minimizing measures $\nu^+$ and $\nu^-$, respectively, then for $f^*_{\lambda,\mathcal D}$ given in Equation \eqref{eq:sol_from_transport_maps} we have
\begin{align*}
    \int_{\mathcal{X}_+}  c_u((\eta(x),\Delta(x)),f^*_{\lambda,\mathcal D}(x))\dd (\chi^+-\chi^-)_+(x) =& \OT_{c_u}(\muU^+, \nu^+),\\
    \int_{\mathcal{X}_+}  c_u((\eta(x),\Delta(x)),f^*_{\lambda,\mathcal D}(x))\dd (\chi^+-\chi^-)_-(x) =& \OT_{c_u}(\muU^-, \nu^-),\\
    \int_{\mathcal{X}_=}  (\eta(x)-f^*_{\lambda,\mathcal D}(x))^2 \dd \chi(x) =& 0
\end{align*}
so 
\begin{align*}
    &\mathbb{E}\left[(\eta(X) - f(X)^*)^2\right] + \lambda m \mathcal{U}_{\mathcal{D}}(f^*) \\
    &\qquad\qquad = m\underset{\tilde{\nu}^+, \tilde{\nu}^-}{\min}\left( \OT_{c_u}(\muU^+, \tilde{\nu}^+) + \OT_{c_u}(\muU^-, \tilde{\nu}^-)+ \lambda \OT_{c_{\mathcal{D}}}(\tilde{\nu}^+,\tilde{\nu}^-)\right).
\end{align*}
This concludes the proof of Lemma \ref{lem:equiv_erm_ot}.
\end{proof}

\subsection{Reduction of \eqref{eq:P-lambda-general} to an OT problem between $\muU^+$ and $\muU^-$} \label{app:general_reduction} 



As an introductory result, we use the Gluing lemma to show that the minimization problem \eqref{eq:P-lambda-general} can be recast as  a simple optimal transport problem for some appropriate cost.

\begin{restatable}{lemma}{lemGeneralReduction}
  \label{lem:general_reduction}
  Let $\fcost:\R\times \R\to \R$ be lower semi-continuous and bounded from below. 
  For any pair of points $\zu_1 = (h_1, d_1)$ and $\zu_2 = (h_2, d_2)$, 
  we define the pointwise relaxed cost
  \begin{equation}\label{eq:def_C_lambda}
    C_{\lambda, \mathcal{D}}(\zu_1, \zu_2) := \inf_{y_1, y_2 \in \mathbb{R}} \left\{ \frac{(h_1 - y_1)^2}{|d_1|} + \frac{(h_2 - y_2)^2}{|d_2|} + \lambda \fcost(y_1, y_2) \right\}.
  \end{equation}
  
  Then, the problem \eqref{eq:P-lambda-general} is  equivalent to the optimal transport problem using this pointwise cost, in the sense that
    \begin{equation}\label{eq:OT-reduced}
    \inf (P_{\lambda, \mathcal D}) \;=\; \OT_{C_{\lambda,\mathcal D}}(\muU^+,\muU^-) \;=\;\min_{\pi\in\Pi(\muU^+,\muU^-)} \int_{\mathbb{R} \times \mathbb{R}} C_{\lambda, \mathcal{D}}(\zu_1,\zu_2)\,d\pi(\zu_1,\zu_2).
  \end{equation}
  Moreover, let $\pi^*$ be a minimizer of the right-hand side, and let $(y_1^*,y_2^*): \mathbb{R}\times \mathbb{R}\to \R^2$ be a measurable map such that $(y_1^*(z_1,z_2),y_2^*(z_1,z_2))$ realizes the infimum in \eqref{eq:def_C_lambda} for all $(z_1,z_2)\in \mathbb{R}^2\times \mathbb{R}^2$. Then, $(\nu^{+*},\nu^{-*}) = (y_1^*,y_2^*)_{\sharp} \pi_*$ is a minimizer of \eqref{eq:P-lambda-general}.
\end{restatable}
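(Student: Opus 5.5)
We prove the two inequalities between $\inf(P_{\lambda,\mathcal D})$ and $\OT_{C_{\lambda,\mathcal D}}(\muU^+,\muU^-)$ separately. The easy direction ($\le$) is the one giving the construction of minimizers; the reverse direction ($\ge$) uses the gluing lemma.

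\emph{Upper bound ($\le$).} Fix any $\pi\in\Pi(\muU^+,\muU^-)$ and a measurable selection $(y_1^*,y_2^*)$ of minimizers in \eqref{eq:def_C_lambda}. Set $\nu^{\pm}$ to be the images of $\pi$ under $y_1^*$ and $y_2^*$:
\[
\nu^+=(y_1^*)_\sharp\pi,\qquad \nu^-=(y_2^*)_\sharp\pi .
\]
The map $(z_1,z_2)\mapsto(z_1,y_1^*(z_1,z_2))$ pushes $\pi$ to a coupling of $\muU^+$ and $\nu^+$. Likewise $(z_1,z_2)\mapsto(z_2,y_2^*)$ gives a coupling of $\muU^-$ and $\nu^-$, and $(y_1^*,y_2^*)_\sharp\pi\in\Pi(\nu^+,\nu^-)$. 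Bounding each of the three OT terms by these couplings gives
\[
\OT_{c_u}(\muU^+,\nu^+)+\OT_{c_u}(\muU^-,\nu^-)+\lambda\OT_{\fcost}(\nu^+,\nu^-)\le\int C_{\lambda,\mathcal D}\,d\pi .
\]
Taking $\pi=\pi^*$ gives the upper bound. Combined with the lower bound, this also shows that $(y_1^*,y_2^*)_\sharp\pi^*$ attains the infimum, which is the \emph{moreover} part.

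\emph{Lower bound ($\ge$).} Fix $\nu^\pm$ with finite objective, and choose near-optimal (or optimal) plans:
\[
\gamma^+\in\Pi(\muU^+,\nu^+),\qquad \gamma^-\in\Pi(\muU^-,\nu^-),\qquad \rho\in\Pi(\nu^+,\nu^-).
\]
By the gluing lemma, applied twice, there is a measure $\Gamma$ on $\mathcal Z\times\R\times\R\times\mathcal Z$ with law of $(z_1,y_1,y_2,z_2)$ whose pairwise marginals are $\gamma^+$, $\rho$ and $\gamma^-$. The objective then equals
\[
\int\Bigl[\tfrac{(h_1-y_1)^2}{|d_1|}+\tfrac{(h_2-y_2)^2}{|d_2|}+\lambda\fcost(y_1,y_2)\Bigr]d\Gamma\;\ge\;\int C_{\lambda,\mathcal D}(z_1,z_2)\,d\Gamma .
\]
Since the $(z_1,z_2)$-marginal of $\Gamma$ lies in $\Pi(\muU^+,\muU^-)$, the right-hand side is at least $\OT_{C_{\lambda,\mathcal D}}$. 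Letting the plans become optimal gives the inequality. Here $\muU^\pm$ put mass only on $d\neq0$, so the costs are well defined $\muU^\pm$-a.s.

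\emph{Technical points (the main obstacle).} Everything above rests on three measure-theoretic facts.
\begin{itemize}
\item \emph{Attainment of the reduced OT problem.} Existence of the minimizing $\pi$ requires $C_{\lambda,\mathcal D}$ to be lower semicontinuous and bounded below on the supports. Boundedness below follows from $c_u\ge0$ and $\fcost$ bounded below. For lower semicontinuity, I would show that the infimum in \eqref{eq:def_C_lambda} can be restricted to a compact set: the quadratic terms are coercive in $y$ and $\fcost$ is bounded below, so minimizing $y_i$ lie in a bounded set, locally uniformly in $(z_1,z_2)$ with $|d_i|$ bounded away from $0$. A standard inf-of-lsc-over-compact argument then gives lower semicontinuity, plus attainment of the inner infimum. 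Standard existence of optimal plans follows, provided the cost has a finite-cost plan; the product coupling is acceptable since $\E[Y^2]<\infty$.
\item \emph{Measurable selection.} The map $(y_1^*,y_2^*)$ is assumed in the statement. Its existence can be justified by the measurable maximum theorem, which applies because the argmin set-valued map has nonempty compact values.
\item \emph{Edge cases.} Infinite values and the possibility $\inf(P_{\lambda,\mathcal D})=+\infty$ must be handled, but they do not change either inequality.
\end{itemize}
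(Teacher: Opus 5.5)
Your proof is correct and follows the same route as the paper. The gluing lemma, applied along $\nu^+$ and $\nu^-$, gives the lower bound, and pushing an optimal $\pi^*$ forward through a measurable selection of pointwise minimizers gives the matching upper bound together with the ``moreover'' part. Your coercivity/compactness argument for lower semicontinuity of $C_{\lambda,\mathcal D}$ is a sounder justification than the paper's remark that $C_{\lambda,\mathcal D}$ is lsc ``as an infimum of lsc costs'', which fails in general without such compactness; the confinement comes from a bound such as $C_{\lambda,\mathcal D}(\zu_1,\zu_2)\le h_1^2/|d_1|+h_2^2/|d_2|+\lambda\fcost(0,0)$, which keeps minimizers in a bounded set locally uniformly.
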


\begin{proof}[Proof of \cref{lem:general_reduction}] 
  To prove this equivalence, we first establish that the pointwise optimization constitutes a lower bound for the global problem, and then demonstrate that this bound is exactly achieved.

  \paragraph{Lower bound using the Gluing lemma.}
  Let $\nu^+, \nu^- \in \mathcal P(\R)$. 
  The cost $c_u$ is lower semi-continuous and bounded from below and we assume that the fairness penalty $\fcost$ is also lower semi-continuous and bounded from below. 
  Therefore, by Theorem 1.5 from \cite{Santambrogio2015}, optimal transport plans exist for all three costs $c_u$, $\fcost$ and $C_{\lambda, \mathcal{D}}$.
  Let $\gamma_1 \in \Pi(\muU^+, \nu^+)$ and $\gamma_2 \in \Pi(\muU^-, \nu^-)$ be  optimal transport plans for the cost $c_u$, and let $\gamma_3 \in \Pi(\nu^+, \nu^-)$ be the optimal transport plan for the fairness penalty $\fcost$. 
  
  By the Gluing Lemma (e.g., Lemma 5.5 in \citet{Santambrogio2015}), we can construct a joint probability measure $\rho$ over the product space $\mathbb{R}^2 \times \mathbb{R}^2 \times \mathbb{R} \times \mathbb{R}$, with coordinates denoted $(\zu_1, \zu_2, y_1, y_2)$, such that its respective two-dimensional marginals match $\gamma_1$, $\gamma_2$, and $\gamma_3$.

  The global penalized objective \eqref{eq:P-lambda-general} can then be evaluated as a single integral over this joint measure $\rho$
  \begin{equation*}
      \int \left[ c_u(\zu_1, y_1) + c_u(\zu_2, y_2) + \lambda \fcost(y_1, y_2) \right] \dd\rho(\zu_1, \zu_2, y_1, y_2).
  \end{equation*}

  For every tuple $(\zu_1, \zu_2, y_1, y_2)$ in the support of $\rho$, the integrand is bounded from below by its pointwize infimum over all possible target values $y_1, y_2 \in \mathbb{R}$ for the given pair $(\zu_1, \zu_2)$. 
  Substituting this pointwise minimum, $C_{\lambda, \mathcal{D}}(\zu_1, \zu_2)$, into the integral yields a lower bound. 
  Furthermore, letting $\pi \in \Pi(\muU^+, \muU^-)$ denote the projection of $\rho$ onto its first two variables $(\zu_1, \zu_2)$, we obtain
  \begin{equation*}
      \OT_{c_u}(\muU^+, \nu^+) + \OT_{c_u}(\muU^-, \nu^-) + \lambda \OT_{\fcost}(\nu^+, \nu^-) \ge \int_{\mathbb{R}^2 \times \mathbb{R}^2} C_{\lambda, \mathcal{D}}(\zu_1, \zu_2) \dd\pi(\zu_1, \zu_2).
  \end{equation*}
  Taking the infimum over all valid source couplings $\pi$ establishes that $\min_{\pi} \int C_{\lambda, \mathcal{D}} \dd\pi$ is a global lower bound for \eqref{eq:P-lambda-general}.

  \paragraph{Existence and achieving the bound.}
  The cost $C_{\lambda, \mathcal D}$ is lower semi-continuous as an infimum of lower semi-continuous costs. It is also clearly lower bounded since $c_{\mathcal{D}}$ is bounded from below.
  Therefore, we can apply Theorem 1.5 of \citet{Santambrogio2015}, which gives us an optimal  coupling $\pi^* \in \Pi(\muU^+, \muU^-)$. 
We denote the objective function 
  \begin{equation*}
    \Phi_{\zu_1, \zu_2}(y_1,y_2) :=  \frac{(h_1 - y_1)^2}{|d_1|} + \frac{(h_2 - y_2)^2}{|d_2|} + \lambda \fcost(y_1, y_2).
  \end{equation*}
This function is lower semi-continuous and coercive, and therefore has a nonempty, compact set of minimizers.
  Since  $\Phi_{\zu_1, \zu_2}(y_1,y_2)$ is measurable with respect to the variable $(\zu_1,\zu_2)$,  the Measurable Maximum Theorem (see Theorem 18.19 in \citet{Aliprantis2006}) applies, which implies that  there exists a measurable map  $(y_1^*,y_2^*): \mathbb{R}^2\times \mathbb{R}^2\to \R^2$ such that $(y_1^*(z_1,z_2),y_2^*(z_1,z_2))$ is a minimizer of $ \Phi_{\zu_1, \zu_2}$ for all $(\zu_1,\zu_2)\in \mathbb{R}\times \mathbb{R}$.


  We can therefore push forward $\pi^*$ via the map $(\zu_1, \zu_2) \mapsto (\zu_1, \zu_2, y_1^*, y_2^*)$ and construct a  joint measure $\rho^*$. 
  The marginals of $\rho^*$ over its third and fouth coordinates ($y_1$ and $y_2$) define  target distributions $\nu^{+*}$ and $\nu^{-*}$. 
  Because $\rho^*$ constitutes a valid joint transport plan for these targets, the optimal transport cost is bounded above by the integral over $\rho^*$, yielding
  \begin{equation*}
      \OT_{c_u}(\muU^+, \nu^{+*}) + \OT_{c_u}(\muU^-, \nu^{-*}) + \lambda \OT_{\fcost}(\nu^{+*}, \nu^{-*}) \le \int_{\mathbb{R}^2 \times \mathbb{R}^2} C_{\lambda, \mathcal{D}}(\zu_1, \zu_2) \dd\pi^*(\zu_1, \zu_2).
  \end{equation*}
  Combined with the previously established lower bound, this proves the equality \eqref{eq:OT-reduced}, and that $\nu^{+*}$ and $\nu^{-*}$ are minimizers of \eqref{eq:P-lambda-general}.

Moreover, the previous argument clearly shows that for \emph{any} optimal transport plan $\pi^*$ and \emph{any} choice of measurable minimizers $(y_1^*,y_2^*)$, the measures $\nu^{+*}$ and $\nu^{-*}$ obtained through the previous construction are minimizers of \eqref{eq:P-lambda-general}, concluding the proof of the lemma.
  \end{proof}

\subsection{Proof of \cref{prop:relaxed_reduction} (OT-U $\mathbf{\mathcal W_2}$)} \label{app:unaware_proof_relaxed_W2}

\proprelaxedWtwo*
\begin{proof}
By \cref{lem:general_reduction}, the global problem reduces to evaluating the pointwise relaxed cost for $\mathcal{D}(y_1, y_2) = (y_1 - y_2)^2$. 
Fix $\zu_i=(h_i,d_i)$ with absolute density weights $a_i=|d_i| > 0$, and minimize
\begin{equation*}
   \Phi_{\zu_1, \zu_2}(y_1,y_2)=\frac{(h_1-y_1)^2}{a_1}+\frac{(h_2-y_2)^2}{a_2}+\lambda (y_1-y_2)^2.
\end{equation*}

Because $\Phi:= \Phi_{\zu_1, \zu_2}$ is strictly convex with respect to $y_1$ and $y_2$, the optimal pointwise targets are characterized by the first-order optimality conditions, which can be expressed as 
\[ 
\begin{cases}
  \frac{2(y_1-h_1)}{a_1}+2\lambda(y_1-y_2)=&0,\\
\frac{2(y_2-h_2)}{a_2}-2\lambda(y_1-y_2)=&0.
\end{cases}
\]

Solving the system yields the optimal pointwise targets $y_1^*$ and $y_2^*$. Let $u:=y_1^*-y_2^*$. From the optimality conditions we have
\[
\begin{cases}
  y_1^*-h_1=&-\lambda a_1 u,\\
  y_2^*-h_2=&\lambda a_2 u,
\end{cases}
\]
Subtracting these identities gives $u = (h_1-h_2)-\lambda(a_1+a_2)u$, which resolves to $u=\frac{h_1-h_2}{1+\lambda(a_1+a_2)}$. On the one hand, plugging this value into the first-order conditions yield
\begin{align}\label{eq:expression_OT_maps_W2}
\begin{cases}
    y_1^* &= \frac{h_1(1 + \lambda a_2) + \lambda h_2 a_1}{1+\lambda(a_1+a_2)},\\
    y_2^* &= \frac{h_2(1 + \lambda a_1) + \lambda h_1 a_2}{1+\lambda(a_1+a_2)}.
\end{cases}
\end{align}
On the other hand, plugging these back into $\Phi$ at the optimum yields
\begin{align*}
  C_{\lambda, c_2}(\zu_1,\zu_2) =& \Phi(y_1^*,y_2^*) = \frac{(\lambda a_1 u)^2}{a_1}+\frac{(\lambda a_2 u)^2}{a_2}+\lambda u^2 \\
  =& \lambda u^2\left(1+\lambda(a_1+a_2)\right) \\
  =& \lambda \frac{(h_1-h_2)^2}{(1+\lambda(a_1+a_2))^2}\,(1+\lambda(a_1+a_2)) \\
  =& \frac{\lambda}{1+\lambda(a_1+a_2)}(h_1-h_2)^2.
\end{align*}
According to \cref{lem:general_reduction}, a solution $(\nu^{+*},\nu^{-*})$ of \eqref{eq:P-lambda-general} is given by the marginal distributions of $(y_1^*(Z_1,Z_2),y_2^*(Z_1,Z_2))$ where $(Z_1,Z_2)\sim \pi^*$ and $\pi^*$ is as in \eqref{eqref:OT_W2_relaxed}. Our next goal is to show that under \cref{ass_non_atomic}, the random variable $y_1^*(Z_1,Z_2)$ does not depend on $Z_2$. To do so, we closely follow the arguments of the Step 2 of Theorem 4 in \cite{divol2024demographicparityregressionclassification}. The only difference between the two settings is that the cost $C_{\infty,c_2}(z_1,z_2) = (h_1-h_2)^2/(a_1+a_2)$ is replaced by the cost $C_{\lambda,c_2}(z_1,z_2)$. These two costs share the essential property that they are quadratic whenever $a_1$ and $a_2$ are fixed, which is enough to make the arguments of the proof in  \citet{divol2024demographicparityregressionclassification} work.

Let us give some additional details. The proof is based on  Kantorovich duality, see Section 2.1 in \cite{divol2024demographicparityregressionclassification} for an introduction to this notion. There exists a function $\psi$, called a  Kantorovich potential, that satisfies the following properties. Let $\phi:\zu_2\mapsto \sup_{\zu_1\in \mathcal Z} (\psi(\zu_1)-C_{\lambda,c_2}(z_1,z_2))$ be the $C_{\lambda,c_2}$-transform of $\psi$. Then, 
\begin{enumerate}
    \item for all $\zu_1$, $\psi(\zu_1)=\inf_{\zu_2} (\phi(\zu_2)+C_{\lambda,c_2}(z_1,z_2))$;
    \item $\pi^*$ is supported on the $C_{\lambda,c_2}^u$-subdifferential of a Kantorovich potential $\psi$, that is the set
\[
\Gamma = \{(\zu_1,\zu_2)\in \mathcal Z\times \mathcal Z:\ \psi(\zu_1)-\phi(\zu_2)= C_{\lambda,c_2}(z_1,z_2)\}.
\]
\end{enumerate}

The arguments used to prove Lemma 3 in \citet{divol2024demographicparityregressionclassification} can be reproduced to show that under \cref{ass_non_atomic},  the Kantorovich potential $\psi$ is differentiable in the direction $h_1$ for  $\muU^+$-almost every $\zu_1=(h_1,d_1)$ (this follows from the fact that, when $(d_1,d_2)$ is fixed, the cost $C_{\lambda,c_2}(z_1,z_2)$ is proportional to $(h_1-h_2)^2$).

Let $\zu_1$ be a point in the support of $\mu_{\eta,\Delta}^+$ where $\partial_{h_1}\psi$ exists, and let $\zu_2$ be such that $(\zu_1,\zu_2)$ is in the support of $\pi^*$ (and therefore in $\Gamma$). The definitions of $\phi$ and $\Gamma$ imply that the function $z\mapsto \psi(\zu)-C_{\lambda,c_2}(z,z_2)$ attains its maximum at $\zu_1$. Differentiating, we find that
\begin{align*}
    \partial_{h_1}\psi(\zu_1) =& \partial_{h_1}C_{\lambda, c_2}^u(\zu_1, \zu_2)  =\frac{2\lambda(h_1-h_2)}{1+\lambda(a_1+a_2)}= 2\lambda u.
\end{align*}
Isolating $u = \frac{1}{2\lambda}\partial_{h_1}\psi(\zu_1)$ and substituting it into the optimality condition for $y_1^*$ gives
\begin{align*}
    y_1^*(\zu_1,\zu_2) =& h_1 - \lambda a_1 u = h_1 - \frac{\vert d_1 \vert}{2} \partial_{h_1}\psi(\zu_1).
\end{align*}
In other words, the value $y_1^*(\zu_1,\zu_2)$ does not depend on the choice of $\zu_2$ with $(\zu_1,\zu_2)$ in the support of $\pi^*$ for $\mu_{\eta,\Delta}^+$-almost all $\zu_1$, which is what we wanted to show. A symmetric argument holds for $y_2^*$, concluding to the existence of deterministic maps $T_{\lambda, \mathcal W_2^2}^+$ and $T_{\lambda, \mathcal W_2^2}^-$ solving \eqref{eq:P-lambda-general}.

Together with \cref{lem:equiv_erm_ot}  and \cref{lem:general_reduction}, this proves \cref{lem:reduc_OT} for the $\mathcal{W}_2^2$ unfairness penalty. Moreover, the proof of \cref{prop:relaxed_reduction} is concluded by noticing that the expression of the optimal transport maps are given by \cref{eq:expression_OT_maps_W2}. 
\end{proof}

Let us note that \Cref{prop:relaxed_reduction} implies in particular the result of \Cref{lem:reduc_OT} in the case $c_{\mathcal D}=c_2$.

\begin{remark}[Asymptotic compatibility with exact fairness]\label{rem:lambda_to_infty}
As a sanity check, let us  verify that the solution of the problem \eqref{eq:P-lambda-general} converge to the solution of the fair regression problem  \eqref{eq_Pinfinity} as $\lambda\to\infty$. 
  
Consider the relaxed objective \eqref{eq:P-lambda-general} and its reduced form \cref{eq:OT-reduced} with the pointwise cost $C_{\lambda, c_2}^u(\zu_1,\zu_2)$.
  Let $\zu_i=(h_i,d_i)$ with $a_i=\vert d_i\vert>0$.
  The inner minimization admits the explicit map evaluations given in \cref{eq:y-star}, the gap between the predictions strictly vanishes
  \begin{equation}\label{eq:collapse_y1y2}
  T^+_{\lambda,\mathcal W_2^2}(\zu_1) - T^-_{\lambda,\mathcal W_2^2}(\zu_2) = \frac{h_1-h_2}{1+\lambda(a_1+a_2)} \xrightarrow[\lambda\to\infty]{}0,
  \end{equation}
  and moreover, both maps naturally converge to the weighted barycenter
  \begin{equation}\label{eq:collapse_to_bary}
  T^+_{\lambda,\mathcal W_2^2}(\zu_1),\ T^-_{\lambda,\mathcal W_2^2}(\zu_2)
  \xrightarrow[\lambda\to\infty]{}
  \frac{a_2h_1+a_1h_2}{a_1+a_2}
  =:y^*(\zu_1,\zu_2),
  \end{equation}
  where $y^*(\zu_1,\zu_2)$ is the minimizer of the unrelaxed two-to-one cost $\inf_{y}\{c_u(\zu_1,y)+c_u(\zu_2,y)\}$.
  
  At the level of pairwise costs, the closed form expression of the cost yields the pointwise convergence
  \begin{equation*}
  C_{\lambda, c_2}^u(\zu_1,\zu_2)=\frac{\lambda}{1+\lambda(a_1+a_2)}(h_1-h_2)^2
  \;\xrightarrow[\lambda\to\infty]{}\;
  \frac{(h_1-h_2)^2}{a_1+a_2}
  =:C_\infty(\zu_1,\zu_2),
\end{equation*}
  which is the unrelaxed OT cost. 

  Let now $\pi_\lambda^*\in\Pi(\muU^+,\muU^-)$ be an optimal coupling for the reduced problem in \cref{eq:OT-reduced}. 
  Denote by $V_\lambda$  the optimal value in \cref{eq:OT-reduced}. Then $V_\lambda$ is nondecreasing in $\lambda$ and  satisfies $V_\lambda\leq V_\infty$, where $V_\infty$ is the optimal value of the unrelaxed problem (since $C_{\lambda,c_2}^u\leq C_\infty$). We claim that we actually have
  \[
V_\lambda  \;\xrightarrow[\lambda\to\infty]{}\; V_\infty.
  \]
  Indeed, if were not the case, we could find a subsequence $(\lambda_k)$ such that $V_\lambda$ converges to some smaller limit along this subsequence. 
  As the set $\Pi(\muU^+,\muU^-)$ is compact for the weak convergence \cite[Lemma 4.4]{villani2008optimal}, we may find a transport plan $\pi_\infty^*\in\Pi(\muU^+,\muU^-)$ such that a subsequence of $\pi_{\lambda_k}^*$ converges weakly towards $\pi_\infty^*$. Let us still denote this subsequence by $\pi_{\lambda_k}^*$ for the sake of simplicity. We can then apply Fatou's lemma \cite{feinberg2014fatou} to find that
  \[
 V_\infty>\liminf_{k \to \infty}V_{\lambda_k}= \liminf_{k \to \infty} \int C_{\lambda_k,c_2}^u \dd \pi_{\lambda_k}^* \geq \int C_\infty \dd \pi_\infty^* \geq V_\infty,
  \]
  which yields a contradiction.
  
  Finally, define the associated optimal outputs by
\begin{equation*}
Y_{1,\lambda}:=T^+_{\lambda,\mathcal W_2^2}(Z_1),\qquad Y_{2,\lambda}:=T^-_{\lambda,\mathcal W_2^2}(Z_2),
\qquad (Z_1,Z_2)\sim \pi_\lambda^*,
\end{equation*}
so that $\nu_{1,\lambda}^*=\Law(Y_{1,\lambda})=(T^+_{\lambda,\mathcal W_2^2})_\#\muU^+$ and $\nu_{2,\lambda}^*=\Law(Y_{2,\lambda})=(T^-_{\lambda,\mathcal W_2^2})_\#\muU^-$.
Consider moreover the coupling between these two output laws induced by the same construction,
\begin{equation*}
  \gamma_\lambda:=(T^+_{\lambda,\mathcal W_2^2},T^-_{\lambda,\mathcal W_2^2})_\#\pi_\lambda^*\in\Pi(\nu_{1,\lambda}^*,\nu_{2,\lambda}^*).
\end{equation*}

By definition of $\mathcal W_2$, we have
\begin{align*}
 \mathcal W_2^2(\nu_{1,\lambda}^*,\nu_{2,\lambda}^*)
&\;\le\;
\int_{\R^2} (y_1-y_2)^2\,d\gamma_\lambda(y_1,y_2) \;=\;
\E_{\pi_\lambda^*}\!\left[(Y_{1,\lambda}-Y_{2,\lambda})^2\right] \\
&\;=\; \int \frac{(h_1-h_2)^2}{(1+\lambda(|d_1|+|d_2|))^2} \dd \pi_\lambda^*(z_1,z_2) \\
&\leq \frac{2}{\lambda} \left(\int \frac{h_1^2}{|d_1|} \dd \muU^+(z_1) + \int \frac{h_1^2}{|d_1|} \dd \muU^-\right)(z_2),
\end{align*}
where we use the inequality $(h_1-h_2)^2\leq 2h_1^2+2h_2^2$. Since $\int \frac{h^2}{|d|} \dd \muU^{\pm}(z) = \int_{\mathcal X_\pm} \eta(x)^2 \dd \chi(x)\leq \E[\eta(X)^2]<\infty$, we have $\mathcal W_2^2(\nu_{1,\lambda}^*,\nu_{2,\lambda}^*)\to 0$ as $\lambda\to \infty$. 
In particular, the two output laws become arbitrarily close in $\mathcal W_2$.

Therefore the relaxed solutions asymptotically enforce a single common output distribution; if the unrelaxed barycenter is unique, this common limit must coincide with the unrelaxed barycenter law.
\end{remark}

\subsection{Proof of \cref{prop:tv_relaxed_reduction} (OT-U TV)}  
\label{app:unaware_proof_relaxed_TV}

\proprelaxedTV*

\begin{proof}[Proof of \cref{prop:tv_relaxed_reduction}]
  By \cref{lem:general_reduction}, the global unaware problem reduces to evaluating the pointwise relaxed cost $C_{\lambda, c_0}^u(\zu_1,\zu_2)$. For any fixed pair $\zu_1=(h_1,d_1)$ and $\zu_2=(h_2,d_2)$ with absolute density weights $a_1=|d_1| > 0$ and $a_2=|d_2| > 0$, we seek to minimize the objective
  \begin{equation*}
      \Phi(y_1, y_2) = \frac{(h_1-y_1)^2}{a_1} + \frac{(h_2-y_2)^2}{a_2} + \lambda \mathbbm{1}_{\{y_1 \neq y_2\}}.
  \end{equation*}
  We divide the pointwise optimization into two mutually exclusive cases depending on the indicator function.

  \medskip\noindent
  \textit{Case 1: We restrict the domain to $y_1 \neq y_2$.} \\
  The indicator evaluates to $1$, yielding the objective $\Phi_{y_1 \neq y_2}(y_1, y_2) = \frac{(h_1-y_1)^2}{a_1} + \frac{(h_2-y_2)^2}{a_2} + \lambda$. 
  Because $a_1, a_2 > 0$, the unconstrained minimum of this convex quadratic is trivially achieved at $y_1^* = h_1$ and $y_2^* = h_2$. 
  Assuming $h_1 \neq h_2$, this solution respects the domain restriction, and the minimum cost for this branch is exactly $\lambda$.

  \medskip\noindent
  \textit{Case 2: We restrict the domain to $y_1 = y_2 = y$.} \\
  The indicator evaluates to $0$, simplifying the objective to a single-variable quadratic $\Phi_{y_1 = y_2}(y) = \frac{(h_1-y)^2}{a_1} + \frac{(h_2-y)^2}{a_2}$. 
  The first-order optimality condition is $-\frac{2(h_1-y)}{a_1} - \frac{2(h_2-y)}{a_2} = 0$. Solving for $y$ yields the weighted barycenter
  \begin{equation}\label{eq:def_y*}
      y^* = \frac{(1/a_1)h_1 + (1/a_2)h_2}{1/a_1 + 1/a_2} = \frac{a_2 h_1 + a_1 h_2}{a_1 + a_2}.
  \end{equation}
  We now plug $y^*$ back into the objective 
  \begin{equation*}
      \Phi(y^*, y^*) = \frac{1}{a_1} \left( h_1 - \frac{a_2 h_1 + a_1 h_2}{a_1 + a_2} \right)^2 + \frac{1}{a_2} \left( h_2 - \frac{a_2 h_1 + a_1 h_2}{a_1 + a_2} \right)^2.
  \end{equation*}
  Simplifying the terms inside the squares gives $\frac{a_1(h_1 - h_2)}{a_1 + a_2}$ for the first and $\frac{-a_2(h_1 - h_2)}{a_1 + a_2}$ for the second. 
  Squaring these and multiplying by their respective outer weights yields
  \begin{align*}
      \Phi(y^*, y^*) =& \frac{1}{a_1} \frac{a_1^2 (h_1-h_2)^2}{(a_1+a_2)^2} + \frac{1}{a_2} \frac{a_2^2 (h_1-h_2)^2}{(a_1+a_2)^2} \\
      & = \frac{a_1 + a_2}{(a_1+a_2)^2} (h_1-h_2)^2 \\
      & = \frac{(h_1-h_2)^2}{a_1+a_2}.
  \end{align*}

  \medskip\noindent
  \paragraph{Conclusion.} 
According to \cref{lem:general_reduction}, a solution $(\nu^{+*},\nu^{-*})$ of \eqref{eq:P-lambda-general} is given by the marginals distributions of $(y_1^*(Z_1,Z_2),y_2^*(Z_1,Z_2))$ where $(Z_1,Z_2)\sim \pi^*$ and $\pi^*$ is as in \eqref{eq:OT_TV_relaxed}. Precisely, the map $(y_1^*,y_2^*)$ is defined by letting $(y_1^*(z_1,z_2),y_2^*(z_1,z_2)) = (h_1,h_2)$ if $C^u_{\lambda,c_0}(z_1,z_2)=\lambda$, and   $(y_1^*(z_1,z_2),y_2^*(z_1,z_2)) =(y^*,y^*)$ with $y^*$ defined in \eqref{eq:def_y*} otherwise.

Our next goal is to show that under \cref{ass_non_atomic}, the random variable $y_1^*(Z_1,Z_2)$ does not depend on $Z_2$. We use once again Kantorovich duality.

Recall that there exists a function $\psi$, called a  Kantorovich potential, that satisfies the following properties. Let $\phi:\zu_2\mapsto \sup_{\zu_1\in \mathcal Z} (\psi(\zu_1)-C_{\lambda,c_0}(z_1,z_2))$ be the $C_{\lambda,c_0}$-transform of $\psi$. Then, 
$\pi^*$ is supported on the $C_{\lambda,c_0}^u$-subdifferential of a Kantorovich potential $\psi$, that is the set
\[
\Gamma = \{(\zu_1,\zu_2)\in \mathcal Z\times \mathcal Z:\ \psi(\zu_1)-\phi(\zu_2)= C_{\lambda,c_0}(z_1,z_2)\}.
\]
Furthermore, the function $\psi$ is $C_{\lambda,c_0}$-concave, meaning that for all $\zu_1$, $\psi(\zu_1)=\inf_{\zu_2}(\phi(\zu_2)+C_{\lambda,c_0}(z_1,z_2))$. Using the definition of $C_{\lambda,c_0}$, we find that
\begin{align*}
\psi(\zu_1) =& \inf_{z_2}(\phi(z_2)+ \min(C_\infty(z_1,z_2), \lambda))= \min\left(\tilde \psi(z_1), \inf_{\zu_2}\phi(\zu_2)+\lambda \right)
\end{align*}
where $\tilde \psi(z_1)=\inf_{\zu_2}(\phi(\zu_2)+ C_\infty(\zu_1,\zu_2))$ defines a $C_\infty$-concave function for $C_\infty(\zu_1,\zu_2) = \frac{(h_1-h_2)^2}{a_1+a_2}$. According to Lemma 3 in \cite{divol2024demographicparityregressionclassification}, under \cref{ass_non_atomic}, the function $\tilde \psi$ is $\mu_{\eta,\Delta}^+$-almost everywhere differentiable at $z_1=(h_1,d_1)$ in the direction $h_1$.

Let $\mathcal Z_< = \{z_1\in\mathcal Z:\ \psi(z_1)= \tilde \psi(z_1)\}$. 
If $(z_1,z_2)\in\Gamma$ with $z_1\in \mathcal Z_<$, then 
\begin{align*}
  C_\infty(z_1,z_2)&\geq  \min(\lambda,C_\infty(z_1,z_2)) =  C_{\lambda,c_0}(z_1,z_2) = \psi(z_1)-\phi(z_2)\\
  =& \tilde \psi(z_1)- \phi(z_2) \geq C_\infty(z_1,z_2).
\end{align*}
Thus, $\psi(z_1)-\phi(z_2)=C_\infty(z_1,z_2)$. In other words, the set $\{(z_1,z_2)\in \Gamma:\ z_1\in \mathcal Z_<\}$ is included in the $C_\infty$-subdifferential of $\tilde\psi$. It is proven in \cite{divol2024demographicparityregressionclassification} (as explained in the previous section) that if $z_1$ is a point where $\tilde \psi$ is differentiable at $z_1$ in the direction $h_1$, and if $(z_1,z_2)$ is in the $C_\infty$-subdifferential of $\tilde\psi$, then the quantity
\[
y_1^*(z_1,z_2)=\frac{a_2h_1+a_1h_2}{a_1+a_2}
\]
does not depend on the choice of $z_2$. Thus,  the map $T_{\lambda,\TV}^+(z_1)= \frac{a_2h_1+a_1h_2}{a_1+a_2}$ is well-defined for $\mu_{\eta,\Delta}^+$-almost all $z_1\in \mathcal Z_<$.

If $z_1\in \mathcal Z_= = \mathcal Z\backslash \mathcal Z_<$, then $\tilde \psi(z_1)>\inf_{z_2}\phi(z_2)+\lambda$. Thus, if $(z_1,z_2)\in \Gamma$, then
\begin{align*}
  \lambda &\geq  \min(\lambda,C_\infty(z_1,z_2)) =  C_{\lambda,c_0}(z_1,z_2) = \psi(z_1)-\phi(z_2)\\
  =& \inf_{z_2'}\phi(z_2')+\lambda- \phi(z_2) \geq \lambda.
\end{align*}
Thus $C_{\lambda,c_0}(z_1,z_2)=\lambda$ and thus $y_1^*(z_1,z_2)=h_1$ clearly only depends on $z_1$. This shows that the map $T_{\lambda,\TV}^+$ is well-defined $\mu_{\eta,\Delta}^+$-almost everywhere.

Similarly, we show that the map $T_{\lambda,\TV}^-$ is well-defined $\mu_{\eta,\Delta}^-$-almost everywhere.

We then conclude using \Cref{lem:equiv_erm_ot} to show that solutions of \Cref{eq:erm_penalized} take the form described in the lemma.
\end{proof}

Let us note that \Cref{prop:relaxed_reduction} implies in particular the result of \Cref{lem:reduc_OT} in the case $c_{\mathcal D}=c_0$. 
\subsection{Properties of the fairness penalty}\label{app:penalty}

Let $m = \TV(\chi^+, \chi^-)/2$. Because the Total Variation between two probability measures is bounded by 2, we have $m \leq 1$. We begin by proving the stronger result that $m \mathcal{U}_{\mathcal{D}}(f) \geq D(\mathcal{L}(f(X)\vert S = +), \mathcal{L}(f(X)\vert S = -))$. Since $m\le 1$ this immediately implies the looser bound $ \mathcal{U}_{\mathcal{D}}(f) \geq D(\mathcal{L}(f(X)\vert S = +), \mathcal{L}(f(X)\vert S = -))$ stated in the claim.  Recall that
\begin{align*}
    \mathcal{U}_{\mathcal{D}}(f) = \OT_{c_{\mathcal{D}}}(f_{\#}(\chi^+-\chi^-)_+, f_{\#}(\chi^+-\chi^-)_-).
\end{align*}
 To ease notations, let us denote $\chi_{JD}^+ = m(\chi^+-\chi^-)_+$, $\chi_{JD}^- = m(\chi^+-\chi^-)_-$, and introduce the positive measure $\tilde{\chi}_{JD} 
= \chi^- - \chi_{JD}^- = \chi^+ - \chi_{JD}^+$. Then, we have $\chi^+ = \chi_{JD}^+ + \tilde{\chi}_{JD}$ and $\chi^- = \chi_{JD}^- + \tilde{\chi}_{JD}$.

Now, denote $\nu_{JD}^+ = f_{\#}\chi_{JD}^+$, $\nu_{JD}^- = f_{\#}\chi_{JD}^-$, and $\tilde{\nu}_{JD} = f_{\#}\tilde{\chi}_{JD}$. Similarly, denote $\nu^+ = f_{\#}\chi^+$ and $\nu^- = f_{\#}\chi^-$. Since we have
$\chi^+ = \chi_{JD}^+ + \tilde{\chi}_{JD}$ and $\chi^- = \chi_{JD}^- + \tilde{\chi}_{JD}$, it must hold that $\nu^+ = \nu_{JD}^+ + \tilde{\nu}_{JD}$ and $\nu^- = \nu_{JD}^- + \tilde{\nu}_{JD}$. 

Let $\pi$ be the optimal coupling solution to the problem $\OT_{c_{\mathcal{D}}}(f_{\#}(\chi^+-\chi^-)_+, f_{\#}(\chi^+-\chi^-)_-)$. Then, $m\pi$ is a coupling between $\nu_{JD}^+$ and $\nu_{JD}^-$. Define $\tilde{\pi} = (I_d, I_d)_{\#}\tilde{\nu}_{JD}$, then 
\begin{align*}
    \int_{\mathbb{R}\times\mathbb{R}}c_{\mathcal{D}}(y_1, y_2)\dd (m\pi + \tilde{\pi})(y_1, y_2) &= \int_{\mathbb{R}\times\mathbb{R}}c_{\mathcal{D}}(y_1, y_2)\dd m\pi(y_1, y_2) + \int_{\mathbb{R}\times\mathbb{R}}c_{\mathcal{D}}(y_1, y_2)\dd\tilde{\pi}(y_1, y_2) \\
    &= m\int_{\mathbb{R}\times\mathbb{R}}c_{\mathcal{D}}(y_1, y_2)\dd\pi(y_1, y_2) + 0\\
    & = m\mathcal{U}_{\mathcal{D}}(f).
\end{align*}
On the other hand, $m\pi + \tilde{\pi}$ is a coupling with marginals $\nu^+, \nu^-$, so
\begin{align*}
    \int_{\mathbb{R}\times\mathbb{R}}c_{\mathcal{D}}(y_1, y_2)\dd (m\pi + \tilde{\pi})(y_1, y_2) \geq \OT_{c_{\mathcal{D}}} (\nu^+, \nu^-) = D(\mathcal{L}(f(X)\vert S = +), \mathcal{L}(f(X)\vert S = -))
\end{align*}
which proves our first claim.

Now, the second claim follows immediately by noticing that in the awareness setting, $\TV(\chi^+, \chi^-) = 2$ and $\chi^+ = (\chi^+ - \chi^-)_+$, $\chi^- = (\chi^+ - \chi^-)_-$.

\section{Proofs for the relaxed aware setting}
\label{app:aware_corollaries}

As discussed in \cref{remark:unifying_aware_unaware}, the aware setting corresponds to the boundary case of Problem \eqref{eq:P-lambda-general}, where the sensitive attribute is observed, that is we observe $(X,S)$ instead of $X$.
In that case, the Jordan decomposition partitions the space into the demographic groups, yielding uniform absolute density weights $a_1 = 1/p^+$ for group $S=+$ and $a_2 = 1/p^-$ for group $S=-$. Then, $\chi^+$ and $\chi^-$ have separate supports, with $\Delta(x,+)=1/p^+$ for $(x,+)$ in the support of $\chi^+$, $\Delta(x,-)=-1/p^-$ for $(x,-)$ in the support of $\chi^-$. The measure $\mu_{\eta,\Delta}^{+}$ is simply the law of $(\eta(X,+),1/p^+)$ where $X\sim \chi^+$, and $\mu_{\eta,\Delta}^{-}$ is  the law of $(\eta(X,-),-1/p^-)$ where $X\sim \chi^-$. 

The two-dimensional costs described in \Cref{prop:relaxed_reduction} and \Cref{prop:tv_relaxed_reduction} now depend effectively only on $(h_1,h_2)$, that is these costs  become one-dimensional. In that case, the solution of the corresponding optimal transport problem can be formulated in a simpler fashion. Let us describe it  more precisely.


\paragraph{Exact Aware Solution.} 
First, in the limit $\lambda\to\infty$ (no penalization), as established by \citet{chzhen2020fairregressionwassersteinbarycenters}, the exact Wasserstein barycenter for the 1D aware setting admits a closed-form. 
The fair prediction $f^*_{\infty}(x, s)$ for an individual is obtained by evaluating their relative rank within their own demographic group via the cumulative distribution function $F_{\muA^s}$, and mapping it to the weighted quantiles of both groups
\begin{equation*}
    f^*_{\infty}(x, s) =q^*\circ F_{\muA^s}\left(\eta(x, s)\right) :=\left( p^+ Q_{\muA^+} + p^- Q_{\muA^-} \right) \circ F_{\muA^s}\left(\eta(x, s)\right)
\end{equation*}
where $q^*:t\in (0,1)\mapsto p^+ Q_{\muA^+}(t) + p^- Q_{\muA^-}(t)$ is the barycenter map.


\subsection{Aware $\mathcal{W}_2$ relaxation}

Here we consider the fairness penalty $\mathcal{D}(\cdot, \cdot) = OT_{c_2}(\cdot, \cdot)$ for $c_2(y, y') = (y-y')^2$.

\begin{corollary}[\textbf{OT-A \textbf{$\mathcal W_2$}}]
  \label{cor:aware_w2}
  Under \cref{ass_non_atomic}, in the awareness setting $S$, the solution to \eqref{eq:erm_penalized} is given by the optimal predictor $f^*_{\lambda, \mathcal{W}_2}(x,s)$, which interpolates between the unconstrained predictor $\eta(x,s)$ and the exact fair predictor $f^*_{\infty}(x,s)$ 
  \begin{equation*}
      f^*_{\lambda, \mathcal{W}_2}(x, s) = (1 - \alpha) f^*_{\infty}(x, s) + \alpha \eta(x, s), \quad \text{where} \quad \alpha = \frac{p^+ p^-}{p^+ p^- + \lambda}.
  \end{equation*}
  This explicitly recovers the closed-form geodesic interpolation derived by \citet{chzhen2022minimaxframeworkquantifyingriskfairness}.
\end{corollary}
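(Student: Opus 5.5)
We specialize \cref{prop:relaxed_reduction} to the awareness setting described at the start of \cref{app:aware_corollaries}. There, $\muU^+$ is the law of $(\eta(X,+),1/p^+)$ with $X\sim\chi^+$, and $\muU^-$ is the law of $(\eta(X,-),-1/p^-)$ with $X\sim\chi^-$. Hence on the supports we always have $a_1=|d_1|=1/p^+$ and $a_2=|d_2|=1/p^-$. \Cref{ass_non_atomic} then reduces to non-atomicity of $\muA^+$ and $\muA^-$, as noted in \cref{remark:unifying_aware_unaware}. The proof has three steps: identify the optimal coupling, simplify the closed-form maps \eqref{eq:y-star}, and recognize the fair predictor $f^*_\infty$ inside them.

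\textbf{Step 1 (the coupling).} With $a_1,a_2$ fixed, the cost of \cref{prop:relaxed_reduction} becomes
\[
C^u_{\lambda,c_2}(\zu_1,\zu_2)=\kappa\,(h_1-h_2)^2, \qquad \kappa=\frac{\lambda}{1+\lambda(1/p^++1/p^-)}>0 .
\]
Problem \eqref{eqref:OT_W2_relaxed} is therefore, up to the constant $\kappa$, the one-dimensional quadratic OT problem between $\muA^+$ and $\muA^-$. I would invoke the classical fact that for a non-atomic source this problem has a unique optimal plan, namely the monotone (quantile) coupling. It is induced by the map $h_1\mapsto h_2=Q_{\muA^-}(F_{\muA^+}(h_1))$, and symmetrically $h_1=Q_{\muA^+}(F_{\muA^-}(h_2))$ on the support.

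\textbf{Step 2 (algebra).} I substitute $a_1=1/p^+$ and $a_2=1/p^-$ into \eqref{eq:y-star} and multiply numerator and denominator by $p^+p^-$. Using $p^++p^-=1$, the denominator becomes $p^+p^-+\lambda$, and
\[
T^+_{\lambda,\mathcal W_2^2}(\zu_1)=\frac{(p^+p^-+\lambda p^+)h_1+\lambda p^- h_2}{p^+p^-+\lambda}=\alpha h_1+(1-\alpha)\bigl(p^+h_1+p^-h_2\bigr),
\]
with $\alpha=p^+p^-/(p^+p^-+\lambda)$. The identity is checked coefficient by coefficient:
\begin{itemize}
\item the coefficient of $h_1$ is $\alpha+(1-\alpha)p^+=(p^+p^-+\lambda p^+)/(p^+p^-+\lambda)$;
\item the coefficient of $h_2$ is $(1-\alpha)p^-=\lambda p^-/(p^+p^-+\lambda)$.
\end{itemize}
The same computation gives $T^-_{\lambda,\mathcal W_2^2}(\zu_2)=\alpha h_2+(1-\alpha)(p^+h_1+p^-h_2)$.

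\textbf{Step 3 (identifying $f^*_\infty$).} Let $h_1=\eta(x,+)$ and $t=F_{\muA^+}(h_1)$. Because $\muA^+$ is non-atomic, $Q_{\muA^+}(F_{\muA^+}(h_1))=h_1$ for $\muA^+$-almost every $h_1$. Combined with the coupling from Step 1, this gives
\[
p^+h_1+p^-h_2=p^+Q_{\muA^+}(t)+p^-Q_{\muA^-}(t)=q^*(t)=f^*_\infty(x,+).
\]
Then \cref{eq:sol_from_transport_maps} yields $f^*_{\lambda,\mathcal W_2}(x,+)=(1-\alpha)f^*_\infty(x,+)+\alpha\,\eta(x,+)$, and the case $s=-$ is symmetric.

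\textbf{Main obstacle.} The only delicate point is Step 1 together with the almost-everywhere statement in Step 3. One has to justify that the optimal plan is the quantile coupling, and that the quantile identities hold off a null set, so that \eqref{eq:y-star} (which holds only on the support of $\pi^*$) can be evaluated through $F$ and $Q$. I would rely on the standard one-dimensional OT theory: cyclical monotonicity of optimal plans for convex costs, plus non-atomicity of $\muA^\pm$. I would also note that the well-definedness of the maps is already provided by \cref{prop:relaxed_reduction}.
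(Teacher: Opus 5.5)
Your proposal is correct and follows essentially the same route as the paper's own derivation. You substitute $a_1=1/p^+$, $a_2=1/p^-$ into \cref{prop:relaxed_reduction}, note that the cost becomes $\alpha\lambda(h_1-h_2)^2$ so the optimal plan is the monotone quantile coupling, and use $\alpha\lambda=(1-\alpha)p^+p^-$ to rewrite $T^\pm_{\lambda,\mathcal W_2^2}$ as $\alpha h_s+(1-\alpha)q^*(t)$. The paper also cross-checks the result against Chzhen and Schreuder's Propositions 4.1 and 4.7 via the rescaling $\lambda_C=\lambda/(p^+p^-)$, which your argument does not need.
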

  
\begin{proof}
  In the framework of \citet{chzhen2022minimaxframeworkquantifyingriskfairness}, the fairness penalty is defined as the weighted variance $p^+ p^- \mathcal{W}_2^2(\nu^+, \nu^-)$. Thus, our $\lambda$-penalized objective is  equivalent to their formulation under the effective penalty $\lambda_{C} = \lambda / (p^+ p^-)$. 
  
  According to their Proposition 4.7, this problem admits a relative improvement governed by an interpolation parameter $\rho = (1 + \lambda_{C})^{-2}$. 
  Following their Proposition 4.1, the optimal relaxed predictor is uniquely defined as the convex combination of the perfectly fair projection and the unconstrained base predictor, weighted by $1 - \sqrt{\rho}$ and $\sqrt{\rho}$ respectively. 
  Substituting our penalty yields exactly
  \begin{align*}
      \sqrt{\rho} 
      =& \left(1 + \frac{\lambda}{p^+ p^-}\right)^{-1} = \frac{p^+ p^-}{p^+ p^- + \lambda} = \alpha.
  \end{align*}

  We now show that our generalized unaware reduction perfectly recovers this geometric interpolation from the maps defined in \cref{prop:relaxed_reduction}. 
  Substituting the deterministic aware weights $a_1 = 1/p^+$ and $a_2 = 1/p^-$, the interpolation denominator becomes
  \begin{align*}
      1 + \lambda(a_1 + a_2) 
      =& 1 + \lambda\left(\frac{p^+ + p^-}{p^+ p^-}\right) ,
  \end{align*}
  and because $p^+ + p^- = 1$, this simplifies to
  \begin{align*}     
     1 + \lambda(a_1 + a_2)
      =& 1 + \frac{\lambda}{p^+ p^-} = \alpha^{-1}.
  \end{align*}
  Thus,  $C_{\lambda,c_2}^u(z_1,z_2)= \alpha\lambda (h_1-h_2)^2$ for $z_1=(h_1,1/p^+)$ and $z_2=(h_2,-1/p^-)$. 

The optimal transport problem minimizing this cost is the monotone matching, obtained by letting $h_1 = Q_{\muA^+}(t)$ and $h_2 = Q_{\muA^-}(t)$ for a given $t\in (0,1)$.
  
  We then find from \cref{eq:y-star}
  \begin{align*}
      T^+_{\lambda,\mathcal{W}_2^2}(\zu_1) =& \alpha \left[ \left(1 + \frac{\lambda}{p^-}\right) Q_{\muA^+}(t) + \frac{\lambda}{p^+} Q_{\muA^-}(t) \right] \\
            =& \alpha Q_{\muA^+}(t) + \alpha \lambda \left( \frac{Q_{\muA^+}(t)}{p^-} + \frac{Q_{\muA^-}(t)}{p^+} \right).
  \end{align*}
  Recognizing from the definition of $\alpha$ that $\alpha \lambda = (1-\alpha)p^+ p^-$, the second term simplifies to $(1-\alpha)(p^+ Q_{\muA^+}(t) + p^- Q_{\muA^-}(t))$, which corresponds to $(1-\alpha)q^*(t)$, where $q^*$ is the strict Wasserstein barycenter.

Evaluating this optimal quantile mapping at the individual's relative rank $t_{x,+} = F_{\muA^+}(\eta(x, +))$ recovers the predictor formulation for the positive group
  \begin{equation*}
      f^*_{\lambda, \mathcal{W}_2}(x, +) = (1-\alpha)f^*_{\infty}(x, +) + \alpha \eta(x, +),
  \end{equation*}
  and symmetrically for group $S=-$ using $T^-$. Thus, the unaware mapping retrieves the aware geodesic interpolation.
\end{proof}

\subsection{Aware TV relaxation}
Here we consider the fairness penalty $\mathcal{D}(\cdot, \cdot) = OT_{c_0}(\cdot, \cdot)$ for $c_0(y, y') = \mathbbm{1}_{\{y \neq y'\}}$.


  
  \begin{restatable}[\textbf{OT-A TV}]{corollary}{corAwareTV}
    \label{cor:aware_tv}
    Under \cref{ass_non_atomic}, in the awareness setting $S \in \{+, -\}$, the solution to \eqref{eq:erm_penalized} is characterized by the optimal transport plan $\pi^* \in \Pi(\muA^+, \muA^-)$ that minimizes the cost
    \begin{equation}
        \pi^* \in \arg\min_{\pi \in \Pi(\muA^+, \muA^-)} \int_{\mathbb{R} \times \mathbb{R}} \min \left( \lambda, \; p^+ p^- (h_1 - h_2)^2 \right) \dd\pi(h_1, h_2).
    \end{equation}
    For any pair $(h_1, h_2)$ in the support of $\pi^*$, the solution to \cref{eq:erm_penalized} is given by 
    \begin{equation*}
        \begin{cases}
            f^*_{\lambda, \TV}(h_1, +) = f^*_{\lambda, \TV}(h_2, -) = p^+ h_1 + p^- h_2 & \text{if} \quad p^+ p^- (h_1 - h_2)^2 \le \lambda, \\[2ex]
            f^*_{\lambda, \TV}(h_1, +) = h_1, \quad f^*_{\lambda, \TV}(h_2, -) = h_2 & \text{if} \quad p^+ p^- (h_1 - h_2)^2 > \lambda.
        \end{cases}
    \end{equation*}
  \end{restatable}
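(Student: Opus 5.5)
The plan is to specialize \cref{prop:tv_relaxed_reduction} to the aware setting, in the same way the proof of \cref{cor:aware_w2} specialized \cref{prop:relaxed_reduction}.

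First, I will invoke the reduction of \cref{remark:unifying_aware_unaware}: $S$ is included in the features, so $\Delta \equiv 1/p^+$ on the support of $\chi^+$ and $\Delta\equiv -1/p^-$ on the support of $\chi^-$. The Jordan decomposition then gives $(\chi^+-\chi^-)_\pm=\chi^\pm$. Consequently $\muU^+$ is the law of $(h,1/p^+)$ with $h\sim\muA^+$, and $\muU^-$ is the law of $(h,-1/p^-)$ with $h\sim\muA^-$. \cref{ass_non_atomic} is then equivalent to non-atomicity of $\muA^\pm$, so \cref{prop:tv_relaxed_reduction} applies.

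Second, I will plug the constant weights $a_1=1/p^+$ and $a_2=1/p^-$ into the cost. Since $p^++p^-=1$, we have $a_1+a_2 = 1/(p^+p^-)$, and hence
\[
C^u_{\lambda,c_0}(z_1,z_2)=\min\left(\lambda,\ p^+p^-(h_1-h_2)^2\right).
\]
The second coordinates of $\muU^\pm$ are deterministic, so couplings in $\Pi(\muU^+,\muU^-)$ are in bijection with couplings in $\Pi(\muA^+,\muA^-)$ via the projection $(z_1,z_2)\mapsto(h_1,h_2)$. Costs are preserved under this bijection, which yields the stated 1D OT problem and identifies the supports of the optimal plans.

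Third, I will evaluate the maps \cref{eq:y_star_OT}. The merged value is
\[
\frac{a_2h_1+a_1h_2}{a_1+a_2}=p^+p^-\left(\frac{h_1}{p^-}+\frac{h_2}{p^+}\right)=p^+h_1+p^-h_2.
\]
The threshold condition $(h_1-h_2)^2/(a_1+a_2)\le\lambda$ becomes $p^+p^-(h_1-h_2)^2\le\lambda$. In the other case the maps are the identity, giving $h_1$ and $h_2$. Combining these with \cref{eq:sol_from_transport_maps}, which here reads $f^*(x,s)=T^s(\eta(x,s),\Delta(x,s))$, gives the formula, written as a function of $h=\eta(x,s)$ and $s$.

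The only delicate point is that the result is well defined as a function of $(h_1,+)$, independent of the partner $h_2$. That well-definedness is inherited directly from \cref{prop:tv_relaxed_reduction}, so no new argument is needed beyond verifying its hypothesis in the first step. Everything else is routine algebra.
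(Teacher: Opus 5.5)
Your proposal is correct and follows essentially the same route as the paper: substitute $a_1=1/p^+$, $a_2=1/p^-$ into \cref{prop:tv_relaxed_reduction}, use $p^++p^-=1$ to get $a_1+a_2=1/(p^+p^-)$, and read off the truncated cost, the threshold $p^+p^-(h_1-h_2)^2\le\lambda$, and the merged value $p^+h_1+p^-h_2$. Your explicit checks that \cref{ass_non_atomic} reduces to non-atomicity of $\muA^\pm$, and that couplings of $\muU^\pm$ correspond bijectively to couplings of $\muA^\pm$ (because the $d$-coordinates are constant), make precise two steps the paper leaves implicit.
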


  \begin{proof}[Proof of \cref{cor:aware_tv}]
In the aware setting, the absolute density weights are constant and strictly determined by the group priors: $a_1 = |d_1| = \frac{1}{p^+}$ for the positive group and $a_2 = |d_2| = \frac{1}{p^-}$ for the negative group. 
We derive the aware solution by substituting these values directly into the results of \cref{prop:tv_relaxed_reduction}.
    
    First, we evaluate the denominator present in both the cost and the barycenter formulations
    \begin{align*}
        a_1 + a_2 =& \frac{1}{p^+} + \frac{1}{p^-} = \frac{1}{p^+ p^-},
    \end{align*}
    where we used that $p^+ + p^- = 1$. 
    Substituting this into the threshold from \cref{prop:tv_relaxed_reduction} yields
    \begin{align*}
        \frac{(h_1 - h_2)^2}{a_1 + a_2} = p^+ p^- (h_1 - h_2)^2.
    \end{align*}
    This recovers the threshold $p^+ p^- (h_1 - h_2)^2 \le \lambda$ and the truncated cost $\min \left( \lambda, \; p^+ p^- (h_1 - h_2)^2 \right)$ stated in the corollary.
    
    Next, we substitute the weights into the optimal weighted barycenter $y^*$ defined in \cref{prop:tv_relaxed_reduction}
    \begin{align*}
        y^* = \frac{a_2 h_1 + a_1 h_2}{a_1 + a_2} = \left( \frac{1}{p^-} h_1 + \frac{1}{p^+} h_2 \right) p^+ p^- = p^+ h_1 + p^- h_2.
    \end{align*}
    Applying this simplified cost threshold and this simplified barycenter directly into the piecewise formulation of the deterministic maps from \cref{prop:tv_relaxed_reduction} exactly recovers the closed-form predictions for $f^*_{\lambda, \TV}$, concluding the proof.

    Finally, we reconstruct the optimal map. According to \cref{prop:tv_relaxed_reduction}, when the cost exceeds the threshold $\lambda$, the map outputs the unconstrained base prediction $h_s$. When the cost is below the threshold $\lambda$, the maps output the shared barycenter, recovering the formula in the statement of the corollary.
\end{proof}

\section{Algorithm pseudocode} \label{app:algorithmic_implementation}
A pseudocode of our algorithm is given in \cref{alg:fair_unaware}.
\begin{algorithm}[h!]
  \caption{Unaware Fair Regression}
  \label{alg:fair_unaware}
  \begin{algorithmic}[1]
      \State \textbf{Input:} Base regressor $\hat{\eta}(x)$, sensitive attribute's classifier $\hat{p}(s\vert x)$, priors $p^+$ and $p^-$, dataset $\{x_k\}_{k=1}^{n}$
      \State \textbf{Output:} Fair continuous map $\hat{f}_{fair}$
      \Statex
      \State \textbf{Step 1: Estimation and Jordan decomposition}
      \State For all $k \in \{1, \dots, n\}$, compute unconstrained predictions $\hat{\eta}_k \gets \hat{\eta}(x_k)$.
      \State For all $k \in \{1, \dots, n\}$, compute density ratio differences: $\hat{\Delta}_k \gets \frac{\hat{p}(+\vert x_k)}{p^+} - \frac{\hat{p}(-\vert x_k)}{p^-}$
      \State Define index sets: $\mathcal{I}^+ = \{i : \hat{\Delta}_i > \tau\}$ and $\mathcal{I}^- = \{j : \hat{\Delta}_j < -\tau\}$.
      \State Compute normalized probability weights $a_i$ (for $i \in \mathcal{I}^+$) and $b_j$ (for $j \in \mathcal{I}^-$) so $\sum a_i = \sum b_j = 1$.
      \Statex
      \State \textbf{Step 2: Relaxed Optimal Transport}
          \State Cost $\mathbf{C}_{i,j} \gets C_{\lambda, \mathcal{D}}\left((\hat{\eta}_i, \hat{\Delta}_i), (\hat{\eta}_j, \hat{\Delta}_j)\right)$ \hfill \textit{(e.g., via \cref{prop:relaxed_reduction} or \cref{prop:tv_relaxed_reduction})}
          \State Pairwise targets $m^+_{i,j} \gets T_{\lambda, \mathcal{D}}^+\left((\hat{\eta}_i, \hat{\Delta}_i), (\hat{\eta}_j, \hat{\Delta}_j)\right)$ and $m^-_{i,j} \gets T_{\lambda, \mathcal{D}}^-\left((\hat{\eta}_i, \hat{\Delta}_i), (\hat{\eta}_j, \hat{\Delta}_j)\right)$

      \State Solve discrete OT plan $\mathbf{\Pi}$ between $\hat{\mu}^+ (\text{weights } a_i)$ and $\hat{\mu}^- (\text{weights } b_j)$ using cost matrix $\mathbf{C}$.
      \Statex
      \State \textbf{Step 3: Construct the fair targets}
      \State Initialize pseudo-label vector $f = (f_1, \dots, f_n)$.
      \For{$k = 1$ to $n$}
          \If{$k \in \mathcal{I}^+$} 
              \State $f_k \gets \frac{1}{a_k} \sum_{j \in \mathcal{I}^-} \mathbf{\Pi}_{k,j} m^+_{k,j}$
          \ElsIf{$k \in \mathcal{I}^-$} 
              \State $f_k \gets \frac{1}{b_k} \sum_{i \in \mathcal{I}^+} \mathbf{\Pi}_{i,k} m^-_{i,k}$
          \Else 
              \State $f_k \gets \hat{\eta}_k$
          \EndIf
      \EndFor
      \Statex
      \State \textbf{Step 4: Fit final regressor}
      \State Train the continuous map $\hat{f}_{fair}$ on inputs $\{(\hat{\eta}_k, \hat{\Delta}_k)\}$ to predict targets $\{f_k\}$.
  \end{algorithmic}
\end{algorithm}

\section{Experimental setup and implementation details}
\label{app:experimental_setup}

To ensure a rigorous, reproducible, and equitable comparison between our Optimal Transport framework and the FairReg baseline, we strictly adhere to the evaluation protocol established by \citet{Fairreg}.

\subsection{Details on the FairReg Baseline}
    \label{app:fairreg_details}

    Because FairReg \citep{Fairreg} serves as the primary state-of-the-art baseline for relaxed unaware fairness throughout our experiments, it is necessary to detail its algorithmic mechanics to contextualize its performance and computational profile against our Optimal Transport framework.

    \paragraph{Grid discretization and dual optimization.}
    Unlike continuous Optimal Transport, FairReg operates as an unlabeled post-processing algorithm that does not map inputs to a deterministic continuous prediction. 
    Instead, it relies on discretization, defining a uniform grid $\hat{\mathcal{Y}}_L$ over a bounded interval $[-B, B]$ consisting of $2L+1$ discrete points. 
    The algorithm learns a randomized prediction function over this support, parameterized by a softmax function where the hyperparameter $\beta$ controls the degree of entropic regularization. 
    To enforce DP, they formulate a dual stochastic convex program from this entropically regularized objective. 
    This dual problem is then minimized using a stochastic first-order oracle, typically implemented via Accelerated Stochastic Approximation (AC-SA2) to strictly control the expected norm of the gradient mapping. 
    
    \paragraph{Iterations ($T$).}
    In their theoretical framework and codebase, the hyperparameter $T$ explicitly defines the total number of stochastic gradient evaluations. 
    Because the stochastic gradient of the dual objective is computed by sampling exactly one independent unlabeled feature vector $X$ per step, $T$ represents individual iterations rather than full passes over the dataset. 
    To guarantee a fair, scale-invariant convergence analysis across datasets of varying sizes ($N$), we parameterize the baseline's optimization in terms of training epochs. 
    Specifically, we set $T = 60 \times N$ for all experiments. 
    This ensures the FairReg algorithm performs exactly 60 effective passes (epochs) over the unlabeled data, providing a rigorous and uniform optimization budget across all tasks.

\subsection{General experimental details}

\paragraph{Data preprocessing and splits.}
For all real-world datasets, the continuous target variable $Y$ is scaled linearly to the interval $[-1, 1]$. 
For all experiments, we evaluate out-of-sample generalization by splitting the data into a training set and a hold-out test set (using an $80\% - 20\%$ split), previously stratified by the sensitive attribute $S$ to preserve group prior probabilities. 

\paragraph{Base models and estimators.}
To isolate the effect of the fairness interventions and ensure differences in performance are strictly attributable to the post-processing algorithms, all methods share the exact same base estimators. 
Specifically, the unconstrained base regressor $\hat{\eta}(X)$ is instantiated as a standard Ordinary Least Squares (OLS) Linear Regression model (we provide additional base model ablations study in \cref{app:base_models_benchmark}). 
The probabilistic sensitive attribute classifier $\hat{\mathbb{P}}(S \mid X)$, which is required by both FairReg and our Unaware OT framework to estimate the local density ratios, is instantiated as a Logistic Regression model trained for up to 2000 iterations to ensure convergence.

\paragraph{Hyperparameters.}
For the FairReg baseline, we follow their original implementation, learning a randomized prediction rule over a grid. For each relaxation level, the hyperparameters $L$ (grid resolution) and $\beta$ (entropy temperature) are selected via their proposed automated heuristic.
For our OT framework, the discrete optimal transport plan $\pi^*$ is solved exactly. 
To construct the final continuous, out-of-sample predictor $\hat{f}_{fair}$ from the discrete OT assignments, we train a Random Forest regressor (with 200 trees) to map the 2-dimensional feature space $(\hat{\eta}(x), \hat{\Delta}(x))$ to the optimal fair targets $y^*$. The parameter $\tau$ in \ref{alg:fair_unaware} is set to the value $10^{-6}$. 

\paragraph{Statistical significance and error bars.}
To assess the stability and statistical significance of our results, all experiments are aggregated over multiple independent trials (10 runs unless otherwise specified). 
The source of random variability depends on the experimental setting: for the synthetic dataset (\cref{app:synthetic_dataset}), the random seed dictates the underlying data generation process; for real-world datasets (\cref{app:adult_dataset,sec:experiments}), the random seed controls the stratified train/validation/test splits. 
In all summary tables, the reported $\pm$ values represent the empirical standard deviation calculated across these independent runs (\cref{app:fully_constrained,app:computation_times,app:base_models_benchmark}). 
Similarly, in all visualizations (such as the accuracy-fairness trade-off and confounding sweep curves), the shaded regions represent $\pm 1$ standard deviation. 

    \section{Experiments on synthetic datasets}

We use two distinct synthetic datasets to empirically illustrate the exact geometric behavior of the optimal transport relaxations in both the aware and unaware settings: a non-linear confounded dataset designed to visualize the effect of the relaxations on the individual predictions, and a linearly confounded dataset designed to sweep the confounding strength and evaluate the accuracy-fairness trade-off across different levels of bias.

\subsection{Empirical illustration of the geometric relaxations} \label{app:empirical_illustrations}

\paragraph{Experimental setup and data generation.}
To isolate and visualize the geometric behavior of the optimal transport relaxations, we use a 1-dimensional synthetic dataset designed to introduce a controlled, non-linear confounding effect. We generate $N=5000$ independent samples according to the following data generating process:
\begin{itemize}
    \item \textbf{Sensitive attribute:} We sample the binary group uniformly at random $S \sim \text{Bernoulli}(0.5)$, and define its centered version $Z_S = 2(S - 0.5) \in \{-1, 1\}$.
    \item \textbf{Confounded feature:} We construct the continuous feature $X \in \mathbb{R}$ as a linear combination of the sensitive group and an independent Gaussian noise, maintaining unit marginal variance: $X = \gamma Z_S + \sqrt{1 - \gamma^2} \mathcal{N}(0, 1)$, where the confounding strength is set to $\gamma = 0.6$.
    \item \textbf{Target variable:} The continuous outcome $Y$ follows a non-linear S-curve dependence on $X$ with additive Gaussian noise: $Y = 3.0 \tanh(1.5 X) + \mathcal{N}(0, 0.5^2)$. 
\end{itemize}

To completely remove finite-sample estimation artifacts and evaluate the pure geometry of the optimal transport framework, we bypass empirical risk minimization for the base models. 
Instead, we use exact mathematical oracles for the continuous conditional expectation $\eta(x) = 3.0 \tanh(1.5 x)$ and the exact posterior probabilities $\mathbb{P}(S=1 \mid X=x) = \sigma\left(\frac{2\gamma x}{1-\gamma^2}\right)$, where $\sigma(z) = \frac{1}{1 + e^{-z}}$
is the sigmoid function. 

Finally, to ensure an equitable comparison between the $\mathcal W_2$ and TV penalties across both the aware and unaware paradigms, we perform a numerical line search to identify the exact $\lambda$ (or exact threshold $\tau$ for the aware TV case) that explicitly matches a predefined fairness budget (e.g., preserving exactly $80\%$, $60\%$, or $20\%$ of the initial unconstrained Wasserstein-2 unfairness).

\paragraph{Qualitative analysis of the unaware relaxations}

To complement the analysis of the marginal distributions provided in the main text, we visualize here the exact effect of the relaxations directly on the individual predictions. 

As shown in \cref{fig:unaware_composite}, comparing the base ERM predictor against the optimal relaxed unaware predictors (matched to an 80\% unfairness budget) highlights the fundamental geometric difference between the two penalties. 
The $\mathcal W_2$ penalty induces a global smooth interpolation, gently pushing all individuals closer to the fair center. Conversely, the TV penalty creates a flat plateau, resulting in an overlapping density spike in the center of the distribution while leaving the extreme predictions completely untouched.

\begin{figure*}[h!]
    \centering
    \includegraphics[width=0.9\textwidth]{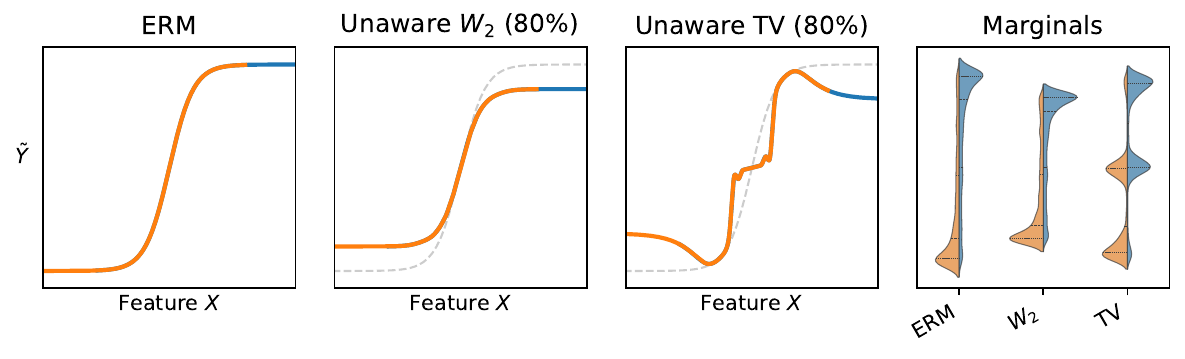}
     \caption{\textbf{Unaware relaxations.} 
     Comparison of the Bayes predictor against the optimal relaxed unaware predictors under $\mathcal W_2$ and TV penalties, matched to an 80\% unfairness budget.}
     \label{fig:unaware_composite}
\end{figure*}

\paragraph{Qualitative analysis of the aware relaxations}

To complement the unaware visualizations above, we empirically evaluate the exact same relaxations in the \emph{aware setting} on our 1D synthetic dataset. 

As shown in \cref{fig:aware_composite}, the aware $\mathcal W_2$ penalty induces a smooth, proportional interpolation, shifting predictions homogeneously toward the central compromise. Conversely, the aware TV penalty acts as a strict hard-thresholding operator. This creates a highly discontinuous mapping: predictions perfectly merge if the pointwise disparity gap falls below the tolerated threshold, but strictly revert to the unconstrained biased baseline if the gap is too large. 

By varying the regularization parameter to transition the model from the unconstrained empirical risk minimizer (100\% unfair) to strict Demographic Parity (0\% unfair), we observe the evolution of the marginal distributions in \cref{fig:aware_evolution}. Consequently, the aware TV penalty fragments the intermediate marginal distributions into distinct, multi-modal subgroups. This visually confirms its localized and discrete nature, standing in stark contrast to the continuous global compromise enforced by the $\mathcal W_2$ penalty.

\begin{figure*}[h!]
  \centering
  \includegraphics[width=0.9\textwidth]{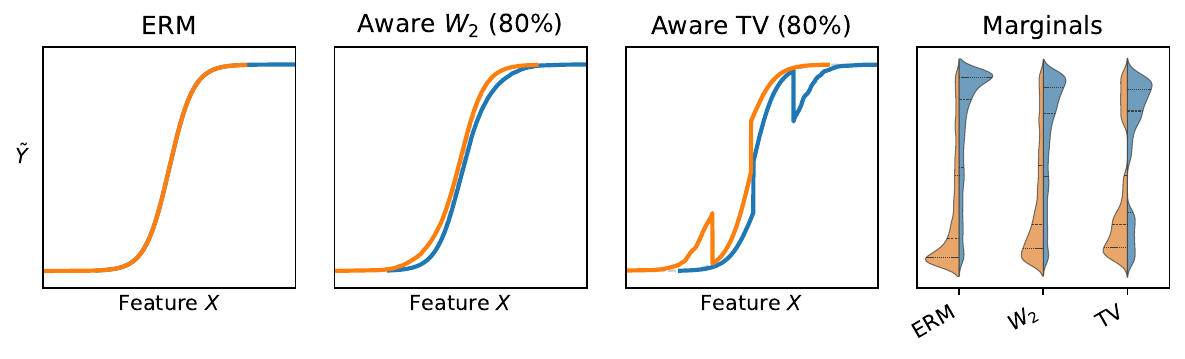}
  \caption{\textbf{Aware relaxations.} Comparison of the base ERM predictor against the optimal relaxed aware predictors under $\mathcal W_2$ and TV penalties. 
  While $\mathcal W_2$ yields a smooth shift, the TV penalty applies an exact hard-thresholding, creating a discontinuous effect where predictions either perfectly merge or go back to the unfair ERM.}
  \label{fig:aware_composite}
\end{figure*}

\begin{figure*}[h!]
  \centering
  \includegraphics[width=0.9\textwidth]{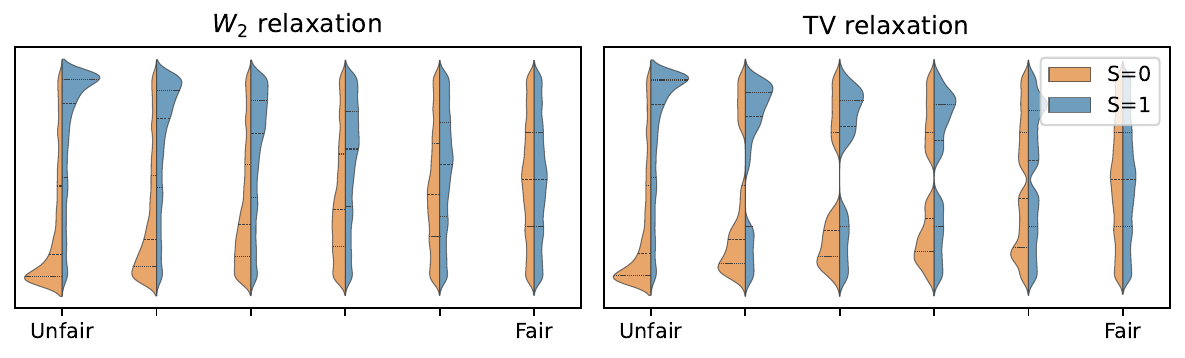}
  \caption{\textbf{Evolution of the aware marginal distributions.} Evolution of the predicted marginal distributions from Unfair to Fair. 
  The discontinuous nature of the aware TV hard-thresholding creates fragmented, intermediate distributions, contrasting with the smooth continuous shift of the $\mathcal W_2$ penalty.}
  \label{fig:aware_evolution}
\end{figure*}

    \subsection{Impact of confounding strenght and trade-offs} \label{app:synthetic_dataset}

    We generate $n$ independent samples from a controlled 2-dimensional synthetic dataset.

    \paragraph{Sensitive attribute and features.} 
We sample the binary sensitive attribute uniformly at random $S \sim \text{Bernoulli}(0.5)$, and we define its centered version $Z_S = 2(S - 0.5) \in \{-1, 1\}$.
We construct a 2-dimensional continuous feature vector $X = (X_1, X_2)$. 
The first feature, $X_1$, acts as the confounded signal and is generated exactly as described in \cref{app:empirical_illustrations}. 
The second feature, $X_2$, acts as a ``safe" predictive signal and is drawn from an independent Gaussian distribution
\begin{equation*}
    X_2 \sim \mathcal{N}(0, 1).
\end{equation*}
        
    \paragraph{Target variable.} 
    The continuous outcome $Y$ is generated as a simple linear function of both features, with an independent additive Gaussian noise $\varepsilon$
    \begin{equation*}
        Y = 2X_1 - X_2 + \varepsilon, \quad \text{where } \varepsilon \sim \mathcal{N}(0, 0.5^2).
    \end{equation*}
    
    \paragraph{Role of the parameter $\gamma$.} 
    This specific design introduces a single source of unfairness governed by the parameter $\gamma \in [0, 1]$, which dictates the \emph{confounding strength}. 
    When $\gamma = 0$, $X_1$ is completely independent of $S$.
     As $\gamma \to 1$, $X_1$ becomes increasingly confounded with the demographic groups. 
    Because $Y$ is a function of $X$, the unconstrained ERM will always converge to the true coefficients $[2.0, -1.0]$, and its MSE will always equal the noise variance $\text{Var}(\varepsilon) = 0.25$, regardless of $\gamma$. 
    This guarantees that any drop in accuracy observed during post-processing is exclusively attributable to the fairness constraints.
    
    \paragraph{Accuracy-fairness trade-offs.}
    We first fix $\gamma = 0.5$ to induce a moderate structural bias and evaluate the relaxed predictors. 
    As shown in \cref{fig:tradeoff_synthetic}, the unconstrained ERM achieves the theoretical optimal MSE of $0.25$ but exhibits high unfairness. 
    As we tighten the fairness constraints, OT Unaware recovers the Pareto frontier of the optimal aware interpolations. 
    Notably, the naive plug-in estimator (OT $\widehat{S}$ $\mathcal{W}_2$) fails to achieve the same fairness levels, demonstrating the severe risk of relying on a hard proxy classification $\widehat{S}$ rather than the continuous group probabilities.

    \paragraph{Impact of confounding strength.}
    To evaluate robustness across different biases, we fix the constraint to strict DP (no relaxation) and vary the confounding strength $\gamma$ from $10^{-2}$ to $10^0$ (\cref{fig:confounded_sweep_abs,fig:confounded_sweep_rel}). 
    Note that because the empirical $\mathcal{W}_2$ distance is computed between finite samples, it exhibits a small, strictly positive baseline value due to sampling noise even for theoretically perfect fair models, which is why the models do not reach exactly zero on the logarithmic scale.
    This reveals a characteristic of the unaware geometry: it requires an active demographic signal to construct the 2D pseudo-measures. 
    When $\gamma$ is low, $S$ is independent of $X$, causing the estimated density ratio $\hat{\Delta}(x)$ to vanish. 
The Unaware OT mapping conservatively collapses the predictions toward the global mean (incurring a high MSE) to strictly guarantee fairness. 
    
    However, as bias appears ($\gamma > 0.1$), the unaware geometry fully resolves. 
    In this high-confounding regime, OT Unaware suppresses the reliance on the biased feature while preserving the safe signal from $X_2$, converging perfectly to the optimal bounds of the Aware OT mapping. 
    Furthermore, the naive plug-in approach (OT $\widehat{S}$ $\mathcal{W}_2$) is not robust to this confounding, even at moderate levels.
    Because it forces a hard classification of the sensitive attribute, it confidently applies the optimal transport plan based on misaligned proxy labels. This results in a massive spike in true $\mathcal{W}_2$ unfairness, only finally converging to the Aware bounds when $\gamma \to 1$ and the proxy becomes a perfect oracle. Conversely, the FairReg baseline shows instability, with MSE penalties early in the trajectory before structural bias even materializes.
    
    \paragraph{Variance analysis.}
    This behavior at low confounding is explicitly confirmed by analyzing the within-group variance of the predictions (\cref{fig:confounded_sweep_var}). 
    At low $\gamma$, the variance of the OT Unaware predictions collapses significantly compared to the ERM, reflecting the prediction of the constant prediction in the absence of a reliable demographic signal. 
    As the confounding signal strengthens ($\gamma \to 1$), all fair models naturally reduce their variance to suppress $X_1$, but OT Unaware demonstrates the most stable and mathematically sound convergence toward the aware optimum.

\begin{figure*}[h!]
    \centering
    \includegraphics[width=\textwidth]{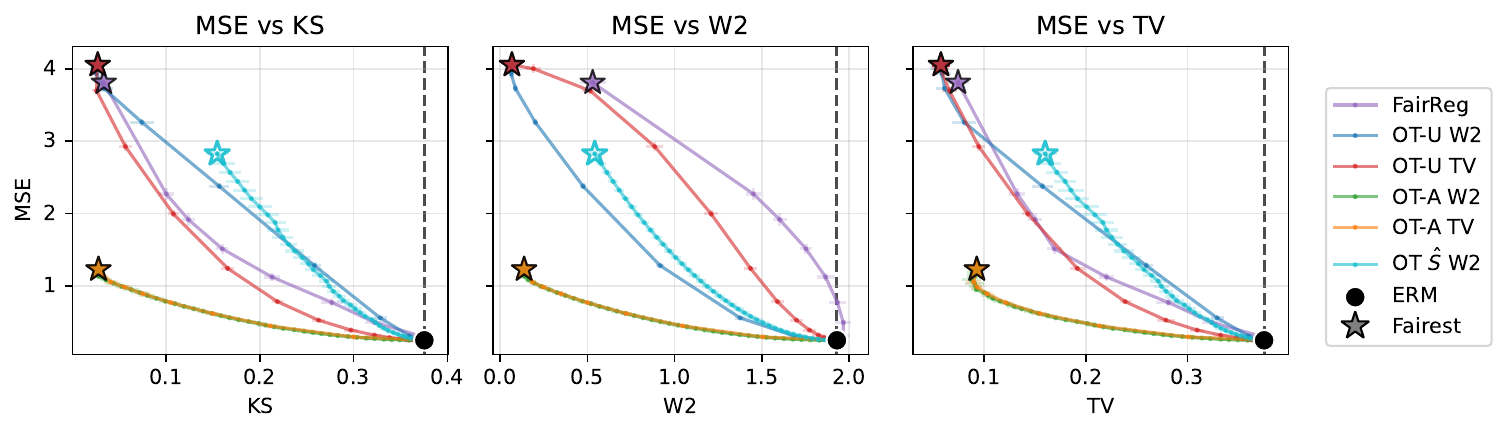}
    \caption{\textbf{Accuracy-fairness trade-offs on the synthetic dataset ($\gamma=0.5$).} 
    Relaxation trajectories from the unconstrained empirical risk minimizer (ERM, black dot) to the strict fair models (stars). 
    The unconstrained baseline naturally converges to the theoretical noise variance limit ($\text{MSE} = 0.25$). 
    Our exact optimal transport mappings (OT-U $\mathcal{W}_2$ and OT-U TV) recover the optimal aware Pareto frontier and outperform both FairReg and the noisy plug-in proxy (OT $\widehat{S}$ $\mathcal{W}_2$).}
    \label{fig:tradeoff_synthetic}
\end{figure*}

\begin{figure*}[h!]
  \centering
  \includegraphics[width=\textwidth]{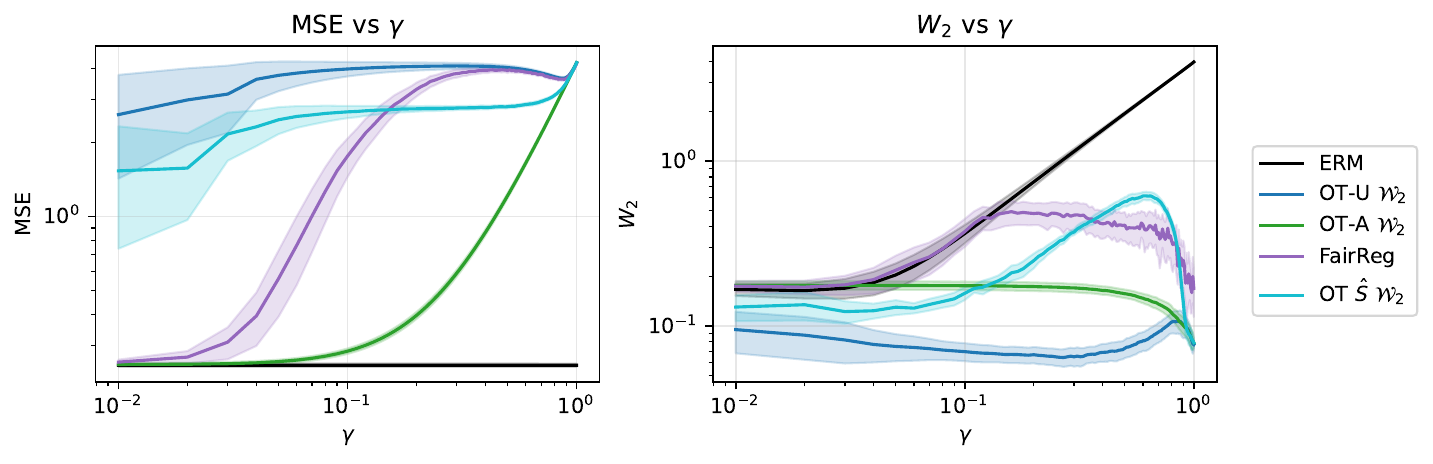}
  \caption{\textbf{Impact of confounding strength (Absolute).} Evolution of the MSE (left) and $\mathcal{W}_2$ unfairness (right) under strict DP. 
  As $\gamma \to 1$, OT Unaware converges to the Aware model, tightly bounding the MSE degradation. 
  FairReg suffers from optimization instability, triggering severe drops in predictive utility.}
  \label{fig:confounded_sweep_abs}
\end{figure*}

\begin{figure*}[h!]
  \centering
  \includegraphics[width=\textwidth]{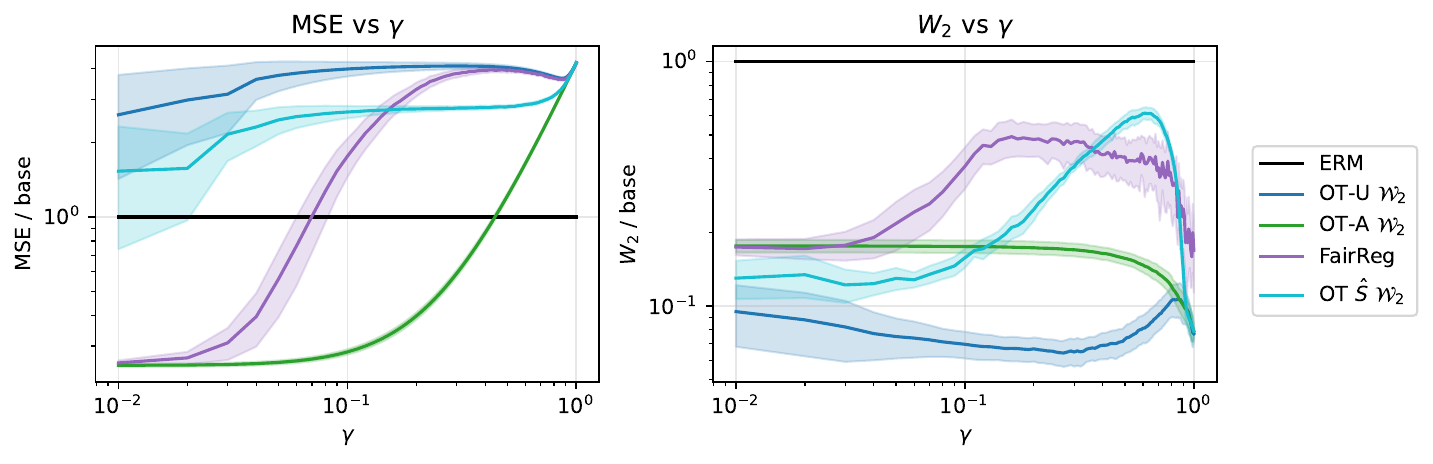}
  \caption{\textbf{Impact of confounding strength (Relative).} 
  Evolution of the MSE and $\mathcal{W}_2$ unfairness expressed relative to the unconstrained ERM baseline. 
  The relative visualization highlights FairReg's rapid deterioration in utility at moderate confounding ($\gamma \approx 0.1$), whereas OT Unaware maintains stability and accurately tracks the theoretical aware bound.}
  \label{fig:confounded_sweep_rel}
\end{figure*}

\begin{figure*}[h!]
  \centering
  \includegraphics[width=\textwidth]{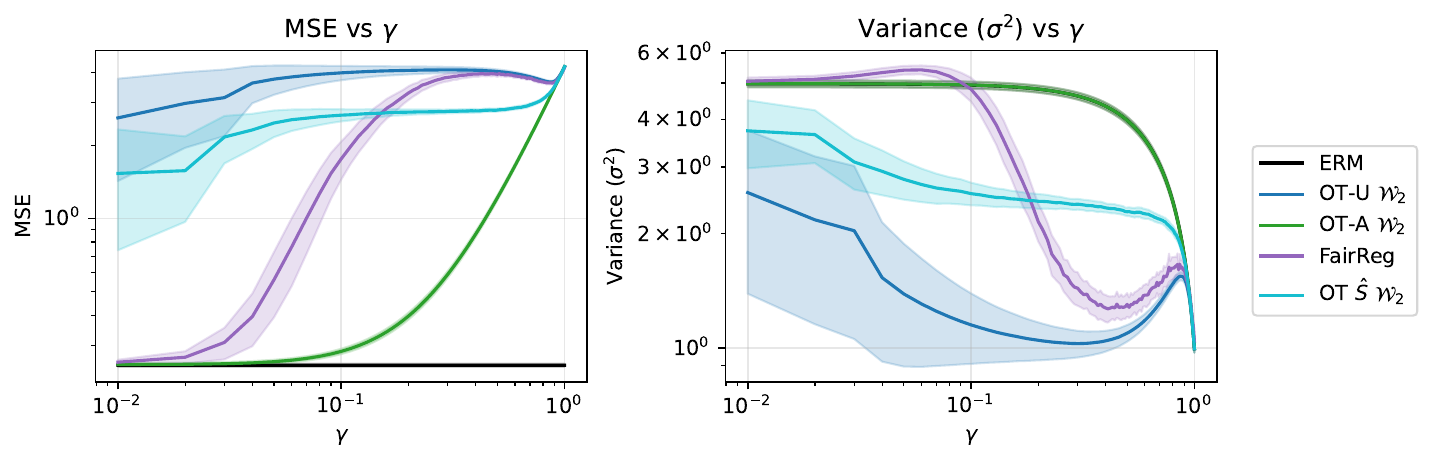}
  \caption{\textbf{Within-Group Variance Analysis.} Evolution of the MSE (left) and within-group prediction variance (right) across varying confounding strengths ($\gamma$). At low $\gamma$, OT Unaware conservatively collapses the variance to act as a safe, constant predictor since the density ratio $\hat{\Delta}(x)$ is uninformative. As structural bias increases, it successfully resolves the 2D geometry and recovers the variance profile of the optimal Aware mapping.}
  \label{fig:confounded_sweep_var}
\end{figure*}

    \section{Experiments on real-world datasets}
\label{app:adult_dataset}

To complement the real-world evaluations in the main text, we provide the full accuracy-fairness trade-off analysis for the Adult (Census Income) dataset.

\paragraph{Adult dataset detail.}
The Adult dataset comprises individual-level records extracted from the 1994 U.S. Census database. It contains 48,842 total observations characterized by 14 demographic and employment-related attributes, including education level, marital status, occupation, and hours worked per week. While the usual prediction task for this dataset is binary classification (predicting whether an individual earns more than \$50K per year), \cite{Fairreg} convert this into a continuous regression task by using a numeric proxy from the features (age) as the continuous target $Y$. 

The dataset contains concrete demographic disparities: the population is approximately $66.9\%$ male and $33.1\%$ female. The racial composition of the dataset is also unbalanced, comprising approximately $85.4\%$ White, $9.6\%$ Black, $3.2\%$ Asian-Pac-Islander, $1.0\%$ Amer-Indian-Eskimo, and $0.8\%$ Other. For this experiment, the sensitive attribute considered is gender.

Characteristics about the \textbf{Law School} and \textbf{Communities and Crime} datasets are provided in the main text \cref{sec:experiments}.
\subsection{Accuracy-fairness trade-offs} \label{app:tradeoffs_adult}
\paragraph{Trade-off results.}
As shown in \cref{fig:adult_tradeoff_appendix}, the OT Unaware framework consistently matches or outperforms the FairReg baseline across all fairness metrics ($W_2$, DP, and TV). 
The noisy plug-in estimator $OT\ \hat{S}\ W_2$ fails at achieving fairness. 

\begin{figure*}[h!]
  \centering
  \includegraphics[width=1.1\textwidth]{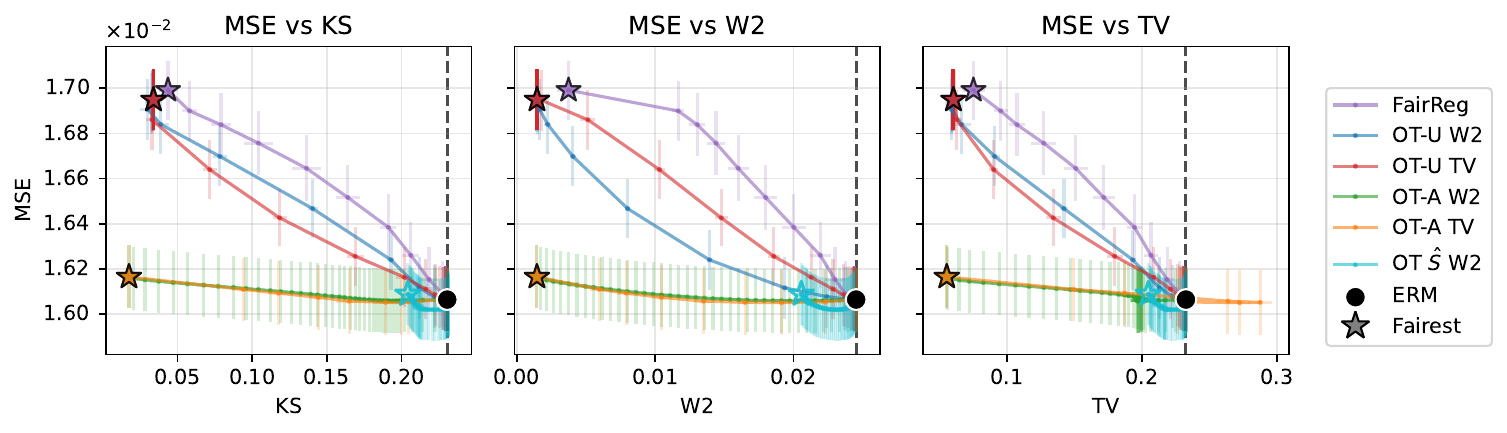}
  \caption{\textbf{Accuracy-fairness trade-offs on the Adult dataset.} 
  Relaxation trajectories from the unconstrained empirical risk minimizer (ERM, black dot) to the exact fair models (stars). 
  Error bars indicate the standard deviation across 10 random data splits. 
  MSE of the constant predictor is $1.69\times 10^{-2}$.}
  \label{fig:adult_tradeoff_appendix}
\end{figure*}

    \subsection{Evaluation of the fully constrained models} \label{app:fully_constrained}

    \paragraph{Experimental setup.}
    \cref{tab:summary_relative_merge_train_unlab_adult,tab:summary_relative_merge_train_unlab_lawschool,tab:summary_relative_merge_train_unlab_communities} present the performance of the fully constrained models relative to the unconstrained ERM (MSE $> 1.00$ indicates degradation; fairness metrics near $0.00$ indicate perfect parity). 
    Alongside FairReg and OT Unaware, we evaluate two naive plug-in baselines. 
    If $S$ is unavailable at inference, an intuitive solution is to substitute it with an estimator $\widehat{S}$ (or $\widehat{p}(s\mid x)$) inside the aware optimal transport maps $T_s$, yielding
    \begin{align}
      \widehat Y_{\mathrm{hard}}(x)=&T_{\widehat S(x)}(\eta(x)), \label{eq:y_hard}\\
      \widehat Y_{\mathrm{soft}}(x)=&\sum_{s}\widehat p(s\mid x)\,T_s(\eta(x)) \label{eq:y_soft}.
    \end{align}

    \paragraph{Results.} 
    On the Adult dataset (\cref{tab:summary_relative_merge_train_unlab_adult}), OT Unaware consistently matches or outperforms FairReg across all metrics. 
    While FairReg achieves the lowest fairness measure in terms of KS-distance between the predictions of the group, at equality with our OT Unaware method, it still suffers from a high $\mathcal{W}_2^2$-unfairness (23\% relative to the baseline), whereas OT Unaware reduces it to 7\% for the exact same relative MSE cost ($1.06$). 
    OT Unaware also matches FairReg on TV and KS metrics, proving that exact optimal transport naturally guarantees statistical parity without the discretization.

    \paragraph{Failure of naive plug-in estimators.}
    The naive plug-in approaches completely fail to enforce fairness. 
    On the Adult dataset (\cref{tab:summary_relative_merge_train_unlab_adult}), the $W_2$ and TV metrics for $\widehat Y_{\mathrm{hard}}$ and $\widehat Y_{\mathrm{soft}}$ remain between $0.85$ and $1.10$, making them virtually indistinguishable from the biased ERM. 
    Because aware optimal transport relies on exact group rankings, classification errors from $\widehat{S}$ prevent achieving fairness. 
    In contrast, OT Unaware successfully enforces parity by mathematically absorbing this classification uncertainty directly into its global density weights, succeeding where plug-in estimators fails.

    \input{images/relaxed_curves/adult/summary_relative_merge_train_unlab.tex}
    
    \input{images/relaxed_curves/lawschool/summary_relative_merge_train_unlab.tex}

    \input{images/relaxed_curves/communities/summary_relative_merge_train_unlab.tex}

    \subsection{Robustness across different base estimators}
    \label{app:base_models_benchmark}
    
    To ensure the performance of the post-processing methods is not an artifact of the initial model class, we benchmark the fully constrained frameworks across three distinct unconstrained base estimators: Ordinary Least Squares (Linear), Random Forests (RF), and Gradient Boosting Machines (GBM). 
    
    \paragraph{Methodology.}
    For each base estimator, we extract the unconstrained predictions (ERM) and process them using both the FairReg baseline and our OT Unaware framework. 
    To ensure a rigorous comparison, we report the mean and standard deviation across 10 random train/test splits of size.
    For the two unaware methods, the strictly superior performance is highlighted in \textbf{bold}. 
    To account for variance, we apply a Wilcoxon signed-rank test; if a method underperforms the best method but the difference is not statistically significant ($p > 0.05$), it is highlighted in \textcolor{blue}{blue}.

    \paragraph{Metrics.}
    We recall that $\text{KS}^{\mathcal{T}}$ is a discretized approximation of the Kolmogorov-Smirnov (KS) distance, computed as the maximum absolute c.d.f. difference across a discrete 50-bin grid $\mathcal{T}$. 
    We also report the continuous Wasserstein-2 ($\mathcal{W}_2$) and Total Variation (TV) distances.
    The Total Variatioin is computed over 50 bins.

    \paragraph{Base model configurations.}
    To ensure reproducibility, we use the following hyperparameters for our unconstrained base estimators (implemented via scikit-learn):
    \begin{itemize}
        \item \textbf{Linear}: Ordinary Least Squares with an intercept (and Logistic Regression with \texttt{max\_iter=2000} for auxiliary classification).
        \item \textbf{Random Forest (RF)}: 100 trees restricted to a minimum of 20 samples per leaf (\texttt{min\_samples\_leaf=20}) to prevent severe overfitting.
        \item \textbf{Gradient Boosting Machine (GBM)}: 100 boosting stages with a maximum tree depth of 3 (\texttt{max\_depth=3}).
    \end{itemize}

\paragraph{Performance and interpretation.}
    As shown in \cref{tab:lawschool_bases,tab:adult_bases,tab:communities_bases}, OT Unaware demonstrates robust generalization across all model classes. 
    On the individual-level datasets (Adult and Law School), OT Unaware systematically achieves superior $W_2$ and KS$^{\mathcal{T}}$. 
    
    On the Communities dataset (\cref{tab:communities_bases}), OT Unaware proves even more dominant. Not only does it achieve a lower MSE than the FairReg baseline across all model classes (e.g., $0.043$ vs $0.044$ for Linear), but it also strictly enforces the geometric fairness constraint. FairReg struggles to align the distributions, leaving a $W_2$ gap of $0.072$ for the Linear model, whereas OT Unaware successfully reduces it to $0.028$.

    \paragraph{Theoretical validation of the MSE degradation.}
    The main observation in the Adult and Law School results is that the empirical Mean Squared Error (MSE) appears virtually unchanged between the Unfair ERM and the strictly fair OT predictors (e.g., remaining at $0.01$ or $0.02$). 
    This is not an anomaly, but a direct empirical validation of the theoretical bounds of Wasserstein fair regression.
    
    The theoretical maximum MSE degradation required to achieve exact DP is strictly bounded by the squared Wasserstein distance ($W_2^2$) between the initial biased distributions. 
    For the Law School dataset, the initial $W_2$ gap is approximately $0.03$. 
    The exact geometric cost to perfectly repair this disparity is therefore bounded by $0.03^2 = 0.0009$. 
    Because the targets are scaled to $[-1, 1]$, the base ERM risk is naturally around $0.010$. 
    Adding the absolute maximum fairness penalty yields an expected fair risk of $0.0109$. 
    Consequently, the degradation is practically invisible at two decimal places. 
    This proves that continuous Optimal Transport successfully enforces parity with the absolute minimum possible disruption to accuracy.
    
    Conversely, for the Communities and Crime dataset, the initial baseline disparities are severe ($W_2 \approx 0.31$). The theoretical geometric cost to achieve parity is therefore bounded by $0.31^2 \approx 0.096$. As expected, the empirical MSE degradation on this dataset is highly visible (jumping from $0.022$ for the ERM to $0.043$ for OT), perfectly tracking the quadratic cost bound.

\input{images/relaxed_curves/adult/formatted_summary.tex}
  \input{images/relaxed_curves/lawschool/formatted_summary.tex}

  \input{images/relaxed_curves/communities/formatted_summary.tex} 

\section{Computational efficiency}\label{app:computation_times}

  We report here the average training time (in seconds) for each method across 10 independent runs on a standard CPU (Intel Core i7-9700K @ 3.60GHz). 
  As shown in \cref{tab:computation_times}, the OT Unaware framework achieves massive computational speedups compared to the FairReg baseline, which relies on iterative stochastic gradient optimization. 
  For instance, on the Communities dataset, FairReg takes on average $9.97$ seconds to train, while OT Unaware takes only $0.37$ seconds, yielding a speedup of over $26\times$. 

  \begin{table}[ht!]
    \centering
    \begin{tabular}{l c c c c}
    \toprule
    \textbf{Dataset} & \textbf{FairReg} & \textbf{OT Unaware} & \textbf{OT $\hat{S}$} & \textbf{OT Aware} \\
    \midrule
    Synthetic (N=10k) & 56.87 $\pm$ 7.30 & 8.75 $\pm$ 1.26 & 0.005 $\pm$ 0.004 & 0.003 $\pm$ 0.002 \\
    Adult & 167.23 $\pm$ 8.22 & 94.21 $\pm$ 5.35 & 0.034 $\pm$ 0.012 & 0.008 $\pm$ 0.002 \\
    Communities & 9.97 $\pm$ 0.20 & 0.37 $\pm$ 0.01 & 0.014 $\pm$ 0.001 & 0.008 $\pm$ 0.000 \\
    Lawschool & 117.68 $\pm$ 16.18 & 31.03 $\pm$ 0.58 & 0.053 $\pm$ 0.011 & 0.006 $\pm$ 0.004 \\
    \bottomrule
    \end{tabular}
    \vspace{0.5em}
    \caption{\textbf{Computational efficiency benchmark.} 
    Average model training time (in seconds $\pm$ standard deviation) evaluated across 10 independent runs. 
    By relying on exact, closed-form geometric reductions, the OT Unaware framework bypasses the iterative stochastic gradient optimization required by FairReg, yielding massive computational speedups (e.g., up to $26\times$ faster on the Communities dataset).}
    \label{tab:computation_times}
    \end{table}

%% file: images/relaxed_curves/adult/summary_relative_merge_train_unlab.tex
\begin{table}[h]
\centering
\begin{tabular}{lccccc}
\toprule
Method & MSE $\downarrow$ & $W_2$ $\downarrow$ & TV $\downarrow$ & KS $\downarrow$ & KS$^{\mathcal{T}}$ $\downarrow$ \\
\midrule
ERM Unfair & 1.00 $\pm$ 0.02 & 1.00 $\pm$ 0.04 & 1.00 $\pm$ 0.04 & 1.00 $\pm$ 0.03 & 1.00 $\pm$ 0.04 \\
\midrule
FairReg & 1.06 $\pm$ 0.02 & 0.23 $\pm$ 0.10 & 0.26 $\pm$ 0.05 & \textbf{0.17 $\pm$ 0.04} & \textcolor{blue}{0.16 $\pm$ 0.05} \\
OT Unaware & 1.06 $\pm$ 0.01 & \textbf{0.07 $\pm$ 0.02} & \textbf{0.23 $\pm$ 0.04} & \textcolor{blue}{0.17 $\pm$ 0.04} & \textbf{0.15 $\pm$ 0.05} \\
OT $\hat{S}$ hard & \textbf{1.00 $\pm$ 0.02} & 0.85 $\pm$ 0.03 & 0.88 $\pm$ 0.03 & 0.88 $\pm$ 0.03 & 0.88 $\pm$ 0.03 \\
OT $\hat{S}$ soft & 1.00 $\pm$ 0.02 & 1.10 $\pm$ 0.04 & 0.90 $\pm$ 0.04 & 0.88 $\pm$ 0.04 & 0.88 $\pm$ 0.04 \\
\midrule
OT aware & 1.01 $\pm$ 0.02 & 0.06 $\pm$ 0.03 & 0.13 $\pm$ 0.03 & 0.10 $\pm$ 0.02 & 0.08 $\pm$ 0.03 \\
\bottomrule
\end{tabular}
\vspace{0.5em}
\caption{\textbf{Relative performance on the Adult dataset.} 
Performance of the fully constrained models evaluated relative to the unconstrained ERM baseline. 
Values are reported as mean $\pm$ standard deviation across 10 independent random splits. For fairness metrics ($W_2$, TV, KS, KS$^\mathcal{T}$), lower values indicate better parity; for MSE, values $>1.00$ indicate accuracy degradation. The best-performing unaware method is highlighted in \textbf{bold}, while statistically indistinguishable unaware methods ($p > 0.05$ via Wilcoxon signed-rank test) are in \textcolor{blue}{blue}. OT Unaware drastically improves geometric alignment ($W_2$) over FairReg without incurring additional MSE degradation.}
\label{tab:summary_relative_merge_train_unlab_adult}
\end{table}


%% file: images/relaxed_curves/lawschool/summary_relative_merge_train_unlab.tex
\begin{table}[h]
\centering
\begin{tabular}{lccccc}
\toprule
Method & MSE $\downarrow$& $W_2$ $\downarrow$& TV $\downarrow$& KS $\downarrow$& KS$^{\mathcal{T}}$ $\downarrow$\\
ERM Unfair & 1.00 $\pm$ 0.00 & 1.00 $\pm$ 0.00 & 1.00 $\pm$ 0.00 & 1.00 $\pm$ 0.00 & 1.00 $\pm$ 0.00 \\
\midrule
FairReg & 1.06 $\pm$ 0.01 & 0.39 $\pm$ 0.17 & \textcolor{blue}{0.20 $\pm$ 0.08} & \textbf{0.13 $\pm$ 0.07} & \textcolor{blue}{0.12 $\pm$ 0.07} \\
OT Unaware & 1.05 $\pm$ 0.01 & \textbf{0.11 $\pm$ 0.03} & \textbf{0.18 $\pm$ 0.07} & \textcolor{blue}{0.14 $\pm$ 0.07} & \textbf{0.11 $\pm$ 0.07} \\
OT $\hat{S}$ hard & 1.03 $\pm$ 0.01 & 0.37 $\pm$ 0.04 & 0.43 $\pm$ 0.06 & 0.44 $\pm$ 0.05 & 0.42 $\pm$ 0.07 \\
OT $\hat{S}$ soft & \textbf{1.02 $\pm$ 0.00} & 0.46 $\pm$ 0.03 & 0.54 $\pm$ 0.06 & 0.56 $\pm$ 0.04 & 0.54 $\pm$ 0.06 \\
\midrule
OT aware & 1.03 $\pm$ 0.00 & 0.11 $\pm$ 0.02 & 0.18 $\pm$ 0.05 & 0.13 $\pm$ 0.03 & 0.09 $\pm$ 0.04 \\
\bottomrule
\end{tabular}
\vspace{0.5em}
\caption{\textbf{Relative performance on the Law School dataset.} 
Metrics are normalized relative to the unconstrained ERM baseline, averaged across 10 random splits. 
Formatting follows the same conventions as the Adult dataset. 
OT Unaware systematically achieves superior geometric alignment ($W_2$) while matching the statistical parity (KS) of FairReg. 
In contrast, the naive plug-in estimators (OT $\hat{S}$) completely fail to enforce meaningful fairness, demonstrating the fragility of relying on hard proxy labels.}
\label{tab:summary_relative_merge_train_unlab_lawschool}
\end{table}

%% file: images/relaxed_curves/communities/summary_relative_merge_train_unlab.tex
\begin{table}[h]
\centering
\begin{tabular}{lccccc}
\toprule
Method & MSE $\downarrow$& $W_2$ $\downarrow$& TV $\downarrow$& KS $\downarrow$& KS$^{\mathcal{T}}$ $\downarrow$ \\
ERM Unfair & 1.00 $\pm$ 0.00 & 1.00 $\pm$ 0.00 & 1.00 $\pm$ 0.00 & 1.00 $\pm$ 0.00 & 1.00 $\pm$ 0.00 \\
\midrule
FairReg & 1.99 $\pm$ 0.21 & 0.20 $\pm$ 0.08 & \textcolor{blue}{0.37 $\pm$ 0.06} & \textbf{0.16 $\pm$ 0.04} & \textbf{0.16 $\pm$ 0.03} \\
OT Unaware & 1.92 $\pm$ 0.20 & \textbf{0.09 $\pm$ 0.04} & \textbf{0.35 $\pm$ 0.08} & \textcolor{blue}{0.18 $\pm$ 0.08} & \textcolor{blue}{0.16 $\pm$ 0.08} \\
OT $\hat{S}$ hard & 1.73 $\pm$ 0.20 & 0.22 $\pm$ 0.09 & \textcolor{blue}{0.43 $\pm$ 0.06} & 0.27 $\pm$ 0.07 & 0.25 $\pm$ 0.08 \\
OT $\hat{S}$ soft & \textbf{1.50 $\pm$ 0.17} & 0.27 $\pm$ 0.09 & \textcolor{blue}{0.42 $\pm$ 0.05} & 0.36 $\pm$ 0.07 & 0.34 $\pm$ 0.07 \\
\midrule
OT aware & 1.66 $\pm$ 0.19 & 0.13 $\pm$ 0.04 & 0.38 $\pm$ 0.04 & 0.17 $\pm$ 0.03 & 0.14 $\pm$ 0.04 \\
\bottomrule
\end{tabular}
\vspace{0.5em}
\caption{\textbf{Relative performance on the Communities and Crime dataset.} 
Performance normalized relative to the ERM baseline across 10 random splits. 
OT Unaware systematically dominates the FairReg baseline, achieving both a lower relative MSE (1.92 vs 1.99) and significantly superior geometric alignment ($W_2$=0.09 vs 0.20). This demonstrates that FairReg struggles to enforce parity on highly confounded datasets without severely degrading predictive utility.}
\label{tab:summary_relative_merge_train_unlab_communities}
\end{table}

%% file: images/relaxed_curves/adult/formatted_summary.tex
\begin{table}[h!]
\centering
\begin{tabular}{llccc}
\toprule
\textbf{Base Model} & \textbf{Method} & \textbf{MSE} $\downarrow$ & \textbf{$W_2$} $\downarrow$ & \textbf{KS$^{\mathcal{T}}$} $\downarrow$ \\
\midrule
\multirow{4}{*}{\textbf{LINEAR}}
 & ERM Unfair & 0.016 $\pm$ 0.000 & 0.024 $\pm$ 0.001 & 0.226 $\pm$ 0.010 \\
 \cmidrule{2-5}
 & FairReg & 0.017 $\pm$ 0.000  & 0.006 $\pm$ 0.002 & \textcolor{blue}{0.037 $\pm$ 0.009} \\
 & OT Unaware & \textbf{0.017 $\pm$ 0.000} & \textbf{0.002 $\pm$ 0.000} & \textbf{0.035 $\pm$ 0.011} \\
 \cmidrule{2-5}
 & OT aware & 0.016 $\pm$ 0.000 & 0.002 $\pm$ 0.001 & 0.018 $\pm$ 0.006 \\
\midrule
\multirow{4}{*}{\textbf{RF}}
 & ERM Unfair & 0.015 $\pm$ 0.000 & 0.020 $\pm$ 0.001 & 0.166 $\pm$ 0.010 \\
 \cmidrule{2-5}
 & FairReg & 0.016 $\pm$ 0.000 & 0.004 $\pm$ 0.001 & \textcolor{blue}{0.020 $\pm$ 0.005} \\
 & OT Unaware & \textbf{0.016 $\pm$ 0.000} & \textbf{0.003 $\pm$ 0.001} & \textbf{0.019 $\pm$ 0.006} \\
 \cmidrule{2-5}
 & OT aware & 0.015 $\pm$ 0.000 & 0.003 $\pm$ 0.001 & 0.019 $\pm$ 0.006 \\
\midrule
\multirow{4}{*}{\textbf{GBM}}
 & ERM Unfair & 0.015 $\pm$ 0.000 & 0.021 $\pm$ 0.001 & 0.199 $\pm$ 0.007 \\
 \cmidrule{2-5}
 & FairReg & \textcolor{blue}{0.016 $\pm$ 0.000} & 0.006 $\pm$ 0.001 & \textbf{0.023 $\pm$ 0.006} \\
 & OT Unaware & \textbf{0.016 $\pm$ 0.000} & \textbf{0.004 $\pm$ 0.001} & \textcolor{blue}{0.023 $\pm$ 0.009} \\
 \cmidrule{2-5}
 & OT aware & 0.015 $\pm$ 0.000 & 0.004 $\pm$ 0.001 & 0.020 $\pm$ 0.006 \\
\bottomrule
\end{tabular}
\vspace{0.5em}
\caption{\textbf{Robustness benchmark on the Adult dataset}. OT Unaware strictly dominates the baseline across all base models, achieving near-perfect parity ($W_2$ drops to $0.002$ for Linear) with negligible MSE degradation.}
\label{tab:adult_bases}
\end{table}

%% file: images/relaxed_curves/lawschool/formatted_summary.tex
\begin{table}[h!]
\centering
\begin{adjustbox}{max width=\textwidth}
\begin{tabular}{llccc}
\toprule
\textbf{Base Model} & \textbf{Method} & \textbf{MSE} $\downarrow$ & \textbf{$W_2$} $\downarrow$ & \textbf{KS$^{\mathcal{T}}$} $\downarrow$ \\
\midrule
\multirow{4}{*}{\textbf{LINEAR}}
 & ERM Unfair & 0.009 $\pm$ 0.000 & 0.028 $\pm$ 0.001 & 0.284 $\pm$ 0.015 \\
 \cmidrule{2-5}
 & FairReg & 0.010 $\pm$ 0.000 & 0.007 $\pm$ 0.002 & \textcolor{blue}{0.036 $\pm$ 0.020} \\
 & OT Unaware & \textbf{0.010 $\pm$ 0.000} & \textbf{0.003 $\pm$ 0.001} & \textbf{0.032 $\pm$ 0.022} \\
 \cmidrule{2-5}
 & OT aware & 0.009 $\pm$ 0.000 & 0.003 $\pm$ 0.001 & 0.025 $\pm$ 0.011 \\
\midrule
\multirow{4}{*}{\textbf{RF}}
 & ERM Unfair & 0.008 $\pm$ 0.000 & 0.026 $\pm$ 0.002 & 0.210 $\pm$ 0.022 \\
 \cmidrule{2-5}
 & FairReg & 0.009 $\pm$ 0.000 & 0.011 $\pm$ 0.010 & \textcolor{blue}{0.048 $\pm$ 0.013} \\
 & OT Unaware & \textbf{0.009 $\pm$ 0.000} & \textbf{0.004 $\pm$ 0.001} & \textbf{0.039 $\pm$ 0.011} \\
 \cmidrule{2-5}
 & OT aware & 0.008 $\pm$ 0.000 & 0.003 $\pm$ 0.001 & 0.026 $\pm$ 0.008 \\
\midrule
\multirow{4}{*}{\textbf{GBM}}
 & ERM Unfair & 0.008 $\pm$ 0.000 & 0.026 $\pm$ 0.002 & 0.190 $\pm$ 0.021 \\
 \cmidrule{2-5}
 & FairReg & 0.008 $\pm$ 0.000 & 0.010 $\pm$ 0.007 & \textcolor{blue}{0.030 $\pm$ 0.011} \\
 & OT Unaware & \textbf{0.008 $\pm$ 0.000} & \textbf{0.003 $\pm$ 0.001} & \textbf{0.028 $\pm$ 0.014} \\
 \cmidrule{2-5}
 & OT aware & 0.008 $\pm$ 0.000 & 0.003 $\pm$ 0.001 & 0.025 $\pm$ 0.005 \\
\bottomrule
\end{tabular}
\end{adjustbox}
\vspace{0.5em}
\caption{\textbf{Robustness benchmark on the Law School dataset}. Mean and standard deviation of fairness and accuracy metrics are reported across 10 runs for fully constrained models. For the unaware methods (FairReg and OT Unaware), the best performing method is in \textbf{bold}. Statistically insignificant differences (Wilcoxon signed-rank test, $p > 0.05$) are highlighted in \textcolor{blue}{blue}.}
\label{tab:lawschool_bases}
\end{table}

%% file: images/relaxed_curves/communities/formatted_summary.tex
\begin{table}[ht]
\centering
\begin{tabular}{llccc}
\toprule
\textbf{Base Model} & \textbf{Method} & \textbf{MSE} $\downarrow$ & \textbf{$W_2$} $\downarrow$ & \textbf{KS$^{\mathcal{T}}$} $\downarrow$ \\
\midrule
\multirow{4}{*}{\textbf{LINEAR}}
 & ERM Unfair & 0.022 $\pm$ 0.003 & 0.310 $\pm$ 0.024 & 0.624 $\pm$ 0.062 \\
 \cmidrule{2-5}
 & FairReg & \textcolor{blue}{0.044 $\pm$ 0.003} & 0.072 $\pm$ 0.034 & \textcolor{blue}{0.112 $\pm$ 0.031} \\
 & OT Unaware & \textbf{0.043 $\pm$ 0.003} & \textbf{0.028 $\pm$ 0.012} & \textbf{0.096 $\pm$ 0.042} \\
 \cmidrule{2-5}
 & OT aware & 0.037 $\pm$ 0.002 & 0.041 $\pm$ 0.013 & 0.090 $\pm$ 0.026 \\
\midrule
\multirow{4}{*}{\textbf{RF}}
 & ERM Unfair & 0.020 $\pm$ 0.001 & 0.281 $\pm$ 0.020 & 0.643 $\pm$ 0.055 \\
 \cmidrule{2-5}
 & FairReg & \textcolor{blue}{0.044 $\pm$ 0.002} & 0.046 $\pm$ 0.026 & 0.129 $\pm$ 0.050 \\
 & OT Unaware & \textbf{0.044 $\pm$ 0.002} & \textbf{0.021 $\pm$ 0.010} & \textbf{0.105 $\pm$ 0.053} \\
 \cmidrule{2-5}
 & OT aware & 0.036 $\pm$ 0.002 & 0.026 $\pm$ 0.011 & 0.105 $\pm$ 0.048 \\
\midrule
\multirow{4}{*}{\textbf{GBM}}
 & ERM Unfair & 0.021 $\pm$ 0.001 & 0.296 $\pm$ 0.023 & 0.629 $\pm$ 0.061 \\
 \cmidrule{2-5}
 & FairReg & \textcolor{blue}{0.042 $\pm$ 0.002} & \textcolor{blue}{0.040 $\pm$ 0.015} & \textcolor{blue}{0.119 $\pm$ 0.035} \\
 & OT Unaware & \textbf{0.040 $\pm$ 0.002} & \textbf{0.033 $\pm$ 0.008} & \textbf{0.117 $\pm$ 0.036} \\
 \cmidrule{2-5}
 & OT aware & 0.037 $\pm$ 0.002 & 0.042 $\pm$ 0.016 & 0.114 $\pm$ 0.049 \\
\bottomrule
\end{tabular}
\vspace{0.5em}
\caption{\textbf{Robustness benchmark on the Communities and Crime dataset}. OT Unaware systematically dominates the FairReg baseline across all model classes, achieving both a lower MSE and significantly superior geometric alignment. While FairReg struggles to enforce parity (leaving a $W_2$ gap of $0.072$ for Linear), OT Unaware successfully enforces the geometric constraint ($W_2 = 0.028$) while incurring less predictive degradation.}
\label{tab:communities_bases}
\end{table}